\newtheorem{theorem}{Theorem}
\newtheorem{definition}{Definition}
\newtheorem{remark}{Remark}
\newtheorem{lemma}{Lemma}
\newtheorem{corollary}{Corollary}
\newcommand{\EE}{\mathbb{E}}
\newcommand{\R}{\mathbb{R}}
\newcommand{\Prob}{\mathbb{P}}
\newcommand{\bx}{\mathbf{x}}
\newcommand{\bw}{\mathbf{w}}
\newcommand{\by}{\mathbf{y}}
\newcommand{\bY}{\mathbf{Y}}
\newcommand{\bX}{\mathbf{X}}
\DeclareMathOperator{\argmax}{arg\max}
\DeclareMathOperator{\argmin}{arg\min}
\newcommand{\cX}{\mathcal{X}}
\newcommand{\sign}{\text{sign }}
\newcommand{\supp}[1]{\operatorname{supp}(#1)}
\title{A Randomized Algorithm to Reduce the Support of Discrete Measures} %
\author{
Francesco Cosentino\\
Mathematical Institute \\
University of Oxford \\
The Alan Turing Institute\\
\small{francesco.cosentino@maths.ox.ac.uk}
\And
Harald Oberhauser\\
Mathematical Institute \\
University of Oxford \\
\small{oberhauser@maths.ox.ac.uk}
\And
Alessandro Abate\\
Dept. of Computer Science \\
University of Oxford \\
The Alan Turing Institute\\
\small{alessandro.abate@cs.ox.ac.uk}
}
\begin{document}
\maketitle
\begin{abstract}
Given a discrete probability measure supported on $N$ atoms and a set of $n$ real-valued functions, there exists a probability measure that is supported on a subset of $n+1$ of the original $N$ atoms and has the same mean when integrated against each of the $n$ functions.
If $ N \gg n$ this results in a huge reduction of complexity.
We give a simple geometric characterization of barycenters via negative cones and derive a randomized algorithm that computes this new measure by ``greedy geometric sampling''.
We then study its properties, and benchmark it on synthetic and real-world data to show that it can be very beneficial in the $N\gg n$ regime.
A Python implementation is available at \url{https://github.com/FraCose/Recombination_Random_Algos}.
\end{abstract}

\section{Introduction}\label{sec:intro}
Discrete probability measures are central to many inference tasks, for example as empirical measures.
In the ``big data'' regime, where the number $N$ of samples is huge, this often requires to construct a reduced summary of the original measure.
Often this summary is constructed by sampling $n$ points at random out of the $N$ points, but Tchakaloff's theorem suggests that there is another way.  
\begin{theorem}[Tchakaloff~\cite{Tchak}]\label{th:tchakalof}
  Let $\mu$ be a discrete probability measure that is supported on $N$ points in a space $\cX$.
  Let $\{f_1,\ldots,f_n\}$ be a set of $n$ real-valued functions $f_i:\cX \rightarrow \R$, $n<N$.
  There exists a discrete probability measure $\hat \mu$ such that $\supp{\hat \mu} \subset \supp{\mu}$, $|\supp{\hat \mu}|\le n+1$, and 
  \begin{align}\label{eq:F}
  \EE_{X \sim \mu}[f_i(X)] = \EE_{X\sim \hat\mu}[f_i(X)] \text{ for all } i\in \{1,\ldots,n\}.
  \end{align}
\end{theorem}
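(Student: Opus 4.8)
The plan is to reduce the statement to the classical Carathéodory theorem on convex hulls in $\R^n$. First I would write $\mu=\sum_{j=1}^N w_j\,\delta_{x_j}$ with weights $w_j\ge 0$ summing to one and atoms $x_1,\dots,x_N\in\cX$, and introduce the feature map $\Phi:\cX\to\R^n$, $\Phi(x)=(f_1(x),\dots,f_n(x))$. Pushing $\mu$ forward under $\Phi$ turns the abstract problem on $\cX$ into a purely finite-dimensional one: the quantity to be matched is the mean vector
\begin{equation}
  \bar y \;=\; \EE_{X\sim\mu}[\Phi(X)] \;=\; \sum_{j=1}^N w_j\,\Phi(x_j)\;\in\;\R^n,
\end{equation}
and \eqref{eq:F} holds for a candidate $\hat\mu$ precisely when the $\Phi$-mean of $\hat\mu$ equals $\bar y$.

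Next I would observe that $\bar y$ is by construction a convex combination of the $N$ points $\Phi(x_1),\dots,\Phi(x_N)$, hence lies in their convex hull $\mathrm{conv}\{\Phi(x_1),\dots,\Phi(x_N)\}\subset\R^n$. Carathéodory's theorem then guarantees that $\bar y$ can already be expressed as a convex combination of at most $n+1$ of these points: there exist indices $j_1,\dots,j_m$ with $m\le n+1$ and weights $\hat w_1,\dots,\hat w_m\ge 0$, $\sum_k \hat w_k=1$, such that $\bar y=\sum_{k=1}^m \hat w_k\,\Phi(x_{j_k})$. Setting $\hat\mu=\sum_{k=1}^m \hat w_k\,\delta_{x_{j_k}}$ yields a probability measure with $\supp{\hat\mu}\subset\supp{\mu}$ and $|\supp{\hat\mu}|\le n+1$, and reading off each coordinate recovers \eqref{eq:F}.

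I expect no serious obstacle here; the only genuine decision is conceptual, namely to recast the moment-matching condition as a convex-hull membership so that Carathéodory applies. For a version closer in spirit to the algorithmic development of this paper, I would instead argue \emph{constructively} by linear algebra. Stack the $n$ constraints together with the normalization into a matrix $A\in\R^{(n+1)\times N}$, whose $i$-th row is $(f_i(x_1),\dots,f_i(x_N))$ for $i\le n$ and whose last row is all ones. The feasible set $\{\hat w\ge 0 : A\hat w=Aw\}$ is a nonempty polytope containing $w$. If a current feasible $\hat w$ has more than $n+1$ nonzero entries, the corresponding columns of $A$ are linearly dependent, so there is a nonzero $v$ supported on that support with $Av=0$; moving along $\hat w+t\,v$ preserves all constraints while, for a suitable sign and magnitude of $t$, driving one coordinate to zero and keeping nonnegativity. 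Iterating this elimination strictly reduces the support until at most $n+1$ atoms remain, which is exactly the greedy reduction the paper makes explicit. The only point requiring care is selecting $t$ so that $\hat w\ge 0$ is maintained while one coordinate is annihilated, which is routine.
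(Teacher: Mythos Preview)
Your proposal is correct and follows essentially the same route as the paper: define the feature map $F=(f_1,\ldots,f_n)$, observe that $\EE_\mu[F]$ lies in the convex hull of $\{F(x_j)\}_{j=1}^N\subset\R^n$, and apply Carath\'eodory's theorem to extract a convex combination of at most $n+1$ of these points. Your additional constructive elimination argument (reducing support by moving along a null direction of the stacked constraint matrix) is exactly the classical approach the paper alludes to when citing \cite{Davis1967}.
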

We introduce a randomized algorithm that computes $\hat \mu$ efficiently in the $n \ll N$ regime. 

\paragraph{Related work.}
Reducing the support of a (not necessarily discrete) measure subject to matching the mean on a set of
functions is a classical problem, which goes back at least to Gauss' famous quadrature formula that matches the mean of monomials up to a given degree when $\mu$ is the Lebesgue measure on $\R$.
In multi-dimensions this is known as cubature and Tchakaloff \cite{Tchak} showed the existence of a reduced measure for compactly supported, not necessarily discrete, measures, see \cite{Bayer2006}. 
When $\mu$ is discrete, the problem of computing $\hat \mu$ also runs under the name recombination.
Algorithms to compute $\hat \mu$ for discrete measures $\mu$ go back at least to \cite{Davis1967} and have been an intensive research topics ever since; we refer to \cite{Litterer2012} for an overview of the different approaches, and \cite{Litterer2012,maria2016a,Maalouf2019} for recent, state of the art algorithms and applications. 
Using randomness for Tchakaloff's Theorem has been suggested before \cite{piazzon2017caratheodory,hayakawa2020monte} but the focus there is to show that the barycenter lies in the convex hull when enough points from a continuous measure are sampled so that subsequently any of the algorithms \cite{Litterer2012,maria2016a,Maalouf2019} can be applied;
in contrast, our randomness stems from the reduction algorithm itself. %
More generally, the topic of replacing a large data set by a small, carefully
weighted subset is a vast field that has attracted many different communities
and we mention, pars pro toto, computational geometry \cite{agarwal2005geometric}, coresets in computer science \cite{phillips2016coresets}, scalable Bayesian statistics \cite{Huggins2016CoresetsFS}, clustering and optimisation \cite{Feldman2013TurningBD}. 
In follow up work \cite{Cosentino2020a}, our randomized algorithm was already used to derive a novel approach to stochastic gradient descent. 

\paragraph{Contribution.}
The above mentioned algorithms \cite{Litterer2012,maria2016a,Maalouf2019} use a divide and conquer approach that splits up points into groups, computes a barycenter for each group, and solves a constrained linear system several times. 
This leads to a deterministic complexity that is determined by $N$ and $n$. 
In contrast, our approach uses the geometry of cones to ``greedy'' sample for candidates in the support of $\mu$ that are atoms for $\hat \mu$ and tries to construct the reduced measure in one go.
Further, it can be optimized with classical black box reset strategies to reduce the variance of the run time.
Our results show that this can be very efficient in the big data regime $N \gg n$ that is common in machine learning applications, such as least least square solvers when the number of samples $N$ is very large.
Moreover, our approach is complementary to previous work since it can be combined with it: by limiting the iterations for our randomized algorithm and subsequently running any of the deterministic algorithms above if a solution by ``greedy geometric sampling'' was not found, 
one gets a hybrid algorithm that is of the same order as the deterministic one but that has a good chance of being faster; we give full details in Appendix~\ref{sec:combine} but focus in main text on the properties of the randomized algorithm.

\paragraph{Outline.}
We introduce the basic ideas in Section~\ref{sec:naive algo}, where we derive a simple version of the greedy sampling algorithm, and study its theoretical properties. 
In Section~\ref{sec:optimized algo} we optimize the algorithm to better use the cone geometry, combine with reset strategies to reduce the running time variance, and use the Woodbury formula to obtain a robustness result.
In Section~\ref{sec:exp} we discuss numerical experiments that study the properties of the algorithms on two problems:
\begin{enumerate*}[label=(\roman*)]
\item reducing the support of empirical measures; and 
\item least square solvers for large samples. 
\end{enumerate*}
In the Appendix we provide detailed proofs and more background on discrete geometry.

\section{Negative cones and a naive algorithm}\label{sec:naive algo}
\paragraph{Background.}
As is well-known, Theorem~\ref{th:tchakalof} follows from Caratheodory's convex hull theorem
\begin{theorem}[Caratheodory]\label{th:cath}
  Given a set of $N$ points in $\R^n$ and a point $x$ that lies in the convex hull of these $N$ points.
  Then $x$ is a linear combination of at most $n+1$ points from the $N$ points. 
\end{theorem}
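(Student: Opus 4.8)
The plan is to prove Caratheodory's theorem by an iterative reduction: starting from any representation of $x$ as a convex combination of the $N$ points, I repeatedly eliminate one point carrying positive weight whenever more than $n+1$ points are used, until the support has size at most $n+1$.

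First I would write $x=\sum_{i=1}^N \lambda_i p_i$ with $\lambda_i\ge 0$ and $\sum_i \lambda_i=1$, and discard every point with $\lambda_i=0$ so that all remaining weights are strictly positive. If at most $n+1$ points survive, there is nothing to prove, so assume strictly more do. The key observation is a dimension count: the $N-1$ difference vectors $p_i-p_1$ for $i=2,\ldots,N$ lie in $\R^n$, and since $N-1>n$ they are linearly dependent. Hence there exist coefficients $\mu_2,\ldots,\mu_N$, not all zero, with $\sum_{i=2}^N \mu_i(p_i-p_1)=0$; setting $\mu_1:=-\sum_{i=2}^N \mu_i$ yields a nontrivial relation $\sum_{i=1}^N \mu_i p_i=0$ together with $\sum_{i=1}^N \mu_i=0$.

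Next I would exploit this relation to slide the weights. For any $t\ge 0$ the perturbed coefficients $\lambda_i-t\mu_i$ still sum to one and still represent $x$, precisely because $\sum_i \mu_i p_i=0$ and $\sum_i \mu_i=0$. Since the $\mu_i$ sum to zero and are not all zero, at least one is strictly positive, so I may take
$$ t^\ast=\min_{i:\,\mu_i>0}\frac{\lambda_i}{\mu_i}>0. $$
At $t=t^\ast$ every weight stays nonnegative, while the weight attaining the minimum drops exactly to zero; this gives a new convex representation of $x$ on strictly fewer points.

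Iterating this reduction at most $N-(n+1)$ times produces a convex combination supported on at most $n+1$ of the original points, which proves the claim. The step I expect to require the most care — and which I regard as the crux — is verifying that the single choice $t^\ast$ simultaneously preserves nonnegativity of \emph{all} weights and forces at least one of them to vanish; this hinges on restricting the minimum to indices with $\mu_i>0$ and on the strict positivity of the current weights.
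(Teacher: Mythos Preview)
Your argument is the standard, correct proof of Caratheodory's theorem: write $x$ as a convex combination, use affine dependence of more than $n+1$ points in $\R^n$ to obtain a nontrivial vector $(\mu_i)$ with $\sum_i\mu_i=0$ and $\sum_i\mu_i p_i=0$, then slide along this direction until a weight hits zero. The only point to tidy is the verification you yourself flag: for $t\in[0,t^\ast]$ and any $i$, if $\mu_i\le 0$ then $\lambda_i-t\mu_i\ge\lambda_i>0$, while if $\mu_i>0$ then $\lambda_i-t\mu_i\ge \lambda_i-t^\ast\mu_i\ge 0$ by definition of $t^\ast$, with equality for the minimising index. So nonnegativity is preserved and at least one weight vanishes, exactly as you say.

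However, there is nothing in the paper to compare this against: the paper states Caratheodory's theorem as a classical result and does \emph{not} prove it. What the paper does spell out is only the (different) implication from Theorem~\ref{th:cath} to Tchakaloff's Theorem~\ref{th:tchakalof}. Your proof is thus a welcome addition rather than a variant of anything already present. As a minor remark, the paper's phrasing ``linear combination'' in the statement is imprecise; what is meant---and what your argument actually establishes---is a \emph{convex} combination of at most $n+1$ of the original points.
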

It is instructive to recall how Theorem~\ref{th:cath} implies Theorem~\ref{th:tchakalof}.
Therefore define a $\R^n$-valued random variable $F: \Omega=\cX \rightarrow \R^n$ as $F(\omega):=(f_1(\omega),\ldots,f_n(\omega))$ and note that Equation \eqref{eq:F} is equivalent to 
  \begin{align}
  \int_\Omega F(\omega) \mu(d\omega) =\int_\Omega F(\omega) \hat \mu(d\omega).
  \end{align}
  Since $\mu$ has finite support, the left-hand side can be written as a sum $\sum_{\omega \in \supp{\mu}}F(\omega)\mu(\omega)$.
  This sum gives a point in the convex hull of the set of $N$ (or less) points $\bx:=\{F(\omega): \omega \in \supp{\mu}\}$ in $\R^n$.
  But by Caratheodory's theorem, this point must be a convex combination of a subset $\hat \bx$ of only $n+1$ (or less) points of $\bx$ and Theorem~\ref{th:tchakalof} follows.
  This proof of Theorem~\ref{th:tchakalof} is also constructive in the sense that it shows that computing $\hat \mu$ reduces to constructing the linear combination guaranteed by Caratheodory's theorem; e.g.~by solving $N$ times a constrained linear system, see~\cite{Davis1967}.

\paragraph{Barycenters and cones.}
Key to Tchakaloff's theorem is to verify if two measures have the same mean.
We now give a simple geometric characterization in terms of negative cones, Theorem~\ref{th:main}. 
\begin{definition}\label{def:cone}
  Let $\bx \subset \R^n $ be a finite set of points in $\R^n$.
  We call the set 
 $
   C(\bx):=\{c \in \R^n\, |\, c=\sum_{x \in \bx} \lambda_x x, \text{ where }\lambda_x\ge 0\}
  $
  the cone generated by $\bx$ and we also refer to $\bx$ as its basis.
  We call the set
 $
   C^-(\bx):=\{c\, |\, c=\sum_{x \in \bx} \lambda_x x, \text{ where }\lambda_x\le 0\}
 $
  the negative cone generated by $\bx$. %
\end{definition}
For example, $C({x_1,x_2})$ is the ``infinite'' triangle created by the half-lines $\overline{0x_{1}}$, $\overline{0x_{2}}$ with origin in $0$; $C(\{x_{1},x_{2},x_{3}\})$, is the infinite pyramid formed by the edges $\overline{0x_{1}}$, $\overline{0x_{2}}$, $\overline{0x_{3}}$, with vertex 0; see Figure \ref{fig:cones}. 
\begin{figure}[t!]
    \centering
        \includegraphics[height=4.5cm,width=0.45\textwidth]{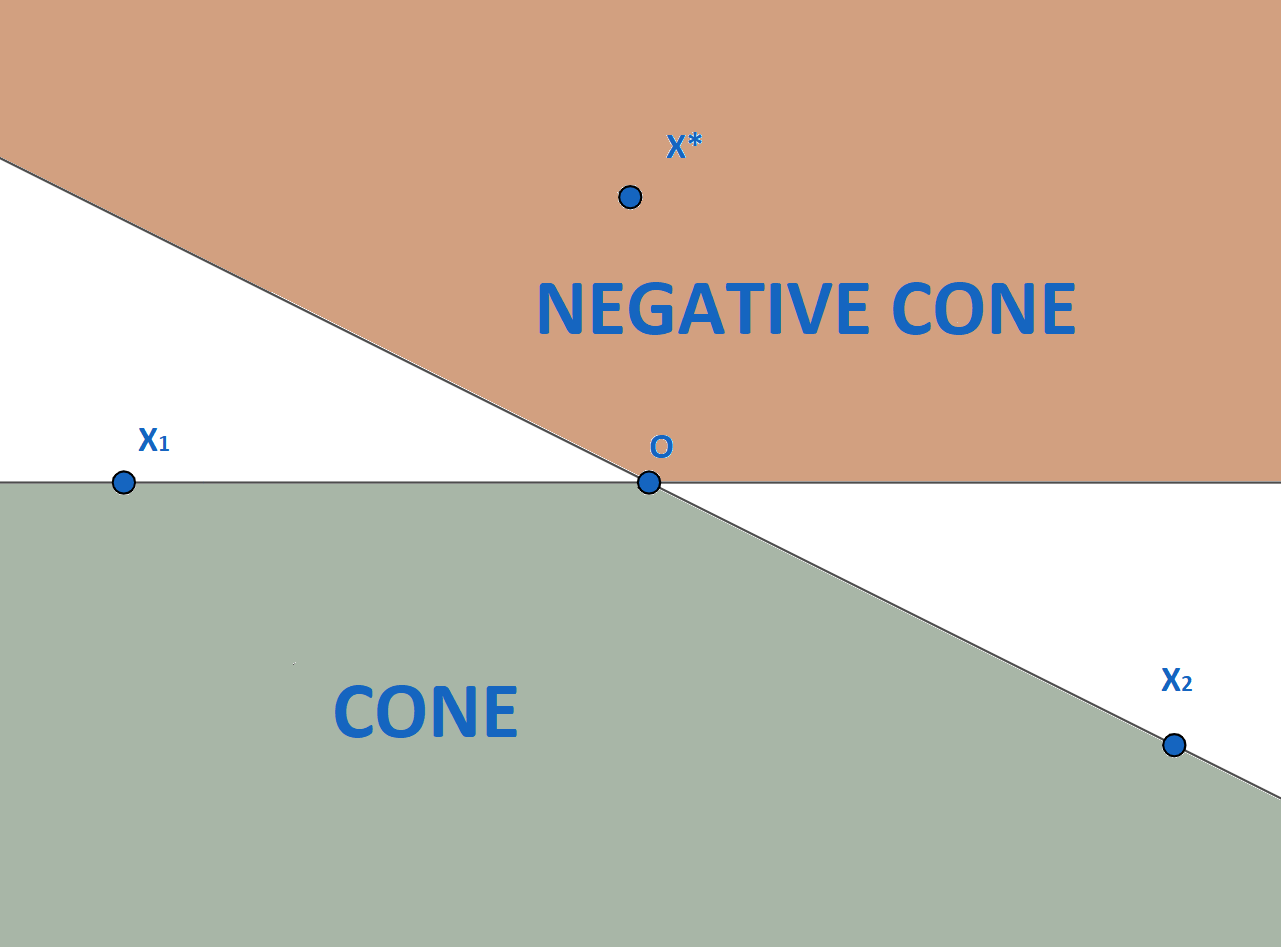}
        \includegraphics[height=4.5cm,width=0.45\textwidth]{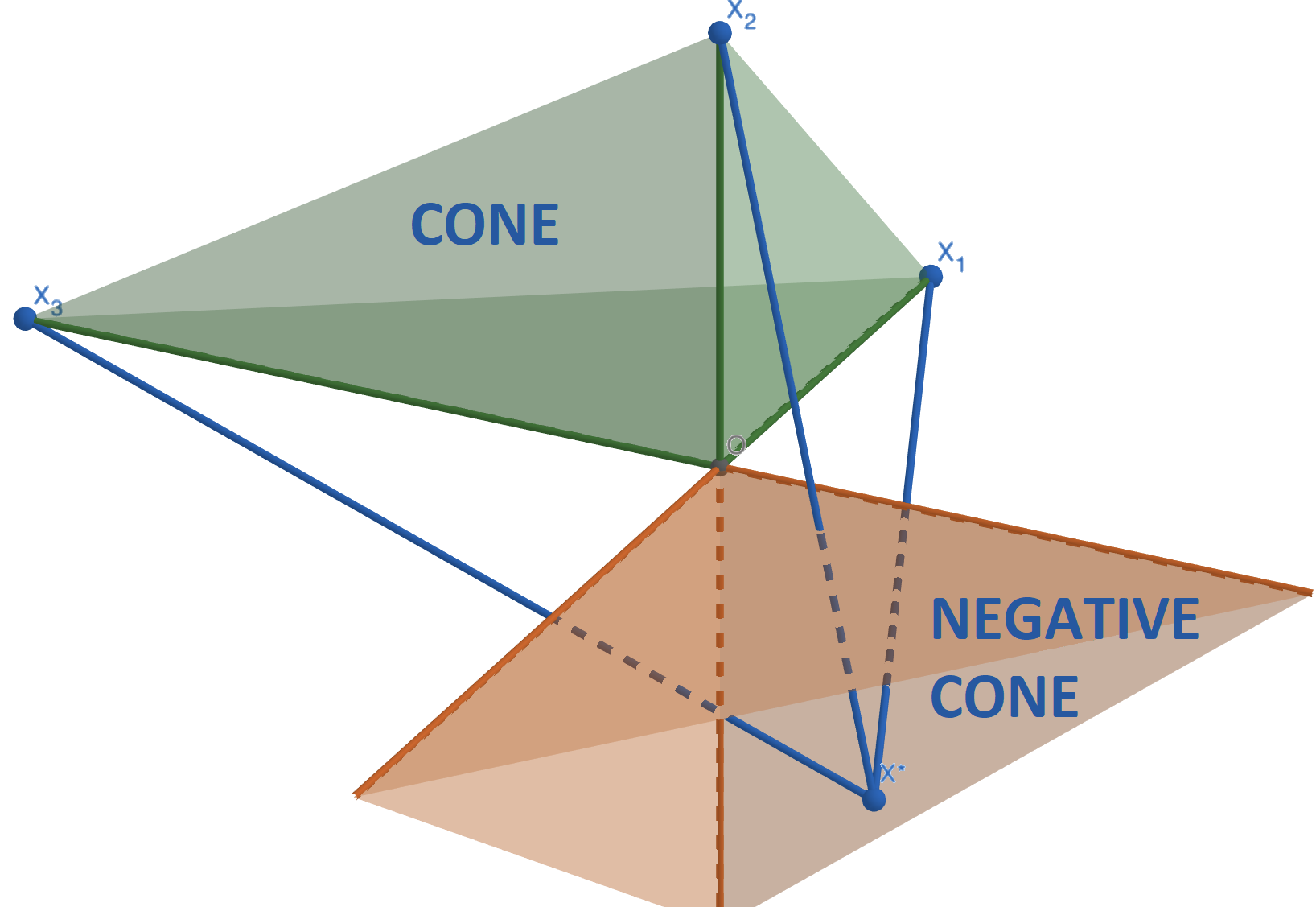}
    \caption{Cones and Negative Cones spanned by two (left) and three points (right).}
    \label{fig:cones}
\end{figure}
\begin{restatable}{theorem}{thmmain}\label{th:main}
Let $\bx=\{x_1,\ldots , x_{n+1}\}$ be a set of $n+1$ points in $\R^n$ such that $ \bx\setminus \{x_{n+1}\}$ spans $\R^n$.
Let $A$ be the matrix that transforms $\bx \setminus \{x_{n+1}\} =\{x_1,\ldots,x_n\}$ to the orthonormal basis $\{e_1,\ldots,e_n\}$ of $\R^n$, i.e. $Ax_i=e_i$.
Further, let $h_i$ be the unit vector such that $\langle h_i, x \rangle =0$ for all $x \in \bx\setminus \{x_i,x_{n+1}\}$ and $\langle h_i, x_i \rangle < 0$ and denote with $H_{\bx\setminus\{x_{n+1}\}}$ a $n\times n$ matrix that has $h_1,\ldots,h_n$ as row vectors.
It holds that
\begin{enumerate}
\item \label{itm:weyl}
  $C(\bx\setminus\{x_{n+1}\})=\{ c| H_{\bx\setminus\{x_{n+1}\}} c \le 0\}$ and $C^-(\bx\setminus\{x_{n+1}\})=\{ c | H_{\bx\setminus\{x_{n+1}\}} c \ge 0\}$.
\item \label{itm: A and H >} 
$Ax \geq 0 $ if and only if $H_{\bx\setminus \{x_{n+1}\}} x \le 0$
and $Ax \leq 0 $ if and only if $H_{\bx\setminus \{x_{n+1}\}} x \geq 0$.
\item \label{itm: barycenter}
There exists a convex combination of $\bx$ with $0$ as barycentre,
  $
   \sum_{i=1}^{n+1} w_i x_i =0 \text{ for some }w_i>0, \text{ and }\sum_{i=1}^{n+1} w_i=1
  $
 if and only if $x_{n+1} \in C^-(\bx \setminus \{x_{n+1}\})$.  
\end{enumerate}
\end{restatable}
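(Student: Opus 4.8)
The plan is to exploit that the hypothesis ``$\bx\setminus\{x_{n+1}\}=\{x_1,\dots,x_n\}$ spans $\R^n$'' makes these $n$ vectors a basis of $\R^n$. Hence the matrix $X$ with columns $x_1,\dots,x_n$ is invertible, and the relation $Ax_i=e_i$ says exactly that $A=X^{-1}$. Consequently every $c\in\R^n$ has a \emph{unique} expansion $c=\sum_{i=1}^n\lambda_i x_i$, and its coordinate vector is precisely $(\lambda_1,\dots,\lambda_n)^\top = Ac$. Before anything else I would record that each $h_i$ is well defined: the $n-1$ vectors $\{x_j : j\neq i,\ j\le n\}$ are linearly independent, so their orthogonal complement is one-dimensional and contains a unique unit vector with $\langle h_i,x_i\rangle<0$ (the inner product is nonzero since $x_i$ is not in the span of the others).

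The computational heart of the argument is a single identity. For $c=\sum_{i=1}^n\lambda_i x_i$ and any index $j$, the defining orthogonality $\langle h_j,x_i\rangle=0$ for $i\neq j$ collapses the sum to
\[
(H_{\bx\setminus\{x_{n+1}\}}\,c)_j=\langle h_j,c\rangle=\lambda_j\,\langle h_j,x_j\rangle,
\]
and since $\langle h_j,x_j\rangle<0$ the entry $(H_{\bx\setminus\{x_{n+1}\}}c)_j$ has the sign opposite to $\lambda_j=(Ac)_j$. This yields parts \ref{itm:weyl} and \ref{itm: A and H >} simultaneously. For part \ref{itm: A and H >}, the opposite-sign relation read off componentwise gives $Ac\ge 0\iff H_{\bx\setminus\{x_{n+1}\}}c\le 0$ and $Ac\le 0\iff H_{\bx\setminus\{x_{n+1}\}}c\ge 0$. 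For part \ref{itm:weyl}, membership $c\in C(\bx\setminus\{x_{n+1}\})$ means all $\lambda_i\ge 0$, i.e.\ $Ac\ge 0$, i.e.\ $H_{\bx\setminus\{x_{n+1}\}}c\le 0$; and dually $c\in C^-(\bx\setminus\{x_{n+1}\})$ means all $\lambda_i\le 0$, i.e.\ $H_{\bx\setminus\{x_{n+1}\}}c\ge 0$.

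For part \ref{itm: barycenter} I would argue by rearranging a zero combination. If $\sum_{i=1}^{n+1}w_i x_i=0$ with every $w_i>0$, solving for $x_{n+1}$ gives $x_{n+1}=\sum_{i=1}^n(-w_i/w_{n+1})\,x_i$ with each coefficient strictly negative, so $x_{n+1}\in C^-(\bx\setminus\{x_{n+1}\})$. Conversely, if $x_{n+1}=\sum_{i=1}^n\mu_i x_i$ with $\mu_i\le 0$, then $1\cdot x_{n+1}+\sum_{i=1}^n(-\mu_i)\,x_i=0$ with nonnegative coefficients, and dividing through by the positive total $1+\sum_i(-\mu_i)$ produces the claimed convex combination summing to one.

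The one point demanding care --- and where I expect the argument to be most delicate --- is the strict positivity $w_i>0$ against the closed cone $C^-$. The ``only if'' direction delivers strictly negative coefficients, but the ``if'' direction guarantees only $w_i\ge 0$ unless every $\mu_i<0$. Since the coordinates $\mu=Ax_{n+1}$ are unique, a boundary point of $C^-$ (some $\mu_i=0$) genuinely admits no strictly positive zero combination, so the clean equivalence with strict weights is really with the interior: using part \ref{itm: A and H >}, the membership $x_{n+1}\in C^-(\bx\setminus\{x_{n+1}\})$ reads $Ax_{n+1}\le 0$, and its interior reads $Ax_{n+1}<0$, which is exactly the condition needed for all $w_i>0$. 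I would make this distinction explicit when closing the proof.
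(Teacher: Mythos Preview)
Your proposal is correct and follows essentially the same route as the paper: the paper also expands $c=\sum_i \lambda_i x_i$, computes $(H_{\bx\setminus\{x_{n+1}\}}c)_j=\lambda_j\langle h_j,x_j\rangle$ to get $\operatorname{sign}(-Ac)=\operatorname{sign}(H_{\bx\setminus\{x_{n+1}\}}c)$, and for item~\ref{itm: barycenter} rearranges the zero combination exactly as you do. Your closing observation about strict versus non-strict weights is well taken --- the paper's own proof of the $(\Leftarrow)$ direction also concludes only with $\frac{1}{\lambda^*},\frac{-\lambda_i}{\lambda^*}\ge 0$, so the clean equivalence with $w_i>0$ indeed matches the interior of $C^-$ rather than $C^-$ itself, a point the paper does not make explicit.
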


The above result could be formulated without the matrix $A$, only in terms of $H_{\bx\setminus \{x_{n+1}\}}$.
However, $A$ is the inverse of the matrix with columns equal to the vectors $\{\bx\setminus \{x_{n+1}\}\}$, hence computing $A$ is more efficient than computing $H_{\bx\setminus \{x_{n+1}\}}$, since matrix inversion is optimized in standard libraries.
\paragraph{A Na\"ive Algorithm.}
Theorem~\ref{th:main} implies a simple randomized algorithm: sample $n$ points at random until the negative cone spanned by the $n$ points is not empty.
Then item~\ref{itm: barycenter} of Theorem~\ref{th:main} implies that the $n$ points in the cone and any point in the negative cone form the support a reduced measure.
If $\EE_\mu[X]\not=0$ we can always study the points $\bx - \EE_\mu[X]$, this is equivalent in the proof of Theorem \ref{th:main} to work with cones whose vertex is not 0.
\begin{algorithm}\caption{Basic measure reduction algorithm}\label{euclid}
  \begin{algorithmic}[1]
    \Procedure{Reduce}{A set $\bx$ of $N$ points in $\R^n$}
    \State Choose $n$ points $\bx^\star$ from $\bx$\label{random start}
     \While{$C^-(\bx^\star)\cap \bx = \emptyset$ }
     \State{Replace $\bx^\star$ with $n$ new random points $\bx^\star$ from $\bx$\label{sample}}
     \EndWhile\label{euclidendwhile}
     \State $\bx^\star \leftarrow \bx^\star \cup x^\star$ with an arbitrary $x ^\star \in C^-(\bx^\star) \cap \bx$
     \State Solve the linear system $\sum_{x \in \bx^\star} w_x^\star x =0$ for
     $w^\star=(w_x^\star)_{x \in \bx^\star}$
     \State \textbf{return}  $(\bx^\star,w^\star)$
    \EndProcedure
  \end{algorithmic}
\end{algorithm}

\begin{corollary}\label{cor:complexity}
  Algorithm~\ref{euclid} computes a reduced measure $\hat \mu $ as required by Theorem~\ref{th:tchakalof} in $\tau \cdot O( n^3 + Nn^2) $ computational steps.
  Here $\tau = \inf \{ i \ge 1: C^-(\bX_i)\cap \bx \not=\emptyset\}$, where $\bX_1, \bX_2,\ldots $ are random sets of $n$ points sampled uniformly at random from $\bx$.
\end{corollary}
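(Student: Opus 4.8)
The plan is to split the statement into two independent pieces: (i) \emph{correctness}, that whenever the \textbf{while} loop in Algorithm~\ref{euclid} terminates the returned pair $(\bx^\star,w^\star)$ encodes a measure $\hat\mu$ meeting the conclusion of Theorem~\ref{th:tchakalof}; and (ii) a \emph{per-iteration} cost bound of $O(n^3+Nn^2)$, from which the stated $\tau\cdot O(n^3+Nn^2)$ total follows by summing over the $\tau$ rounds. Following the remark in the naive-algorithm paragraph, after recentering by $\bx-\EE_\mu[X]$ I may assume $\EE_\mu[X]=0$, so that $\bx=\{F(\omega):\omega\in\supp{\mu}\}\subset\R^n$ has $0$ in its convex hull, and the task reduces to exhibiting at most $n+1$ points of $\bx$ with nonnegative weights summing to $1$ whose weighted sum is $0$.

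For correctness I would invoke item~\ref{itm: barycenter} of Theorem~\ref{th:main} directly. On exit, $\bx^\star$ is $n$ points with a point $x^\star\in C^-(\bx^\star)\cap\bx$ adjoined; playing the role of $x_{n+1}$, the membership $x^\star\in C^-(\bx^\star)$ is exactly the hypothesis guaranteeing a convex combination $\sum_{i=1}^{n+1}w_i x_i=0$ with $w_i\ge 0$ and $\sum_i w_i=1$. Concretely, the homogeneous solve on the penultimate line can reuse $A$: the (unique) coefficients are $\lambda=Ax^\star\le 0$ by item~\ref{itm: A and H >}, so that $x^\star=\sum_{i=1}^n\lambda_i x_i$ gives $\sum_i\lambda_i x_i-x^\star=0$, and normalizing the all-nonpositive vector $(\lambda_1,\dots,\lambda_n,-1)$ by its sum yields the required weights on at most $n+1$ atoms. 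Mapping each $F(\omega)$ back to its atom $\omega$ and undoing the centering recovers $\EE_{\hat\mu}[f_i]=\EE_\mu[f_i]$ for all $i$, with $\supp{\hat\mu}\subseteq\supp{\mu}$ and $|\supp{\hat\mu}|\le n+1$. Termination ($\tau<\infty$ almost surely) then follows because Caratheodory's theorem (Theorem~\ref{th:cath}) furnishes at least one $(n+1)$-subset of $\bx$ expressing $0$ as a positive convex combination; sampling the corresponding $n$ ``basis'' points succeeds with probability at least $1/\binom{N}{n}$ each round, so $\tau$ is dominated by a geometric random variable.

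For the cost of a single iteration, the dominant operations are: forming $A=[x_1\mid\cdots\mid x_n]^{-1}$, one $n\times n$ inversion at $O(n^3)$; evaluating the loop condition $C^-(\bx^\star)\cap\bx=\emptyset$, which by item~\ref{itm: A and H >} of Theorem~\ref{th:main} amounts to checking $Ax\le 0$ for each of the $N$ points $x\in\bx$, i.e.\ $N$ matrix--vector products at $O(n^2)$ each, for $O(Nn^2)$; and the final solve, which reuses $A$ at $O(n^3)$. Summing gives $O(n^3+Nn^2)$ per round, and multiplying by the number $\tau$ of rounds yields the claim.

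The main obstacle is the degeneracy bookkeeping hidden in the efficient test. Items~\ref{itm:weyl} and~\ref{itm: A and H >} of Theorem~\ref{th:main}, which turn cone membership $c\in C^-(\bx^\star)$ into the linear inequalities $Ac\le 0$, presuppose that $\bx^\star$ spans $\R^n$, i.e.\ that $[x_1\mid\cdots\mid x_n]$ is invertible; otherwise $A$ does not exist and the cost accounting does not literally apply. I would handle this by treating any draw of $n$ linearly dependent points as a failed round (the inversion is flagged singular and we resample), which is consistent with folding such rounds into the count $\tau$ and leaves the per-round order $O(n^3+Nn^2)$ unchanged. Making the equivalence ``loop condition fails $\iff$ some $x\in\bx$ satisfies $Ax\le 0$'' precise, and confirming that this is the event whose first success time is the $\tau$ of the statement, is the only genuinely delicate point; the remaining estimates are standard dense-linear-algebra operation counts.
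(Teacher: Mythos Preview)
Your proposal is correct and follows essentially the same approach as the paper: the per-iteration cost is broken into the same three dominant operations (computing $A$ at $O(n^3)$, checking cone membership for all $N$ points at $O(Nn^2)$, and the final linear solve at $O(n^3)$), and the total is obtained by multiplying by $\tau$. Your treatment is in fact more thorough than the paper's, which leaves the correctness and termination implicit in Theorem~\ref{th:main} and Proposition~\ref{prop:worst} and does not discuss the degenerate (non-spanning) case at all.
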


The complexity of Algorithm~\ref{euclid} in a single loop iteration is dominated by
\begin{enumerate*}[label=(\roman*)]
\item 
 computing the matrix $A$ that defines the cones $C^-(\bx^\star)$ and $C(\bx^\star)$,
\item checking if there are points inside the cones,
  \item solving a linear system to compute the weights $w_i^\star$.
\end{enumerate*}
Respectively, the worst case complexities are ${O}(n^3)$,
${O}(Nn^2)$ and ${O}(n^3)$, since to check if there are points inside the cones we have to multiply $A$ and $\bX$, where $\bX$ is the matrix whose rows are the vector in $\bx$.
\begin{restatable}{proposition}{propworst}\label{prop:worst}
  Let $N>n+1$ and $\mu$ be a discrete probability with finite support and $f_1,\ldots,f_n$ be as in Theorem~\ref{th:tchakalof}.
  Moreover wlog assume $\EE_{X \sim \mu}[f_i(X)]=0$ for $i=1,\ldots,n$.
  With $p:=\frac{n\cdot n!(N-n)!}{N!}$ it holds that 
$
\EE[\tau] \le \frac{1}{p} \text{ and } Var(\tau) \le \frac{1-p}{p^2}
$
and, for fixed $n$, $\lim_{N\rightarrow \infty}\EE[\tau] =1 $.
\end{restatable}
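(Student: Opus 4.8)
The plan is to recognise $\tau$ as a geometric random variable and to reduce everything to a single-trial success probability. Since each iteration of the while-loop in Algorithm~\ref{euclid} draws a fresh set of $n$ points uniformly at random from $\bx$, independently of the previous draws, the events $\{C^-(\bX_i)\cap\bx\neq\emptyset\}$ are i.i.d.\ Bernoulli with some success probability $q$. Hence $\tau$ is geometric, so $\EE[\tau]=1/q$ and $\mathrm{Var}(\tau)=(1-q)/q^2$. Both claimed inequalities then follow from a single lemma, $q\ge p$, together with the elementary monotonicity facts that $q\mapsto 1/q$ and $q\mapsto(1-q)/q^2$ are decreasing on $(0,1)$ (the latter because its derivative equals $(q-2)/q^3<0$).

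To prove $q\ge p$ I would exhibit enough favourable $n$-subsets. Because we assume $\EE_{X\sim\mu}[f_i(X)]=0$, the origin is a convex combination of the points of $\bx$ with strictly positive weights, so $0$ lies in the relative interior of $\mathrm{conv}(\bx)$; assuming $\bx$ spans $\R^n$ (otherwise pass to the spanned subspace and decrease $n$), Caratheodory's theorem (Theorem~\ref{th:cath}) lets me extract an $(n+1)$-subset $G=\{y_1,\dots,y_{n+1}\}$ that spans $\R^n$ and still has $0$ as a strictly positive convex combination $\sum_j w_j y_j=0$. For each $k$, the leave-one-out set $G\setminus\{y_k\}$ consists of $n$ points; the relation $\sum_j w_j y_j=0$ (with all $w_j>0$) shows each such set again spans $\R^n$, and by the barycentre characterisation in Theorem~\ref{th:main} the removed point $y_k$ then lies in $C^-(G\setminus\{y_k\})$. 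Thus all $n+1$ subsets $G\setminus\{y_k\}$ are favourable, so in particular at least $n$ of the $\binom{N}{n}$ possible $n$-subsets are, giving $q\ge n/\binom{N}{n}=p$. Plugging $q\ge p$ into the geometric formulas and using the monotonicity above yields $\EE[\tau]\le 1/p$ and $\mathrm{Var}(\tau)\le(1-p)/p^2$.

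For the limit I would argue $q_N\to 1$, which suffices since $\tau\ge 1$ forces $\EE[\tau]\ge 1$ while $\EE[\tau]=1/q_N$. The key observation is that any $(n+1)$-subset $T$ with $0$ in its interior certifies that all $n+1$ of its $n$-subsets are favourable: removing any point of $T$ leaves that point in the corresponding negative cone. It therefore suffices to show that, as $N\to\infty$ with $n$ fixed, the fraction of $n$-subsets that cannot be completed to such a surrounding simplex tends to $0$; equivalently, that a uniformly random negative cone $C^-(\bX_i)$ eventually captures one of the remaining points with probability tending to $1$. This is a Wendel-type integral-geometry statement: under mild nondegeneracy (points in general position with mass spread in all directions about the centred origin) there are of order $N^{n+1}$ surrounding simplices against only order $N^{n}$ subsets, so almost every subset is favourable.

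I expect this last step to be the main obstacle. The bound $q\ge p$ is essentially a clean Caratheodory-plus-geometry count, but it is badly loose (it uses a single surrounding simplex, and indeed $1/p\to\infty$), so it says nothing about the limit. Proving $q_N\to 1$ requires replacing that crude one-simplex count with a sharp asymptotic estimate of the \emph{bad} subsets, and this is precisely where genericity of the configuration must enter; making ``mass spread in all directions'' quantitative is the delicate point.
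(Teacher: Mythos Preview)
Your derivation of the two inequalities is the paper's proof: recognise $\tau$ as geometric with success parameter $q$, use Carath\'eodory to exhibit one $(n{+}1)$-subset $G$ with $0$ in its convex hull, count its $n$-faces as favourable samples, and conclude $q\ge p$. The paper's write-up is terser (it simply asserts that the $\binom{n+1}{n}$ faces are favourable without your spanning check) but the idea is identical.

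The divergence is entirely in the limit claim, and you are right to single it out. In fact the paper's own proof in the appendix \emph{does not address the limit at all}; it stops after the geometric-distribution bounds. Moreover, under the stated hypotheses the limit is false. Take $n=2$ and let $\mu_N$ be uniform on $(1,1),(2,1),\dots,(N{-}1,1)$ together with the one third-quadrant point that recentres the mean at the origin. For a pair $\{(i,1),(j,1)\}$ drawn from the line $y=1$ the only candidate to fall in the negative cone is that balancing point, and it does so iff $i\le N/2\le j$; hence only about half of these $\binom{N-1}{2}$ pairs are favourable, the $O(N)$ pairs involving the balancing point are asymptotically negligible, so $q_N\to\tfrac12$ and $\EE[\tau]\to 2\neq 1$. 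The convergence $\EE[\tau]\to 1$ really lives in the i.i.d.\ setting of Proposition~\ref{prop:empirical}, whose item~\ref{itm:N_to_infty} supplies exactly the ``mass spread in all directions'' hypothesis you were reaching for. Your diagnosis is therefore correct: $q\ge p$ is the full provable content of the proposition as stated.
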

Not surprisingly, the worst case bound for $\EE[\tau]$ are not practical, and it is easy to come up with examples where this $\tau$ will be very large with high probability, e.g.~a cluster of points and one point far apart from this cluster would result in wasteful oversampling of points from the cluster. 
Such examples, suggest that one should be able to do better by taking the geometry of points into account which we will do in Section \ref{sec:optimized algo}.
However, before that, it is instructive to better understand the properties of Algorithm~\ref{euclid} when applied to empirical measures. 
\paragraph{Application to empirical measures.}
Consider a random probability measure $\mu=\frac{1}{N}\sum_{i=1}^{N}\delta_{\left(f_{1}(X_{i}),\ldots,f_{n}(X_{i})\right)} $ where the $X_1,X_2,\ldots$ are independent and identically distributed. 
\begin{restatable}{proposition}{propcomplexity}\label{prop:empirical}
  Let $N>n+1$ and let $f_1, \ldots, f_n$ be $n$ real-valued functions and $X_1,\ldots,X_N$ be $N$ i.i.d.~copies of a random variable $X$. 
  Set $F(X)=(f_1(X),\ldots,f_n(X))$, assume $\EE[F(X)] =0 $ and denote
\begin{align}\label{eq: 0 in conv}
E:=\{0 \in \operatorname{Conv}\{F(X_i), i \in \{1,\ldots,N\}\} \}.
\end{align}
\begin{enumerate}
\item \label{itm:tau general} $\EE[\tau|E] \le \frac{1}{p}$ and $Var(\tau|E) \le \frac{1-p}{p^2}$,
where 
\begin{align}\label{eq:tau_general_estimates}
p=\max\left\{ \frac{n\cdot n!(N-n)!}{N!},1-\Prob\left(0\not\in\operatorname{Conv}\{F(X_{1}),\ldots,F(X_{n+1})\}\right)^{N-n}\right\},
\end{align} 
\item \label{itm:tau symm} 
If the law of  $F(X)$ is invariant under reflection in the origin, then $\Prob\left(0\not\in\operatorname{Conv}\{F(X_{1}),\ldots,F(X_{n+1})\}\right) = 1-2^{-n}$,
\item \label{itm:N_to_infty}
For fixed $n$, as $N\to\infty$
\begin{align}
\Prob(&\text{for $n$ uniformly at random chosen points $\bx^\star$ from $\bx$, }\exists x\in \bx\,\, s.t. \,\,x\in C^-(\bx^\star))\to 1,
\end{align}
where $\bx=\{F(X_1), F(X_2),\ldots,F(X_N)\}$.
\end{enumerate}
\end{restatable}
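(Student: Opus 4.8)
The plan is to treat the three items separately, reducing item~\ref{itm:tau general} to a deterministic lower bound on a per-configuration success probability, and then handling the two remaining items by a specialization of Wendel's theorem and by dominated convergence.

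For item~\ref{itm:tau general}, I would first condition on the realized point set $\bx=\{F(X_1),\dots,F(X_N)\}$. Given $\bx$, the sampled sets $\bX_1,\bX_2,\dots$ are i.i.d., so $\Prob(\tau>k\mid\bx)=(1-q(\bx))^k$, where $q(\bx):=\Prob(C^-(\bX_1)\cap\bx\neq\emptyset\mid\bx)$. If a single lower bound $q(\bx)\ge p$ holds for every configuration in $E$, then averaging the pointwise inequality $(1-q(\bx))^k\le(1-p)^k$ over $E$ yields $\Prob(\tau>k\mid E)\le(1-p)^k$; thus $\tau\mid E$ is stochastically dominated by a geometric$(p)$ variable, which gives both $\EE[\tau\mid E]\le 1/p$ and $\mathrm{Var}(\tau\mid E)\le(1-p)/p^2$ through $\EE[\tau\mid E]=\sum_k\Prob(\tau>k\mid E)$ and the corresponding second-moment sum. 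The first candidate $p=\tfrac{n\cdot n!(N-n)!}{N!}$ is exactly the worst-case bound of Proposition~\ref{prop:worst}, whose counting argument applies verbatim to any fixed configuration with $0\in\operatorname{Conv}(\bx)$.

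The second candidate is probabilistic, and it is where the main obstacle lies. By part~\ref{itm: barycenter} of Theorem~\ref{th:main}, a candidate point $x$ lies in $C^-(\bx^\star)$ precisely when $0\in\operatorname{Conv}(\bx^\star\cup\{x\})$. Using exchangeability, a uniformly chosen $n$-subset $\bx^\star$ has the law of $n$ i.i.d.\ draws, while the remaining $N-n$ points are i.i.d.\ given $\bx^\star$; treating each of them as an independent ``completion trial'' that succeeds with probability $\Prob(0\in\operatorname{Conv}\{F(X_1),\dots,F(X_{n+1})\})$ produces the success probability $1-\Prob(0\notin\operatorname{Conv}\{F(X_1),\dots,F(X_{n+1})\})^{N-n}$. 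Taking the maximum of the two candidates and substituting into the geometric formulas gives the claim. The delicate point is that the $N-n$ candidates share the same random base $\bx^\star$: conditionally on $\bx^\star$ they are independent but only with the base-dependent mass $g(\bx^\star):=\Prob(F(X)\in C^-(\bx^\star)\mid\bx^\star)$, whose mean is $\Prob(0\in\operatorname{Conv}\{F(X_1),\dots,F(X_{n+1})\})$. Since $t\mapsto(1-t)^{N-n}$ is convex, controlling the averaged failure $\EE[(1-g(\bx^\star))^{N-n}]$ through this mean is exactly the Jensen-type step that must be argued carefully.

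For item~\ref{itm:tau symm}, I would use the sign-flip argument behind Wendel's theorem. For $n+1$ points in general position in $\R^n$ there is an almost surely unique (up to scaling) linear dependence $\sum_{i=1}^{n+1}c_i F(X_i)=0$ with all $c_i\neq 0$, and $0\in\operatorname{Conv}\{F(X_1),\dots,F(X_{n+1})\}$ holds iff all the $c_i$ share a common sign. Under the reflection symmetry $F(X_i)\mapsto \epsilon_i F(X_i)$ with $\epsilon\in\{\pm1\}^{n+1}$ equiprobable, the coefficients become $c_i\epsilon_i$, and exactly two of the $2^{n+1}$ sign patterns (namely $\epsilon_i=\pm\operatorname{sign}(c_i)$) make them all of one sign; hence $\Prob(0\in\operatorname{Conv}\{F(X_1),\dots,F(X_{n+1})\})=2/2^{n+1}=2^{-n}$, giving the stated $1-2^{-n}$ for the complement.

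For item~\ref{itm:N_to_infty}, I would argue by dominated convergence. By exchangeability the base $\bx^\star$ has a law that does not depend on $N$, and the per-iteration failure probability equals $\EE[(1-g(\bx^\star))^{N-n}]$ with $g(\bx^\star)$ as above. Since $\EE[F(X)]=0$, the origin lies in the interior of the convex hull of the support of the law of $F(X)$, so every full-dimensional cone $C^-(\bx^\star)$ carries positive mass and $g(\bx^\star)>0$ almost surely; then $(1-g(\bx^\star))^{N-n}\to 0$ pointwise and, being bounded by $1$, the expectation tends to $0$, so the success probability tends to $1$. The point to watch here is the positivity $g(\bx^\star)>0$ a.s., which can fail when the law of $F(X)$ is concentrated on a lower-dimensional set or leaves an open cone massless, so a mild non-degeneracy (spanning, full-dimensional support) assumption is what makes the cone mass strictly positive.
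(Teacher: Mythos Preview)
Your overall architecture matches the paper's: for item~\ref{itm:tau general} both of you reduce to a geometric-domination argument with two candidates for the success probability (the combinatorial one from Proposition~\ref{prop:worst} and the probabilistic one via Theorem~\ref{th:main}), for item~\ref{itm:tau symm} both invoke Wendel's result (you spell out the sign-flip argument, the paper simply cites \cite{Wendel1962,Schneider2008}), and both then pass to the limit for item~\ref{itm:N_to_infty}. On item~\ref{itm:tau general} you are in fact \emph{more} careful than the paper: the paper writes
\[
\Prob(A)=1-\prod_{j=n+1}^{N}\Prob\bigl(F_{j}\notin C^{-}(F_{1},\ldots,F_{n})\bigr)
\]
as an equality, i.e.\ it treats the events $\{F_j\notin C^-(F_1,\dots,F_n)\}$ as independent, whereas you correctly note that they are only conditionally independent given the base and that the averaging step runs into the convexity of $t\mapsto(1-t)^{N-n}$. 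So the ``delicate point'' you flag is real, and the paper does not resolve it either; your caution is appropriate.

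The one place where the routes genuinely diverge is item~\ref{itm:N_to_infty}. The paper does \emph{not} argue by dominated convergence on the base-conditional cone mass $g(\bx^\star)$; instead it decomposes $\Prob(A)=\Prob(A\mid E)\Prob(E)$, uses the bound from item~\ref{itm:tau general} to get $\Prob(A\mid E)\to 1$, and invokes the separate result (Theorem~\ref{th:convergence_CH}, from \cite{hayakawa2020monte}) that $\Prob(E)\to 1$. Your dominated-convergence route is more self-contained and avoids the external citation, but it needs the extra non-degeneracy you mention (full-dimensional support so that every negative cone has positive mass); the paper's route hides the corresponding non-degeneracy inside the requirement $\Prob\bigl(0\in\operatorname{Conv}\{F(X_1),\dots,F(X_{n+1})\}\bigr)>0$. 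Either way a mild spanning assumption is implicitly in play.
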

Proposition~\ref{prop:empirical} conditions on the event \eqref{eq: 0 in conv} so that the recombination problem is well-posed, but this happens with probability one for large enough $N$, see Theorem \ref{th:convergence_CH} in Appendix and \cite{hayakawa2020monte}.   
Not surprisingly, the worst case bounds of Algorithm~\ref{euclid} can be inconvenient, as equation~\eqref{eq:tau_general_estimates} shows. %
Nevertheless, item \ref{itm:tau symm} of Proposition \ref{prop:empirical} shows an interesting trade-off in computational complexity, since the total cost   
\begin{align}\label{eq:theoretical_complexity}
\EE[\tau]O(n^{3}+Nn^{2})\leq C(n^{3}+Nn^{2})\min\left\{ \frac{N!}{n\cdot n!(N-n)!},\frac{1}{1-\left(1-2^{-n}\right)^{N-n}}\right\}
\end{align}
has a local minimum in $N$, see Figure \ref{fig:theoretical_result} in the Appendix. Section \ref{sec:exp} shows that this is observed in experiments and this minimum also motivates the divide and conquer strategy we use in the next section. 

\section{A geometrically greedy Algorithm}\label{sec:optimized algo}
Algorithm~\ref{euclid} produces candidates for the reduced measure by random sampling and then accepts or rejects them via the characterization given in Theorem~\ref{th:main}.
We now optimize the first part of it, namely the selection of candidates, by exploiting better the geometry of cones. %

\paragraph{Motivation in two dimensions.}
Having chosen $\bx^\star$ points we know that we have found a solution if $C^-(\bx^\star) \cap \bx \neq \emptyset$.
Hence, maximizing the volume of the cone increases the chance that this intersection is not empty. 
Indeed, when $n=2$ it is easy to show the following result.
\begin{theorem}\label{th:n=2}
  Let $\bx$ be a set of $N{\geq 3}$ points in $\R^2$ and $x_1 \in \bx$.
  Define $\bx^\star=(x_1^\star, x_2^\star)$, where
  \begin{align}\label{eq:two-dim}
  x^{\star}_2:=\argmax_{x\in\bx\setminus \{x_1^\star\}}\left|\frac{\langle x_1^\star,x\rangle}{\|x_1^\star\|\|x\|}-1\right|.
  \end{align}
There exists a convex combination $\sum_{x \in \bx} w_x x$ of $\bx$ that equals $0$ if and only if $\bx \cap C^-(\bx^\star) \neq \emptyset$.
\end{theorem}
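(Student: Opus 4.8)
The plan is to recast the statement in the language of angles and to prove the two implications separately, the reverse (``only if'') implication carrying essentially all of the work. Throughout I identify a nonzero point of $\R^2$ with its angle and use the standard characterisation that $0\in\operatorname{Conv}(\bx)$ if and only if the points of $\bx$ are not contained in any open half-plane bounded by a line through the origin. The first observation is that, since $\tfrac{\langle x_1^\star,x\rangle}{\|x_1^\star\|\|x\|}$ is the cosine of the angle $\angle(x_1^\star,x)$ and $|\cos\vartheta-1|=1-\cos\vartheta$ is increasing on $[0,\pi]$, the maximiser $x_2^\star$ of \eqref{eq:two-dim} is precisely the point of $\bx\setminus\{x_1^\star\}$ whose direction makes the largest angle $\theta^\star:=\angle(x_1^\star,x_2^\star)\in(0,\pi]$ with $x_1^\star$. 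Hence every point of $\bx$ lies in the closed angular sector of half-width $\theta^\star$ centred on the ray through $x_1^\star$.

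For the ``if'' direction I would argue directly from Definition~\ref{def:cone}: if $x_3\in\bx\cap C^-(\bx^\star)$ then $x_3=-\lambda_1 x_1^\star-\lambda_2 x_2^\star$ with $\lambda_1,\lambda_2\ge0$, so $\lambda_1 x_1^\star+\lambda_2 x_2^\star+x_3=0$ is a nontrivial nonnegative combination (the coefficient of $x_3$ being $1$); dividing by the sum of the coefficients yields a convex combination of $\{x_1^\star,x_2^\star,x_3\}\subseteq\bx$ equal to $0$, which extends to all of $\bx$ by assigning zero weight to the remaining atoms. In the nondegenerate case this is exactly item~\ref{itm: barycenter} of Theorem~\ref{th:main}.

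The ``only if'' direction is the substantive one. After rotating so that $x_1^\star$ points along angle $0$ and reflecting so that $x_2^\star$ sits at angle $+\theta^\star$, the negative cone $C^-(\bx^\star)$ becomes the antipodal sector of angles $[\pi,\pi+\theta^\star]\bmod 2\pi$; intersecting it with the sector $[-\theta^\star,\theta^\star]$ that contains all of $\bx$, a point of angle $\phi$ lies in $C^-(\bx^\star)$ exactly when $\phi\le\theta^\star-\pi$. I then proceed by contraposition: if $\bx\cap C^-(\bx^\star)=\emptyset$, every point of $\bx$ has angle in the half-open arc $(\theta^\star-\pi,\theta^\star]$ of width exactly $\pi$, and the goal is to show this forces $0\notin\operatorname{Conv}(\bx)$.

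I expect this final step to be the main obstacle, since an arc of width exactly $\pi$ corresponds to a closed rather than open half-plane, and $0$ can belong to the convex hull of points spread over a closed half-plane once both bounding rays are occupied. The decisive point --- and the reason the particular choice of $x_2^\star$ is needed --- is that the arc $(\theta^\star-\pi,\theta^\star]$ excludes the ray at angle $\theta^\star-\pi$ (the direction opposite $x_2^\star$) while retaining only the single boundary ray at angle $\theta^\star$. Letting $u$ be the inward normal of the associated half-plane, every $x\in\bx$ satisfies $\langle u,x\rangle\ge0$, so any convex combination equal to $0$ must be supported on points with $\langle u,x\rangle=0$; but the only occupied boundary direction is that of $x_2^\star$, whence such a combination is a strictly positive multiple of a single direction and cannot vanish, a contradiction. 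Finally I would treat the degenerate case $\theta^\star=\pi$ directly: there $x_1^\star$ and $x_2^\star$ are antipodal, $C^-(\bx^\star)$ is the entire line through $x_1^\star$ and hence contains $x_1^\star\in\bx$, while $0$ is already a convex combination of $x_1^\star$ and $x_2^\star$, so both sides of the equivalence hold.
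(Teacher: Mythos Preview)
Your proof is correct and follows the same geometric strategy as the paper: partition the plane according to the maximal angle $\theta^\star$, observe that the angular regions beyond $\pm\theta^\star$ from $x_1^\star$ are empty by maximality, and then argue that if the negative cone is unoccupied the remaining points of $\bx$ lie in a (half-open) half-plane, precluding $0\in\operatorname{Conv}(\bx)$. The paper's version is a short figure-based sketch that invokes ``simple convex geometry arguments'' for this last step, whereas you supply that step explicitly via the normal vector $u$ and also treat the degenerate case $\theta^\star=\pi$ separately; the substance is the same.
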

\begin{figure}[ht!]
 \centering
  \includegraphics[width=0.35\textwidth]{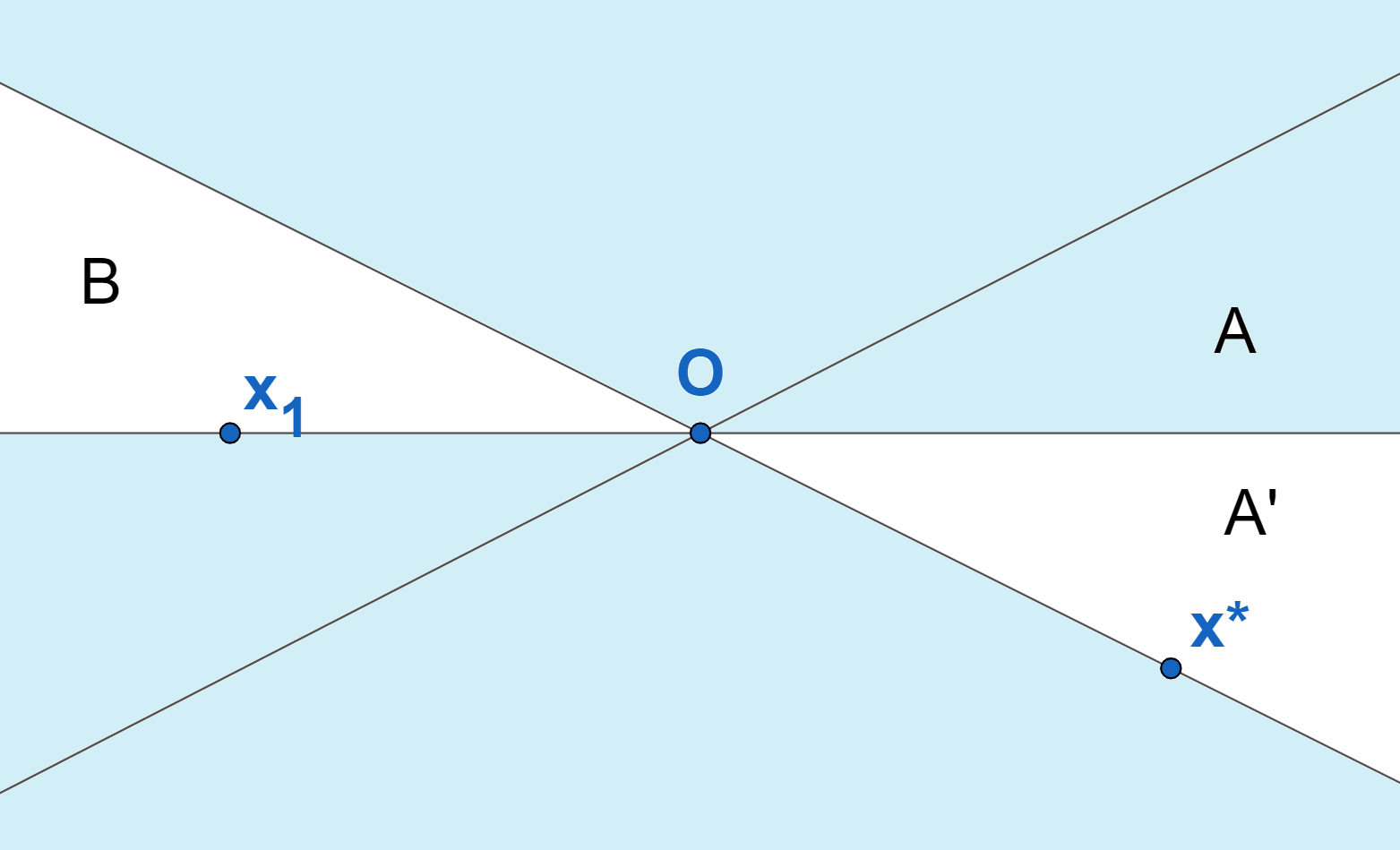}
 \caption{Proof of Theorem~\ref{th:n=2}.}\label{figntwo}
\end{figure} 
Theorem \ref{th:n=2} follows immediately from the cone geometry but it is instructive to spell it out since it motivates the case of general $n$.
\begin{proof}
{
The shaded areas in Figure~\ref{figntwo} indicate $C^-(x_1,x^{\star})$ (on the top) and $C(x_1,x^{\star})$ (on the bottom).
Moreover, by definition of $x^{\star}$ in the region $A$ and $A'$ there are no points.

$(\Rightarrow)$ If there exists a point $x_2$ in $C^-(x_1,x^{\star})$, then by convexity it follows that there exists a convex combination of $0$ for $x_1, x_2,x^{\star}$.\\
$(\Leftarrow)$ If there does not exist a point in $C^-(x_1,x^{\star})$, then there are points only in $B\cup C(x_1,x^{\star})$, therefore, again for simple convex geometry arguments,  it is impossible that there exists a convex combination of $0$ for $x_1, x_2,x^{\star}$.}
\end{proof}
Hence, if we modify Algorithm~\ref{euclid} by selecting in step~\ref{sample} the new point by maximizing the angle according to~\eqref{eq:two-dim} then for $n=2$, Theorem~\ref{th:n=2} guarantees that $n+1=3$ points out of $N$ are found that constitute a reduced measure $\hat \mu$ in $\tau \le 2 $ computational steps.

\paragraph{A geometrically greedy Algorithm.}
For general dimensions $n$, the intuition remains that a good (but not perfect) proxy to maximize the likelihood that the negative cone is non-empty, is given by maximizing the volume of the cone, see \cite[Chapter 8]{Schneider2008}.
Such a volume maximization is a long-standing open question and goes at least back to \cite{Sommerville1927}; see \cite{Bollobas1997} for an overview.
One reviewer, also pointed to recent papers that implicitly apply a similar intuition to other problems, see \cite{Clarkson2010, Aljundi2019}.
All this motivates the ``\emph{geometrically greedy}'' Algorithm~\ref{euclid_opt} that applies for any $n$. 
\begin{algorithm}[tbh]\caption{Optimized measure reduction algorithm}\label{euclid_opt}
\begin{algorithmic}[1]
    \Procedure{Reduce-Optimized}{A set $\bx$ of $N$ points in $\R^n$}
    \State Choose $n$ points $\bx^\star=\{x_1^\star,\ldots,x_n^\star\}$ from $\bx$\label{euclid_opt sample}
    \State {$i\gets 0$}
     \While{$C^-(\bx^\star)\cap \bx = \emptyset$ }
     \State{$\bx \gets \bx \setminus \text{interior}\{C(\bx^\star)\} $  }\label{step:cancel_points}
  \If{$i=0$} \label{step:if}\State{$x_{i+1}^{\star}\gets \argmax_{x\in\bx\setminus\bx^{\star}}\big|\langle x,\sum_{j=2}^{n}x_{j}^{\star}\rangle-1\big|$}
  \Else \State{$x_{i+1}^\star \gets \argmax_{x\in\bx\setminus\bx^{\star}}\big|\langle x,\sum_{j=1}^{i}x_{j}^{\star}\rangle-1\big|$}
  \EndIf
     \State {$i\gets ((i+1) \mod n)$}
     \EndWhile\label{step:euclidendwhile_opt}
     \State $\bx^\star \gets \bx^\star \cup x^\star$ with an arbitrary $x ^\star \in C^-(\bx^\star) \cap \bx$
     \State Solve the linear system $\sum_{x \in \bx^\star} w_x^\star x =0$ for $w^\star=(w_x^\star)_{x \in \bx^\star}$
     \State \textbf{return}  $(\bx^\star,w^\star)$ 
    \EndProcedure
  \end{algorithmic}
\end{algorithm}
First note that the deletion of points in step \ref{step:cancel_points} in Algorithm \ref{euclid_opt} does not throw away potential solutions: suppose we have deleted a point $\hat x$ in the previous step that was part of a solution, i.e.~there exists a set of $n+1$ points $\hat\bx^\star$ in $\bx$ such that $\hat x\in \hat\bx^\star$, and there exist $n+1$ weights $\hat w_i$, $i\in[0,\ldots,n+1]$, $\hat w_i\in[0,1]$, $\sum w_i =1$ such that $\sum_{i=1}^{n+1}\hat{w}_{i}\hat{x}_{i}^{*}=0$, $\hat{x}_{i}^{*}\in\hat{\bx}^{\star}$.
If we indicate with $\bx_c$ the $n$ vectors of the basis of the cone of the previous iteration, we know that $\hat x \in \text{interior}\{C(\bx_c)\}$, which means there exist strictly positive values $c_i$ such that $\hat{x}=\sum_{i=1}^{n}c_{i}x_{i}^{c}$, $x_{i}^{c}\in\bx^{c}$. Therefore,
\begin{align}
\sum_{i=1}^{n+1}\hat{w}_{i}\hat{x}_{i}^{*}=&\hat{w}_{n+1}\hat{x}+\sum_{i=1}^{n}\hat{w}_{i}\hat{x}_{i}^{*}=\hat{w}_{n+1}\sum_{i=1}^{n}c_{i}x_{i}^{c}+\sum_{i=1}^{n}\hat{w}_{i}\hat{x}_{i}^{*},\,\,\,x_{i}^{c}\in\bx^{c}\text{ and }\hat{x}_{i}^{*}\in\hat{\bx}^{\star}.
\end{align}
Given that $w_i$ and $c_i$ are positive $0\in\operatorname{Conv}\{\hat\bx^\star\cup\bx_c\}$ and we can apply again the Caratheodory's Theorem \ref{th:cath}, which implies that the deleted point $\hat x$ was not essential.
The reason for the if clause in step \ref{step:if} is simply that the first time the loop is entered we optimize using the randomly selected bases, but in subsequent runs it is intuitive that we should only optimize over the base points that were optimized in previous loops.
\paragraph{Complexity.}
We now discuss the complexity of Algorithm~\ref{euclid_opt}. 
\begin{restatable}{proposition}{propcomplexityopt}\label{prop:compl_algo_opt}
The complexity of Algorithm~\ref{euclid_opt} to compute a reduced measure $\hat \mu $ ,as in Theorem~\ref{th:tchakalof}, is 
\begin{align}
O(n^{3}+n^{2}N)+(\tau-1)O(n^{2}+nN),
\end{align}
here $\tau = \inf \{ i \ge 1: C^-(\bX_i)\cap \bx \not=\emptyset\}$ where $\bX_1, \bX_2,\ldots $ are obtained as in Algorithm~\ref{euclid_opt}.
\end{restatable}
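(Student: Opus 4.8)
The plan is to track the per-pass cost of the three operations inside the while-loop — maintaining the matrix $A$ that encodes the cone, forming the product needed to test cone membership, and performing the greedy selection — and to separate the initialization pass (done from scratch) from the remaining $\tau-1$ passes (done by low-rank updates). The crucial structural observation is that the greedy step replaces \emph{exactly one} point of the current basis $\bx^\star$ per iteration: since $i$ cycles through $\{0,1,\ldots,n-1\}$, each pass overwrites a single $x_{i+1}^\star$, so the matrix $B=[x_1^\star|\cdots|x_n^\star]$ whose inverse is $A$ changes by a single column, i.e. $B_{\mathrm{new}}=B_{\mathrm{old}}+(\tilde x-x_{i+1}^\star)e_{i+1}^{\top}$, a rank-one perturbation.

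First I would account for the initialization. Here $A=B^{-1}$ is computed from scratch in $O(n^3)$, and the membership test requires the product $A\bX$, where $\bX\in\R^{n\times N}$ stacks the points; this matrix multiplication costs $O(n^2N)$. The deletion in step~\ref{step:cancel_points} and the argmax selection only read off signs and inner products from quantities already computed, at cost $O(nN)$. After the loop terminates, solving the $(n+1)\times(n+1)$ system for the weights $w^\star$ costs $O(n^3)$. All of these are absorbed into the first term $O(n^3+n^2N)$.

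Next I would treat a generic subsequent pass using the rank-one structure. Applying the Woodbury (Sherman--Morrison) identity to $B_{\mathrm{new}}=B_{\mathrm{old}}+u\,e_{i+1}^{\top}$ with $u=\tilde x-x_{i+1}^\star$ yields $A_{\mathrm{new}}=A_{\mathrm{old}}+\alpha\,p\,q^{\top}$, where $q^{\top}=e_{i+1}^{\top}A_{\mathrm{old}}$ is merely a row of $A_{\mathrm{old}}$, $p=A_{\mathrm{old}}u$ is computed in $O(n^2)$, and $\alpha$ is a scalar; this updates $A$ in $O(n^2)$ rather than recomputing an inverse in $O(n^3)$. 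The real saving is in the membership product: instead of recomputing $A_{\mathrm{new}}\bX$ in $O(n^2N)$, I would propagate it as $A_{\mathrm{new}}\bX=A_{\mathrm{old}}\bX+\alpha\,p\,(q^{\top}\bX)$, where $q^{\top}\bX$ costs $O(nN)$ and the rank-one correction $p(q^{\top}\bX)$ costs another $O(nN)$; deletions are handled simply by dropping the corresponding columns of the stored product. By item~\ref{itm: A and H >} of Theorem~\ref{th:main}, membership in $C^-(\bx^\star)$ and in the interior of $C(\bx^\star)$ is read off from the signs of $A\bX$ at no extra asymptotic cost, and the greedy argmax costs $O(nN)$ once the running sum $\sum_j x_j^\star$ is maintained incrementally. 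Summing gives $O(n^2+nN)$ per pass, hence $(\tau-1)O(n^2+nN)$ over the remaining iterations, which together with the initialization proves the claim.

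The step I expect to require the most care is justifying that the membership product $A\bX$ really can be maintained by a single rank-one correction across passes, rather than recomputed: this needs the running product to be stored between iterations and the column-deletions of step~\ref{step:cancel_points} to be reconciled with that stored product. A secondary technical point is that the Woodbury update is only valid when $B_{\mathrm{new}}$ stays invertible, equivalently when the denominator $1+e_{i+1}^{\top}A_{\mathrm{old}}u$ does not vanish; I would note that the spanning hypothesis of Theorem~\ref{th:main} keeps the basis non-degenerate, and that the greedy choice selects a genuinely new point so that this nondegeneracy is generically preserved.
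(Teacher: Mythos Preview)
Your proposal is correct and follows essentially the same route as the paper: the paper also exploits that each pass replaces a single basis vector, applies the Sherman--Morrison formula to update $A$ in $O(n^2)$ and the stored product $A\bX^\top$ in $O(nN)$, and then reads off cone membership and the argmax at $O(nN)$ per pass. The only cosmetic difference is that the paper further uses a block Sherman--Morrison step to solve the final $(n{+}1)\times(n{+}1)$ weight system in $O(n^2)$ rather than $O(n^3)$, whereas you absorb the $O(n^3)$ system into the initialization term; either way the stated bound holds.
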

In contrast, to the complexity of Algorithm~\ref{euclid}, Corollary~\ref{cor:complexity}, the $n^3$ term that results from a matrix inversion is no longer proportional to $\tau$, and the random runtime $\tau$ only affects the complexity proportional to $n^2+nN$. 
For a generalization of Theorem~\ref{th:n=2} from $n=2$ to general $n$, that is a statement of the form ``in $n$ dimensions the algorithm terminates after at most $\tau\le f(n)$'', one ultimately runs into the result of  a ``positive basis'' from discrete geometry, see for example \cite{Regis2016}, that says if $n\geq 3$ it is possible to build positive independent set of vectors of any cardinality.
Characterizing the probability of the occurrences of such sets is an ongoing discrete geometry research topic and we have nothing new to contribute to this.
However, despite the existence of such ``positive independent sets'' for $n\geq 3$, the experiments in Section~\ref{sec:exp} underlines the intuition that in the generic case, maximizing the angles is hugely beneficial also in higher dimension.
If a deterministic bound on the runtime is crucial, one can combine the strengths of Algorithm~\ref{euclid_opt} (a good chance of finding $\bx^\star$ quickly by repeatedly smart guessing) with the strength of deterministic algorithms such as \cite{Litterer2012,maria2016a,Maalouf2019} by running Algorithm~\ref{euclid_opt} for at most $k$ steps and if a solution is not found run a deterministic algorithms.
Indeed, our experiments show that this is on average a very good trade-off since most of the probability mass of $\tau$ is concentrated at small $k$, see also Appendix~\ref{sec:combine}.  
\paragraph{Robustness.}
An interesting question is how robust the measure reduction procedure is to the initial points.
Therefore assume we know the solution of the recombination problem (RP) for $\bx \subset \R^n$, i.e.~a subset of $n+1$ points $\hat\bx=(\hat x_1, \hat x_2,\dots,\hat x_{n+1})\subset	\bx$ and a discrete measure $\hat\mu$ on $\hat \bx$ such that $\hat\mu(\hat\bx)=0$. 
If a set of points $\by$ is close to $\bx$ one would expect that one can use the solution of the RP for $\bx$ to solve the RP for $\by$. 
The theorem below uses the Woodbury matrix identity to make this precise. 
\begin{restatable}{proposition}{thmrobust}\label{th:robust}
  Assume that $\operatorname{span}(\hat\bx)=\operatorname{span}(\hat\bx_{-1})=\R^n$, where $\hat\bx_{-i}:=\hat\bx\setminus{\hat x_i}$.
  Denote with $\bX$ a matrix which as has rows the vectors in $\bx$.
  Suppose there exists an invertible matrix $R$ and another matrix $E$, such that $\bX=\bY R+E$.
  Denote $\gamma_1:= (\hat\bX_{-1}^\top)^{-1} \hat{X}^\top_1$, where  $\hat\bx$ is a solution to the RP $\bx$.
  Assuming that the inverse matrices exist, $\hat\bX R+E_{\hat\bx}$ is a solution to the RP $\by$ if and only if 
\begin{align}\label{eq:robust}
\gamma_{1}^{\top}+E_{\hat{x}_{1}}R^{-1}A_{1}^{\top}\leq&\left(\gamma_{1}^{\top}+E_{\hat{x}_{1}}R^{-1}A_{1}^{\top}\right)E_{\hat{\bx}_{-1}}\left(I+R^{-1}A_{1}^{\top}E_{\hat{\bx}_{-1}}\right)R^{-1}A_{1}^{T}
\end{align}
where $E_y$ indicates the part of the matrix $E$ related to the set of vectors $y \subset \bx$ and $A_1 = (\hat\bX_{-1}^\top)^{-1}$.
\end{restatable}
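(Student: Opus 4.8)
The plan is to reduce the claimed matrix inequality to a componentwise sign condition on a single coordinate vector and then to propagate that condition through the perturbation with the Woodbury identity. First I would record the reformulation of solvability. Since $\hat\bx_{-1}$ has $n$ elements and spans $\R^n$, the matrix $\hat\bX_{-1}^\top$ (whose columns are $\hat x_2,\ldots,\hat x_{n+1}$) is invertible, so $\hat X_1^\top=\hat\bX_{-1}^\top\gamma_1$ with $\gamma_1=(\hat\bX_{-1}^\top)^{-1}\hat X_1^\top=A_1\hat X_1^\top$. By the relabelled version of item~\ref{itm: barycenter} of Theorem~\ref{th:main} (with $\hat x_1$ in the role of $x_{n+1}$ and $\hat\bx_{-1}$ the spanning basis), $\hat\bx$ solves the RP precisely when $\hat x_1\in C^-(\hat\bx_{-1})$, i.e.\ precisely when $\gamma_1\le 0$ componentwise. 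Applying the same characterization to the candidate set $\hat\bY:=\hat\bX R+E_{\hat\bx}$ tested for $\by$, with blocks $\hat\bY_{-1}=\hat\bX_{-1}R+E_{\hat\bx_{-1}}$ and singled-out row $\hat Y_1=\hat X_1 R+E_{\hat x_1}$, shows that $\hat\bY$ solves the RP for $\by$ iff $(\hat\bY_{-1}^\top)^{-1}\hat Y_1^\top\le 0$; transposing, iff the row vector $\hat Y_1\,\hat\bY_{-1}^{-1}\le 0$.

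The core of the argument is then to compute $\hat Y_1\,\hat\bY_{-1}^{-1}$ in terms of $\gamma_1$, $A_1$, $R$, $E_{\hat x_1}$ and $E_{\hat\bx_{-1}}$. I would invert the block $\hat\bY_{-1}=\hat\bX_{-1}R+E_{\hat\bx_{-1}}$ by factoring out $\hat\bX_{-1}R$ and using the perturbation form of the Woodbury identity $(I+M)^{-1}=I-M(I+M)^{-1}$; since $A_1=(\hat\bX_{-1}^\top)^{-1}$ gives $\hat\bX_{-1}^{-1}=A_1^\top$, this yields
\[
\hat\bY_{-1}^{-1}=\bigl(I+R^{-1}A_1^\top E_{\hat\bx_{-1}}\bigr)^{-1}R^{-1}A_1^\top .
\]
Substituting $\hat Y_1=\hat X_1 R+E_{\hat x_1}$ and again writing $(I+M)^{-1}=I-M(I+M)^{-1}$ with $M=R^{-1}A_1^\top E_{\hat\bx_{-1}}$, the ``unperturbed'' contribution collapses via $\hat X_1 R\cdot R^{-1}A_1^\top=\hat X_1 A_1^\top=\gamma_1^\top$, so the leading term is exactly $\gamma_1^\top+E_{\hat x_1}R^{-1}A_1^\top$, matching the left-hand side of \eqref{eq:robust}. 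The correction term factors, using the same collapse inside $M$, as $(\gamma_1^\top+E_{\hat x_1}R^{-1}A_1^\top)E_{\hat\bx_{-1}}(I+M)^{-1}R^{-1}A_1^\top$, so the sign condition $\hat Y_1\,\hat\bY_{-1}^{-1}\le 0$ rearranges into precisely \eqref{eq:robust}.

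I would be careful to record the invertibility hypotheses that are used: $\hat\bX_{-1}$ invertible (from $\operatorname{span}(\hat\bx_{-1})=\R^n$), $R$ invertible (given), and $I+R^{-1}A_1^\top E_{\hat\bx_{-1}}$ invertible (the ``assuming the inverse matrices exist'' clause), which is exactly what legitimises the Woodbury step. The main obstacle I expect is bookkeeping rather than conceptual: keeping the transpose and row-versus-column conventions consistent so that $A_1=(\hat\bX_{-1}^\top)^{-1}$ enters as $A_1^\top=\hat\bX_{-1}^{-1}$, the final inequality emerges as a row-vector inequality, and the factor $R^{-1}A_1^\top$ (not $A_1^\top R^{-1}$) sits on the correct side of each product. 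A secondary subtlety is the boundary case in the first step: item~\ref{itm: barycenter} of Theorem~\ref{th:main} is phrased with strict weights $w_i>0$ against the \emph{closed} negative cone $C^-$, so I would state the equivalence with the non-strict condition $\gamma_1\le 0$, consistent with the form of \eqref{eq:robust}.
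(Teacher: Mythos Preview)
Your proposal is correct and follows essentially the same route as the paper: invoke the cone characterization of Theorem~\ref{th:main} to rewrite solvability as the componentwise sign condition on $(\hat\bY_{-1}^\top)^{-1}\hat Y_1^\top$, transpose to the row form $\hat Y_1\,\hat\bY_{-1}^{-1}$, and expand the perturbed inverse via the Woodbury identity after factoring out $\hat\bX_{-1}R$. If anything, your write-up is more careful with the bookkeeping---you track the factor $(I+R^{-1}A_1^\top E_{\hat\bx_{-1}})^{-1}$ explicitly, whereas the paper's displayed formula (both in the statement and the proof) writes it without the inverse.
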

This is not only of theoretical interest, since the initial choice of a cone basis in Algorithm~\ref{euclid_opt} can be decisive. 
For example, if we repeatedly solve a ``similar'' RP, e.g.~$N$ points are sampled from the same distribution,
then Proposition~\ref{th:robust} suggests that after the first RP solution, we should use the points in the new set of points that are closest to the previous solution as initial choice of the cone basis.

\paragraph{Las Vegas resets.}
The only source of randomness in Algorithm~\ref{euclid_opt} is the choice of $\bx^\star$ in step \ref{euclid_opt sample}.
If this happens to be a bad choice, much computational time is wasted, or even worse, the Algorithm might not even terminate.
However, like any other random ``black box'' algorithm one can stop Algorithm~\ref{euclid_opt} if $\tau$ becomes large and then restart with a random basis $\bx^\star$ sampled independently from the previous one.
We call a sequence $\mathcal{S}=(t_1,t_2,\ldots)$ of integers a reset strategy where the $i$th entry $t_i$ denotes the number of iterations we allow after the $i$th time the algorithm was restarted, e.g.~$\mathcal{S}=(10,3,6,\ldots)$ means that if a solution is not found after $10$ loops, we stop and restart; then wait for at most $3$ loop iterations before restarting; then $6$, etc.
Surprisingly, the question which strategy $\mathcal{S}$ minimises the expected run time of Algorithm~\ref{euclid_opt} has an explicit answer.  
A direct consquence of the main result in \cite{Luby1993} is that 
$
  \mathcal{S}^\star=c\times(1,1,2,1,1,2,4,1,1,2,1,1,2,4,8,1\dots)
$
achieves the best expected running time up to a logarithmic factor, i.e.~by following $\mathcal{S}^\star$ the expected running time is $O(\EE[\tau^\star] \log \EE[\tau^\star])$ where $\tau^\star$ denotes the minimal expected run time under \emph{any} reset strategy.
Thus, although in general we do not know the optimal reset strategy $\tau^\star$ (which will be highly dependent on the points $\bx$), we can still follow $\mathcal{S}^\star$
which guarantees that Algorithm~\ref{euclid_opt} terminates and that its expected running time is within a logarithmic factor of the best reset strategy. 
Since Algorithm~\ref{euclid_opt} uses $n$ updates of a cone basis, it is natural to take $c$ proportional to $n$ (in our experiments we fixed throughout $c=2n$). 
\paragraph{Divide and conquer.}\label{sec:hier_clust}
A strategy used in existing algorithms \cite{Litterer2012,maria2016a,Maalouf2019} is for a given $\mu=\sum_{i=1}^N w_ix_i$ to partition the $N$ points into $2(n+1)$ groups $I_1,\ldots,I_{2(n+1)}$ of approximately equal size, compute the barycenter $b_i$ for each group $I_i$, and then carry out any measure reduction algorithm to reduce the support of $\sum_{i=1}^{2(n+1)} \frac{w_i}{\sum_{j \in I_i} w_j}\delta_{b_i}$ to $n+1$ points.
One can then repeat this procedure, see \cite[Algorithm 2 on p1306]{Litterer2012} for details, and since each iteration halves the support, this terminates after $\log(N/n)$ iterations. %
Hence the total cost is 
${O}(Nn+\log(N/n)) C(2(n+1),n+1)),$ 
where $C(2(n+1),n+1)$ denotes the cost to reduce the measure from $2(n+1)$ points to $n+1$ points.
For the algorithm in \cite{Litterer2012} $C(2(n+1),n+1)={O}(n^4)$, similarly to \cite{Maalouf2019}; for the algorithm in \cite{maria2016a} $C(2(n+1),n+1)={O}(n^3)$.
Similarly, we can also run our randomized Algorithm~\ref{euclid_opt} on the $2(n+1)$ points.
However, the situation is more subtle since the regime where Algorithm~\ref{euclid_opt} excels is the big data regime $N \gg n$, so running the algorithm on smaller groups of points could reduce the effectiveness. 
Already for simple distributions and Algorithm~\ref{euclid}, as in Proposition \ref{prop:empirical} item \ref{itm:tau symm}, we see that for the optimal choice we should divide the points in $N^*_n$ groups with $N^*_n$ denoting the argmin of the complexity in $N$.
This decreases the computational cost to ${O}(Nn+\log_{N^*_n/n} (N) {\bar C}(N^*_n,n+1)),$ where ${\bar C}$ denotes the computational cost of Algorithm \ref{euclid} to reduce from $N^*_n$ points to $(n+1)$ points.
In general, the optimal $N^*_n$ will depend on the distribution of the points, but an informal calculation shows that $N^*_n = 50(n+1)$ achieves the best trade-off;
see Appendix \ref{sec:choice_div&conq} for details.%

\section{Experiments}\label{sec:exp}
We give two sets of experiments to demonstrate the properties of Algorithm \ref{euclid} and Algorithm \ref{euclid_opt}: \begin{enumerate*}[label=(\roman*)]
\item using synthetic data allows to study the behaviour in various regimes of $N$ and $n$, 
\item on real-world data, following \cite{Maalouf2019}, for fast least square solvers. 
\end{enumerate*}
As baselines we consider two recent algorithms \cite{maria2016a} ({\textit{det3}}) and \cite{Litterer2012} resp.~\cite{Maalouf2019} ({\textit{det4}})\footnote{Although the derivation is different, the resulting algorithms in \cite{Litterer2012, Maalouf2019} are essentially identical; we use the implementation of \cite{Maalouf2019} since do the same least mean square experiment as in \cite{Maalouf2019}.
All experiments have been run on a MacBook Pro, CPU: i7-7920HQ, RAM: 16 GB, 2133 MHz LPDDR3. 
}.
\paragraph{Reducing empirical measures.}
We sampled $N \in \{{2n},20, 30, 50, \ldots,10^6\}$ points
\begin{enumerate*}[label=(\roman*)]
\item 
a $n=15$-dimensional standard normal random variable (\textit{symmetric1}),
\item a $n=20$-dimensional standard normal random variable (\textit{symmetric2}),
\item $n=20$ dimensional mixture of exponential (\textit{non symmetric}). 
\end{enumerate*}
We then ran Algorithm \ref{euclid} (\textit{basic}), Algorithm \ref{euclid_opt} (\textit{optimized}), as well as Algorithm \ref{euclid_opt} with the optimal Las Vegas reset (\textit{optimized-reset}) and the divide and conquer strategy (\textit{log-opt}); the results are shown in Figure \ref{fig:bVSo}. %
The first row clearly shows that the performance gets best in the big data regime $N \gg n$. 
The biggest effect is how the angle/volume optimization of Algorithm~\ref{euclid_opt} drastically reduces the number of iterations compared to Algorithm~\ref{euclid}, and therefore the running time and the variance.
From a theoretical perspective is interesting that the shape predicted in Proposition \ref{prop:empirical}, for symmetric distributions such as the normal distribution (the two columns on the left) also manifests itself for the non-symmetric mixture (the right column); see also Figure \ref{fig:theoretical_result} in the Appendix.
The Las Vegas reset strategy is only noticeable in the regime when simultaneously $N$ and $n$ are close and relatively large; e.g.~ Figure \ref{fig:bVSo} falls not in this regime which is the reason why the plots are indistinguishable.
Nevertheless, even in regimes such as in Figure \ref{fig:bVSo} the reset strategy is at least on a theoretical level useful since it guarantees the convergence of Algorithm~\ref{euclid_opt} by excluding pathological cases of cycling through a ``sequence'' of cone bases (although we have not witnessed such pathological cases in our experiments). 
\begin{figure}[bth!]
    \centering
        \includegraphics[height=2.8cm,width=0.32\textwidth]{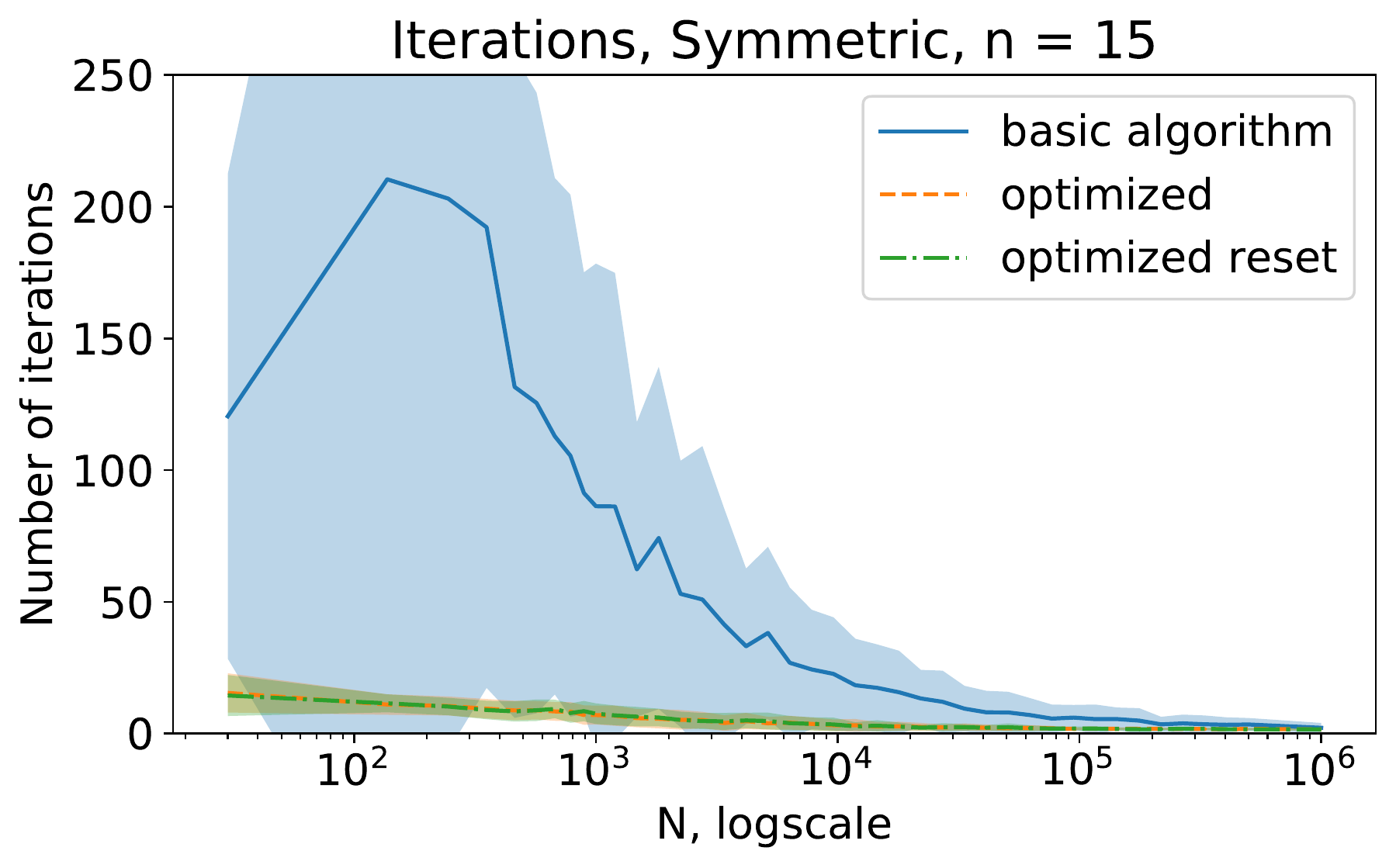}
        \includegraphics[height=2.8cm,width=0.32\textwidth]{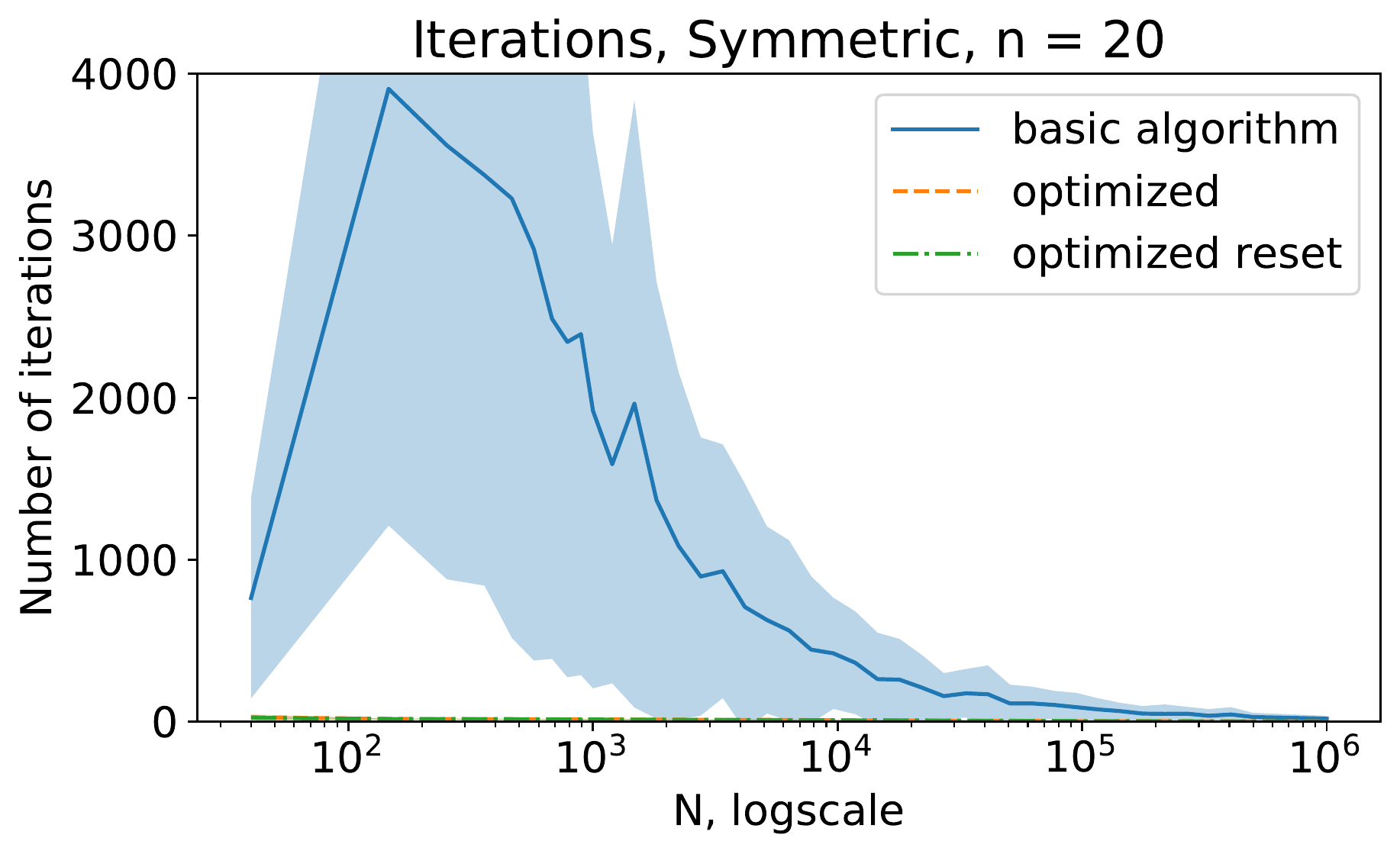}
       \includegraphics[height=2.8cm,width=0.32\textwidth]{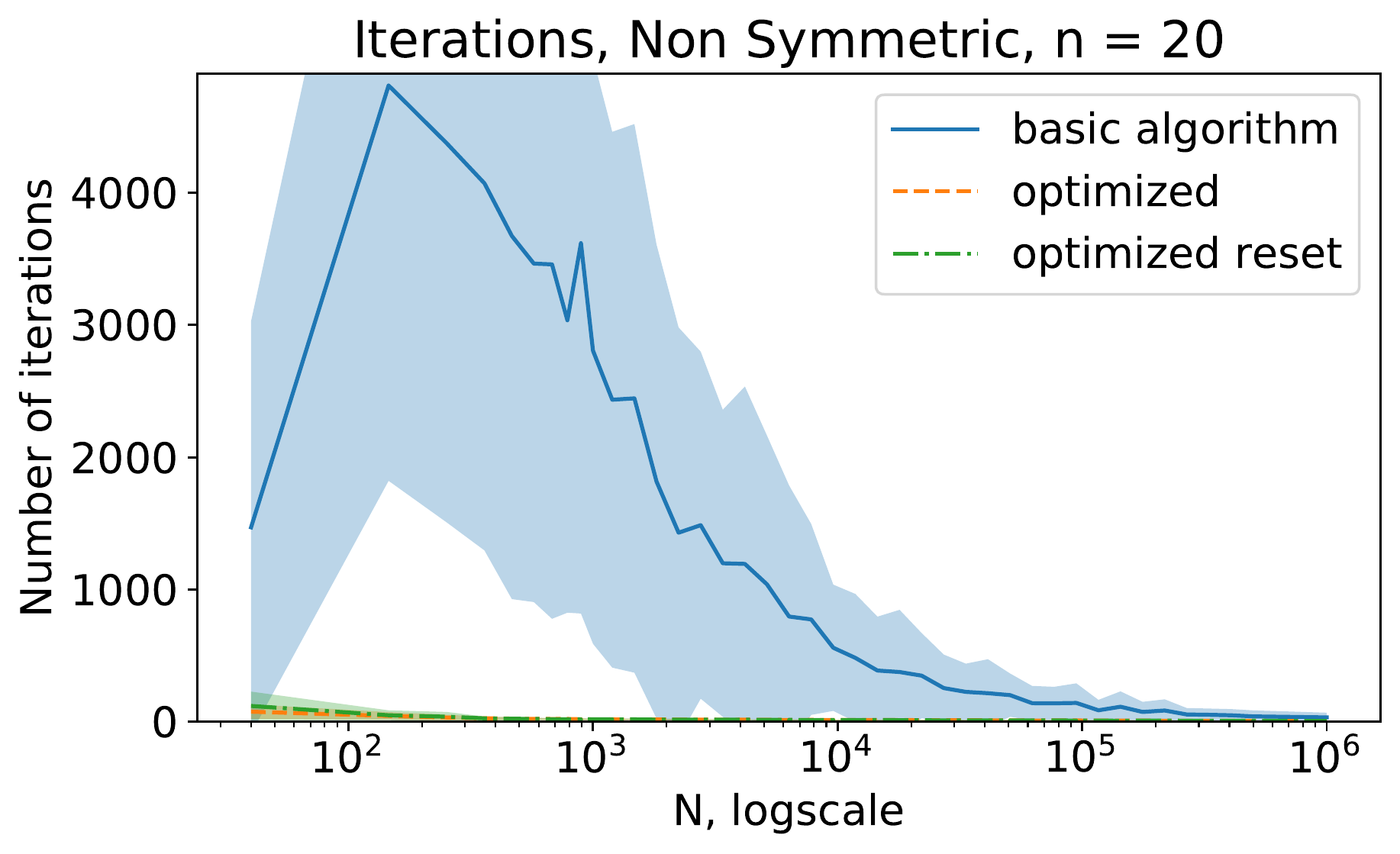}
        \includegraphics[height=2.8cm,width=0.32\textwidth]{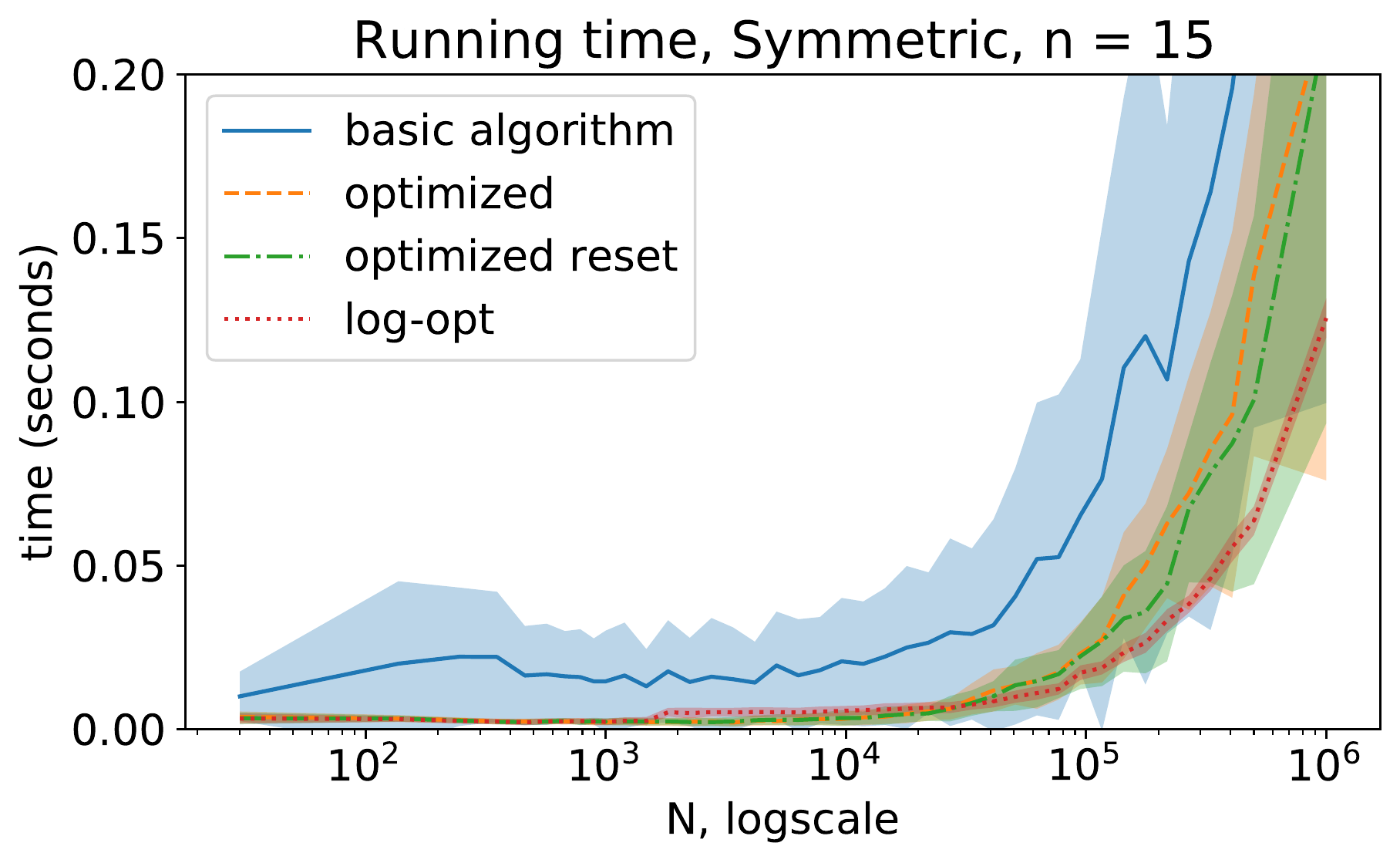}
        \includegraphics[height=2.8cm,width=0.32\textwidth]{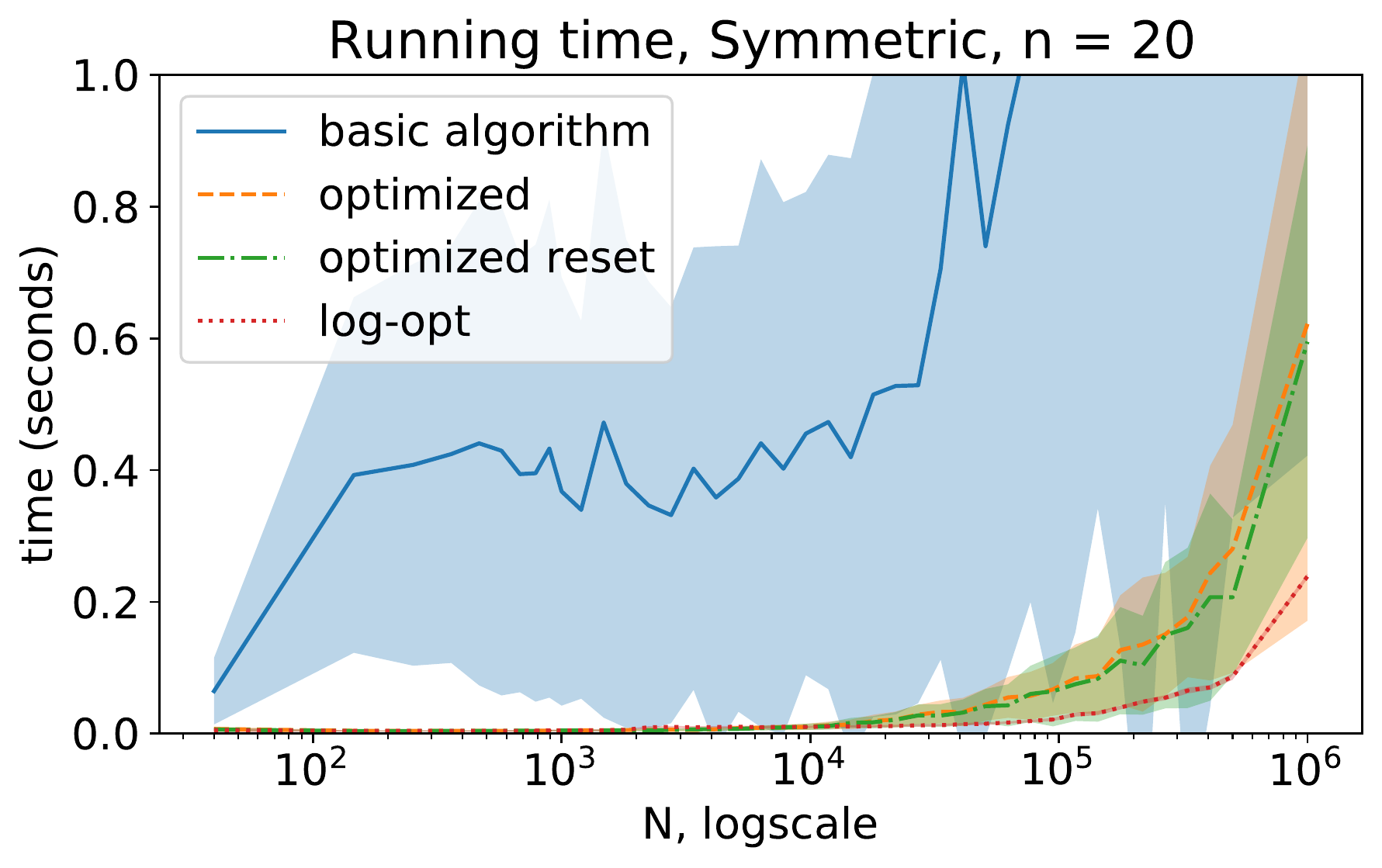}
        \includegraphics[height=2.8cm,width=0.32\textwidth]{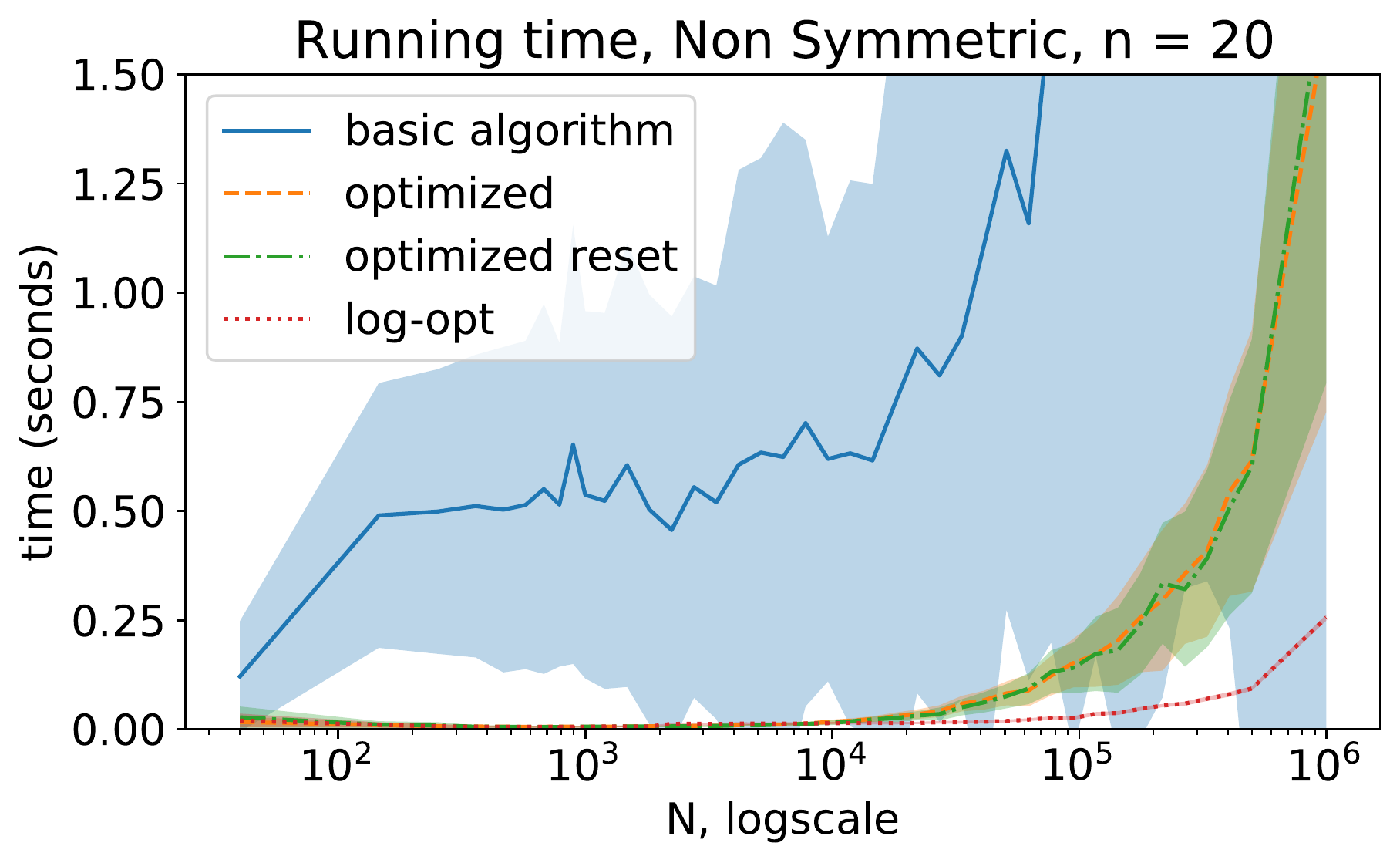}    
        \caption{
Running time and number of iterations of the randomized algorithms as $N$ varies. 
The first two columns show the results for \textit{symmetric1} and \textit{symmetric2} , the right column for \textit{non-symmetric}. 
The shaded area represents the standard deviation (from $70$ repetitions of the experiment).
}
    \label{fig:bVSo}
\end{figure}
\begin{figure}[hbt!]
    \centering
        \includegraphics[height=2.8cm,width=0.32\textwidth]{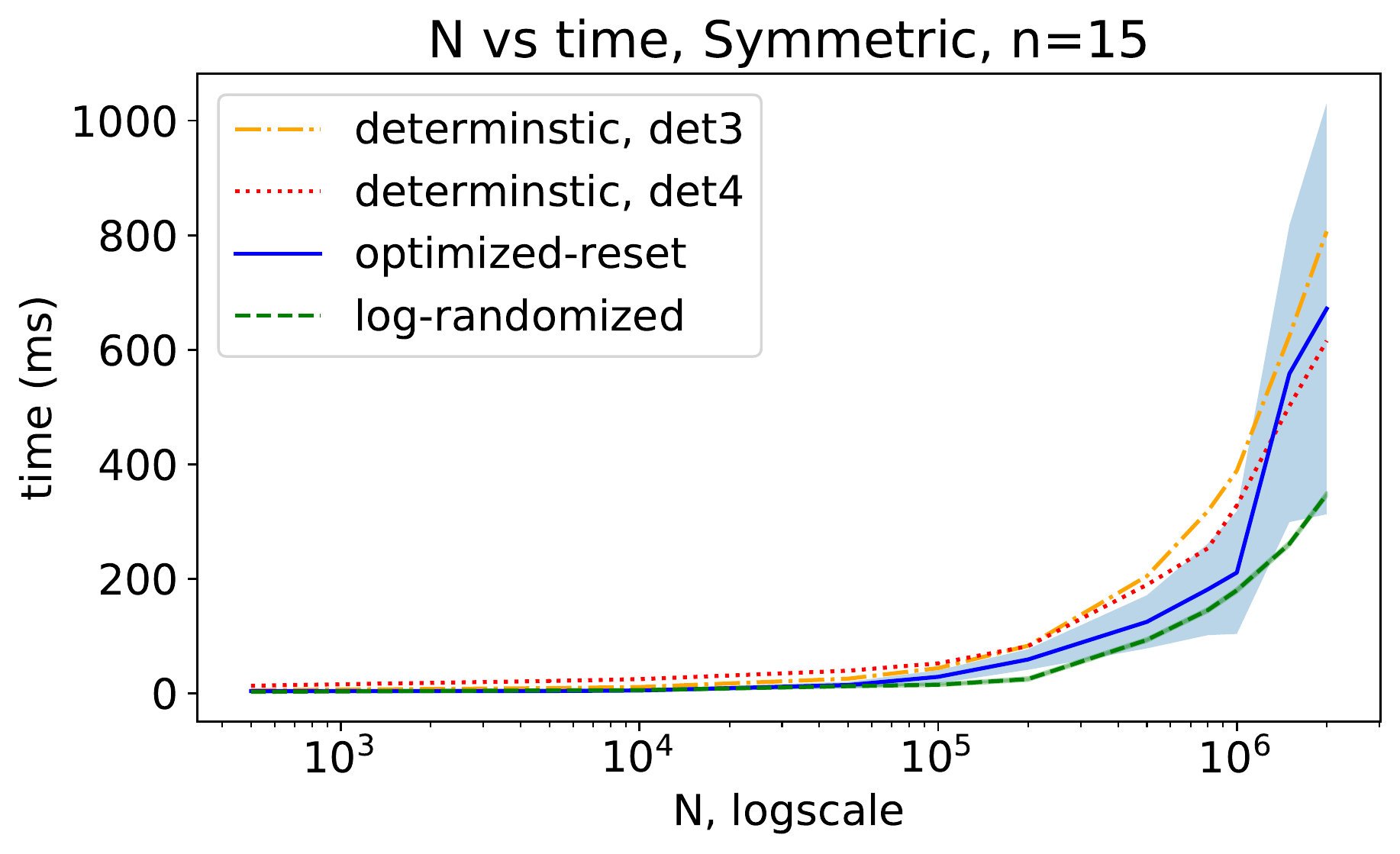}
        \includegraphics[height=2.8cm,width=0.32\textwidth]{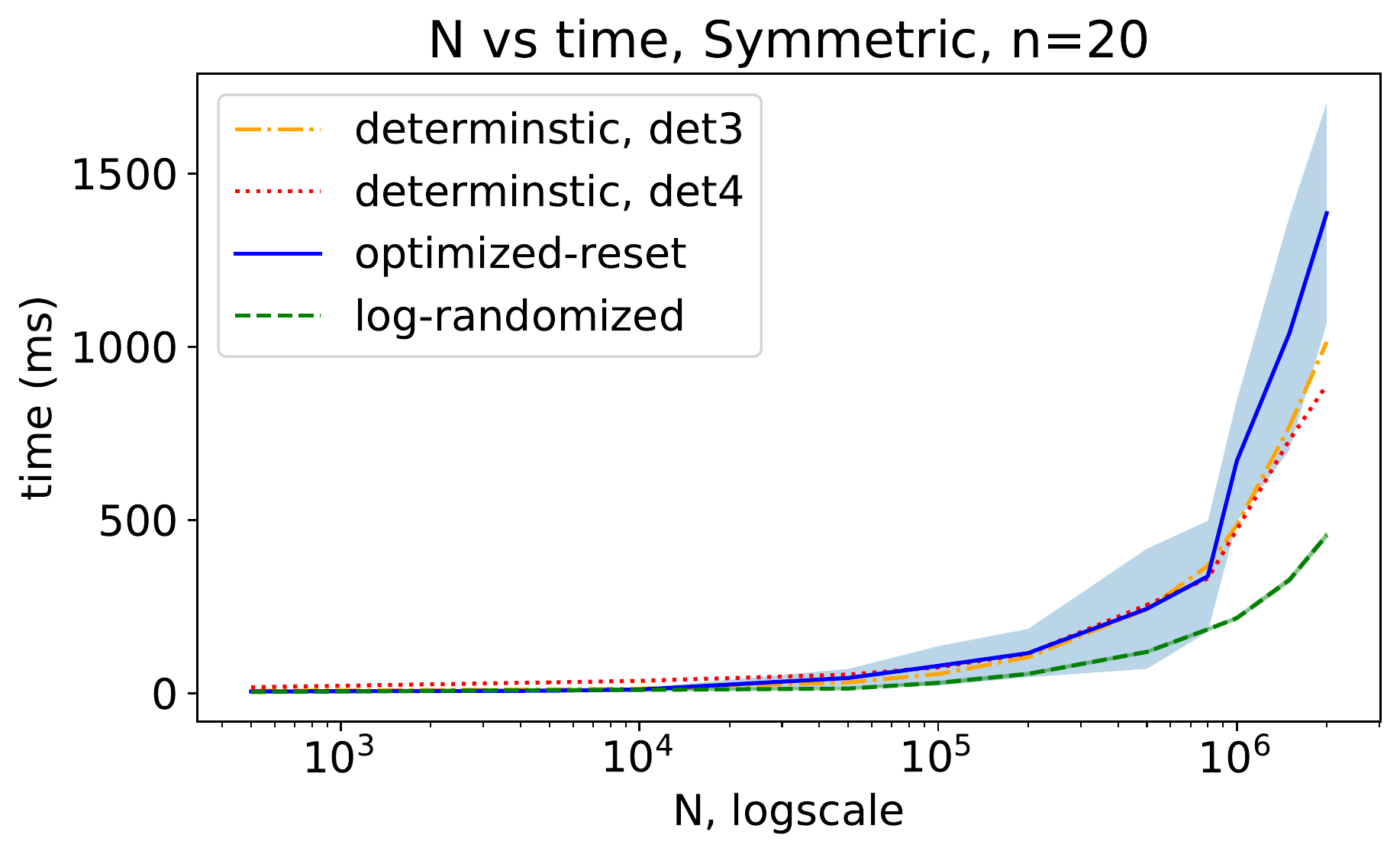}
        \includegraphics[height=2.8cm,width=0.32\textwidth]{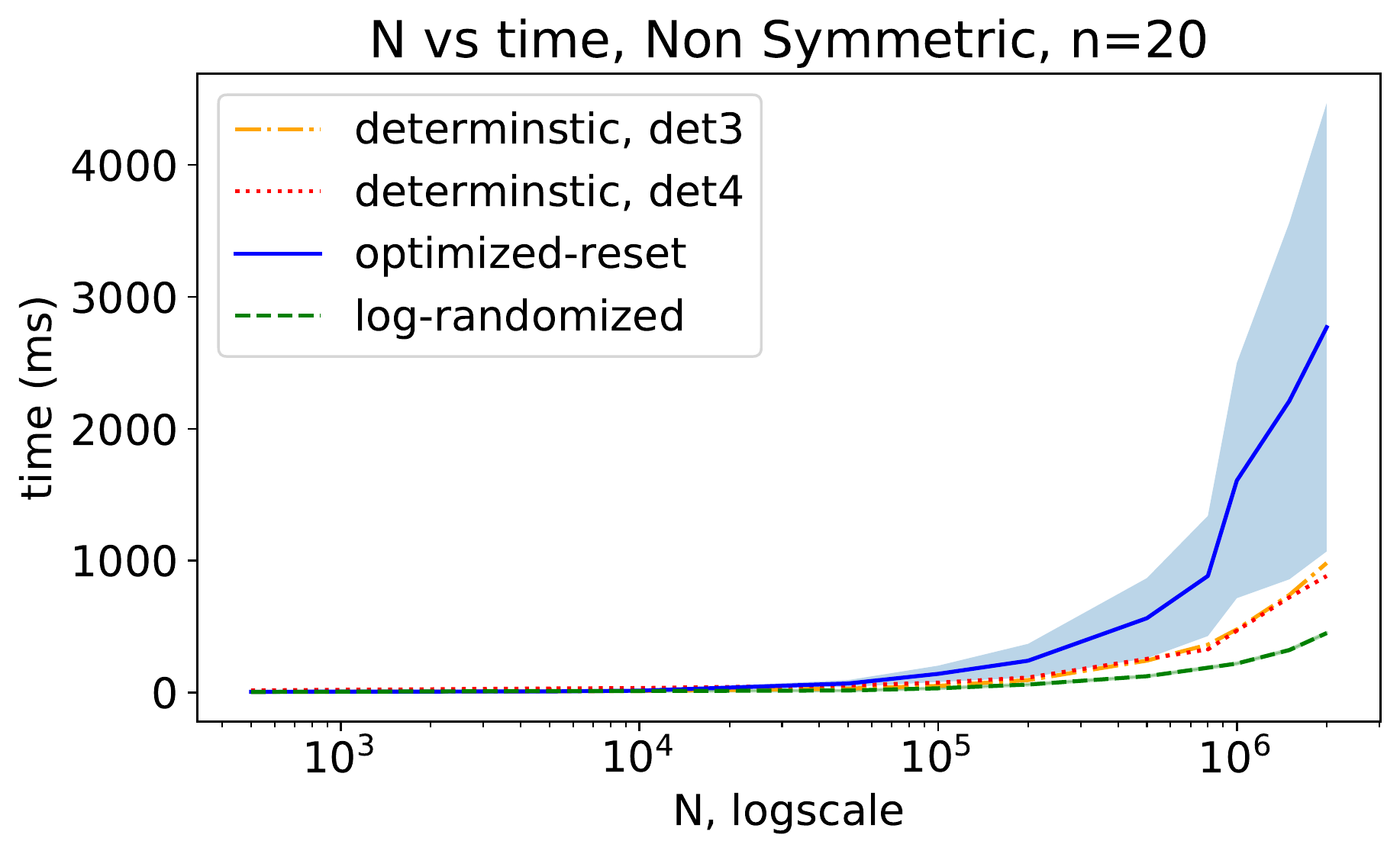}
        \includegraphics[height=2.8cm,width=0.32\textwidth]{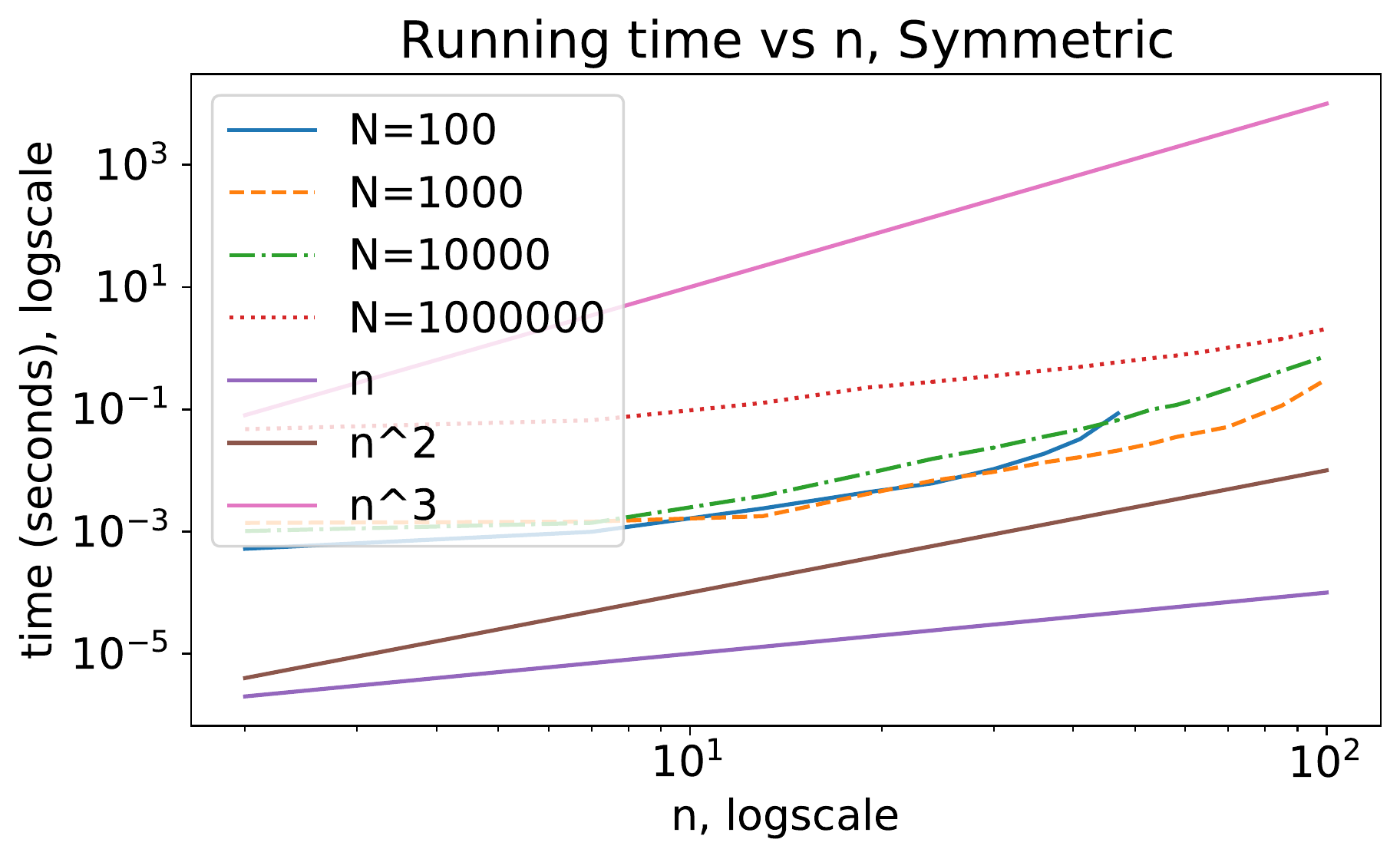}
        \includegraphics[height=2.8cm,width=0.32\textwidth]{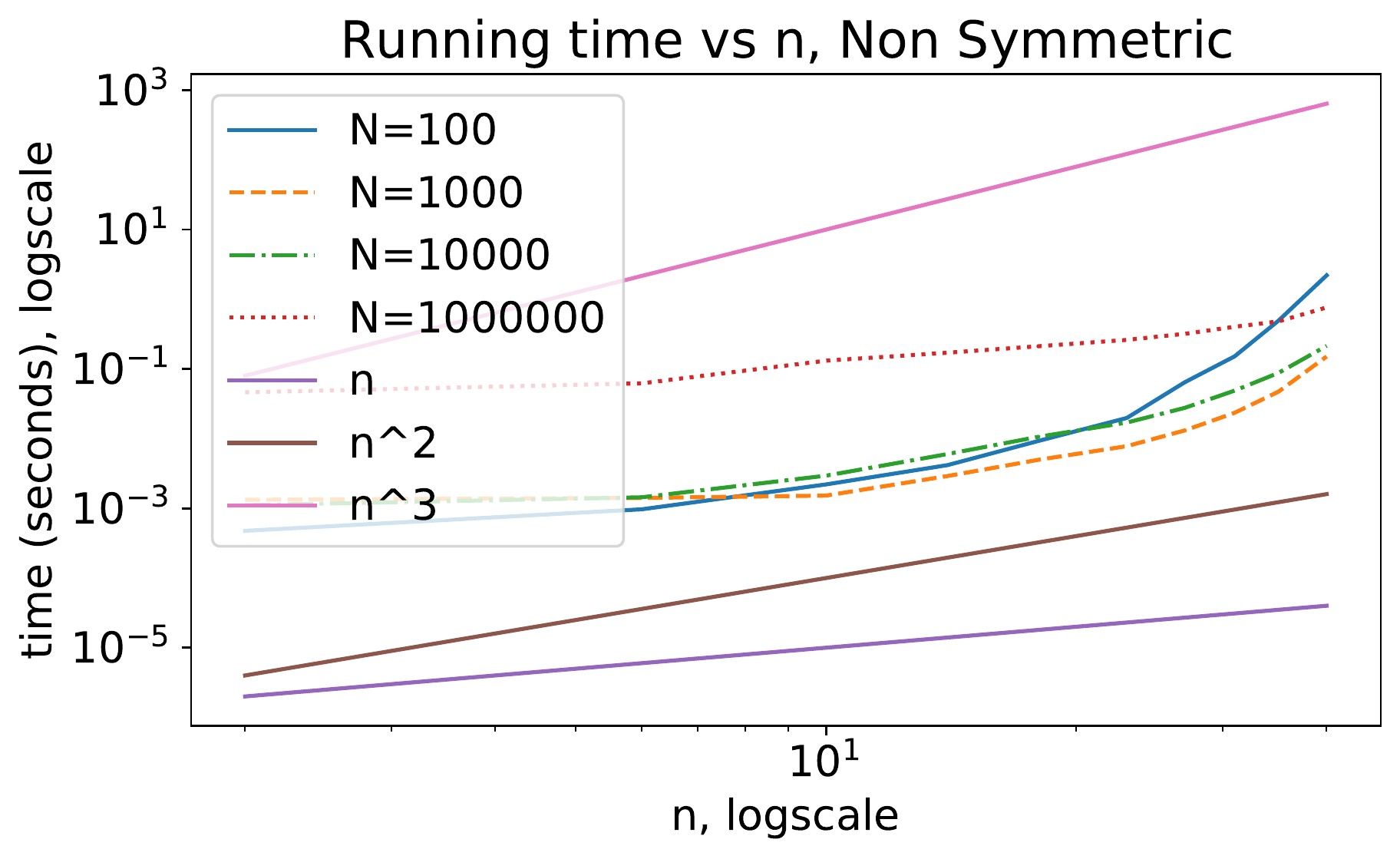}
        \caption{The top row compares the running time of randomized Algorithms against deterministic algorithms as $N$ varies for \textit{symmetric1} (left), \textit{symmetric2} (middle) and \textit{non-symmetric} (right).
        The shaded area represents the standard deviation (from $70$ repetitions of the experiment).
        The bottom row running time of the \textit{log-optimized} algorithm as $n$ varies for different $N$ (average of $70$ samples).}
    \label{fig:Synthetic}
\end{figure}

As expected, the regime when the number of points $N$ is much higher than the number of dimensions $n$, yields the best performance for the randomized algorithm; moreover, Figure \ref{fig:Synthetic} shows that the run time is approximately ${O}(n)$ (in contrast to the runtime of \textit{det4} and \textit{det3} that is $O(n^4)$ resp. $O(n^3)$).
\paragraph{Fast mean-square solvers.}\label{sec:comparison}
In \cite{Maalouf2019} a measure reduction was used to accelerate the least squares method, i.e.~the solution of the minimization problem $ \min_w\|\bX w-\bY\|^2 $ where $\bX\in\R^{N\times d}$ and $\bY\in\R^{N}$; as in Proposition~\ref{th:robust}, $\bX$ denotes a matrix which has as row vectors the elements of $\bx$, similarly for $\bY$.
Sometimes precise solutions are required and in this case, the measure reduction approach yields a scalable method: 
Theorem \ref{th:cath} guarantees the existence of a subset of $n+1=(d+1)(d+2)/2+1$ points $(\bx^\star,\by^\star)$ of $\bx$ and $\by$ such that 
$(\bX|\bY)^\top\cdot (\bX|\bY) = (\bX^\star|\bY^\star)^\top\cdot(\bX^\star|\bY^\star)$ where we denote with $(\bX|\bY)$ the element of $\R^{N\times (d+1)}$ formed by adding $\bY$ as a column.
However, this implies that $ \|\bX w-\bY\|^2 = \|\bX^\star w-\bY^\star\|^2$ for every $w$, hence it is sufficient to solve the least square problem in much lower dimensions once $\bX^\star$ and $\bY^\star$ have been found. 
We use the following datasets from \cite{Maalouf2019}%
\begin{enumerate*}[label=(\roman*)]
\item\label{itm:roads} 3D Road Network \cite{3ddata} that contains $434874$ records and use the two attributes longitude and latitude to predict height,
\item\label{itm:household} Household power consumption \cite{electrdata} that contains $2075259$ records and use the two 2 attributes active and reactive power to predict voltage.
We also add a synthetic dataset, 
\item\label{itm:exp_synt} where $\bX\in \R^{N\times n}$, $\theta \in \R^{n}$ and $\epsilon \in \R^{N}$ are normal random variables, and $\bY= \bX\theta + \epsilon$ which allows to study various regimes of $N$ and $n$. %
\end{enumerate*}
Figure~\ref{fig:comparison_real_dataset} shows the performance of Algorithm \ref{euclid_opt} with the Las Vegas reset, with the Las Vegas reset and the divide and conquer optimization on the datasets~\ref{itm:roads},\ref{itm:household}.
We observe that already Algorithm \ref{euclid_opt} with Las Vegas resets is on average faster but the running time distribution has outliers where the algorithm takes longer than for the deterministic run time algorithms; combined with divide and conquer the variance is reduced by a lot. 
Figure~\ref{fig:Synthetic_cov} shows the results on the synthetic dataset~\ref{itm:exp_synt} for various values of $N$ and $n=(d+1)(d+2)/2$. %
 \begin{figure}[tbh!]
    \centering
        \includegraphics[height=2.8cm,width=0.48\textwidth]{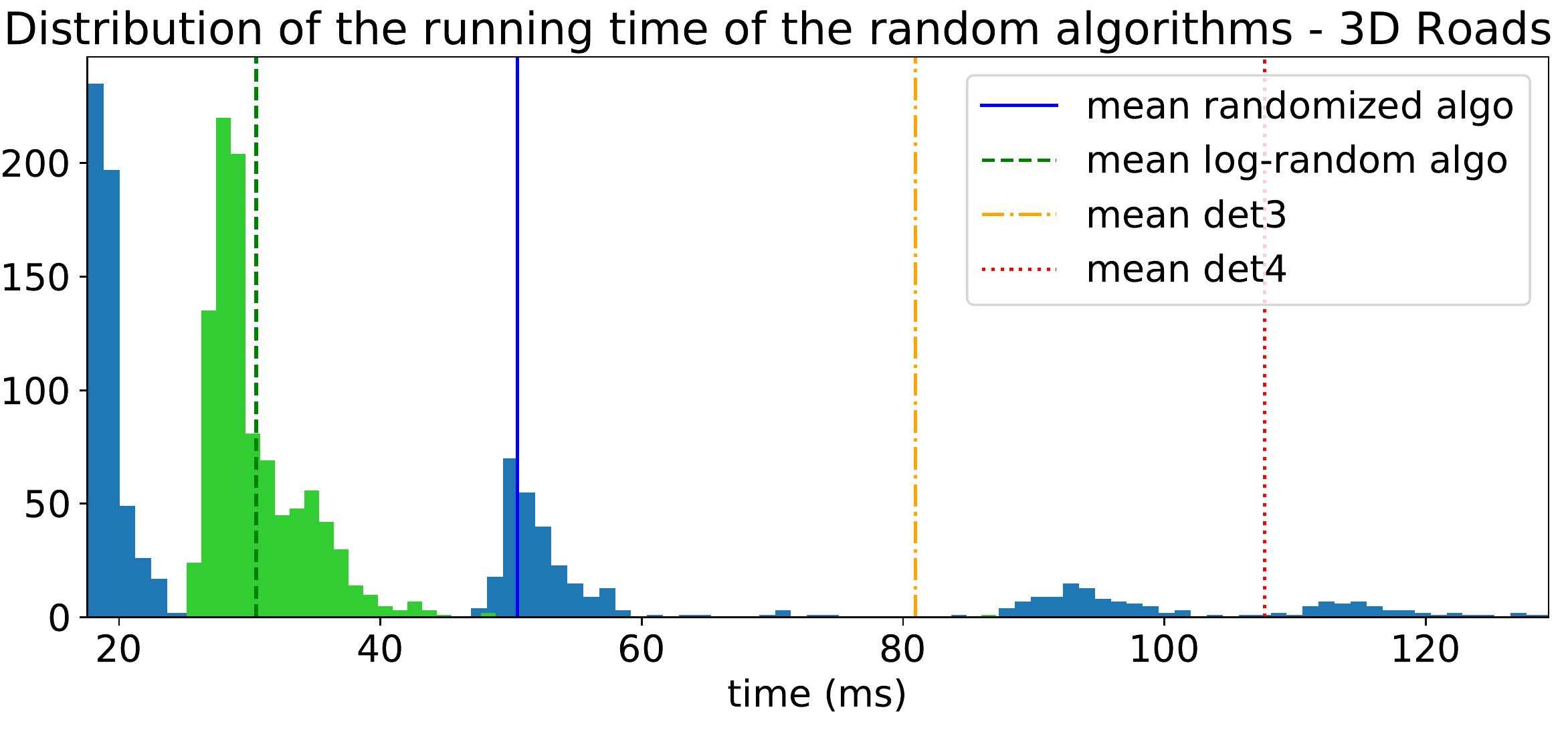}
        \includegraphics[height=2.85cm,width=0.48\textwidth]{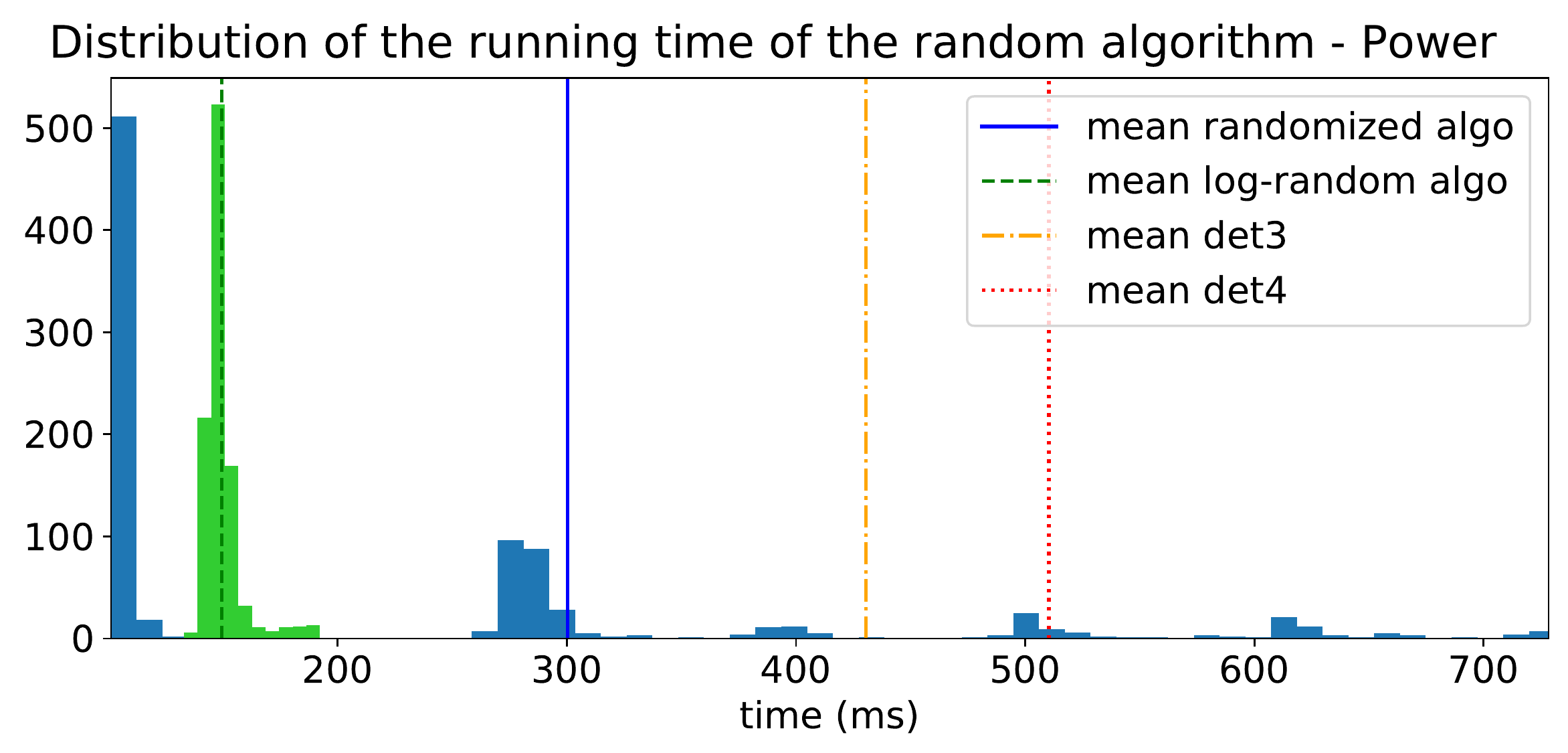}
        \caption{Histogram of the running time of Algorithm~\ref{euclid_opt} with a reset strategy and of the ``divide and conquer'' variation algorithm (\textit{log-random}).
          The vast majority of probability mass of the random runtime is below any of runtimes of the deterministic algorithms although with small probability it can take longer than the deterministic runtimes.} 
  	
    \label{fig:comparison_real_dataset}
\end{figure}
\begin{figure}[tbh!]
        \centering
        \includegraphics[height=2.8cm,width=0.3\textwidth]{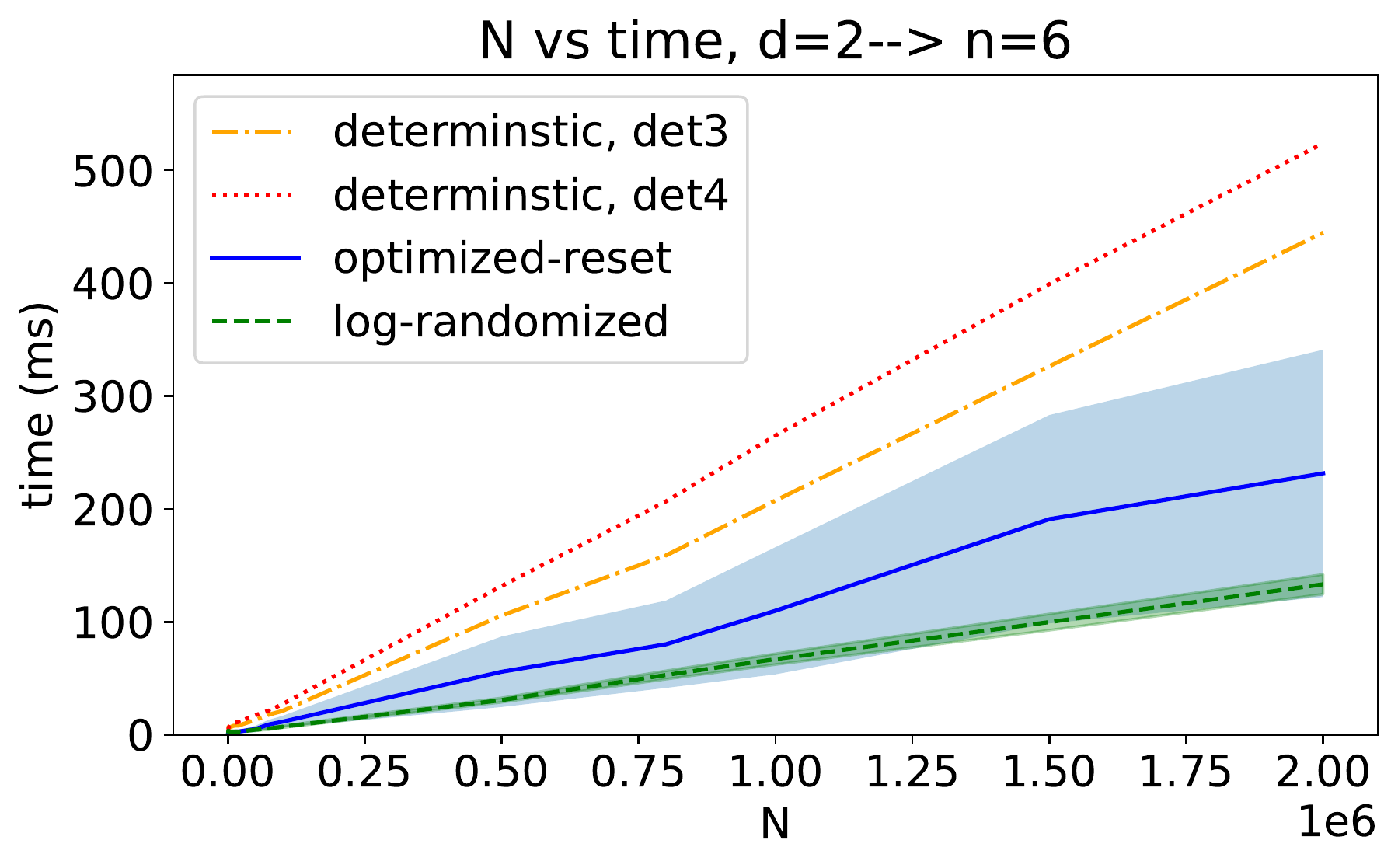}
        \includegraphics[height=2.8cm,width=0.3\textwidth]{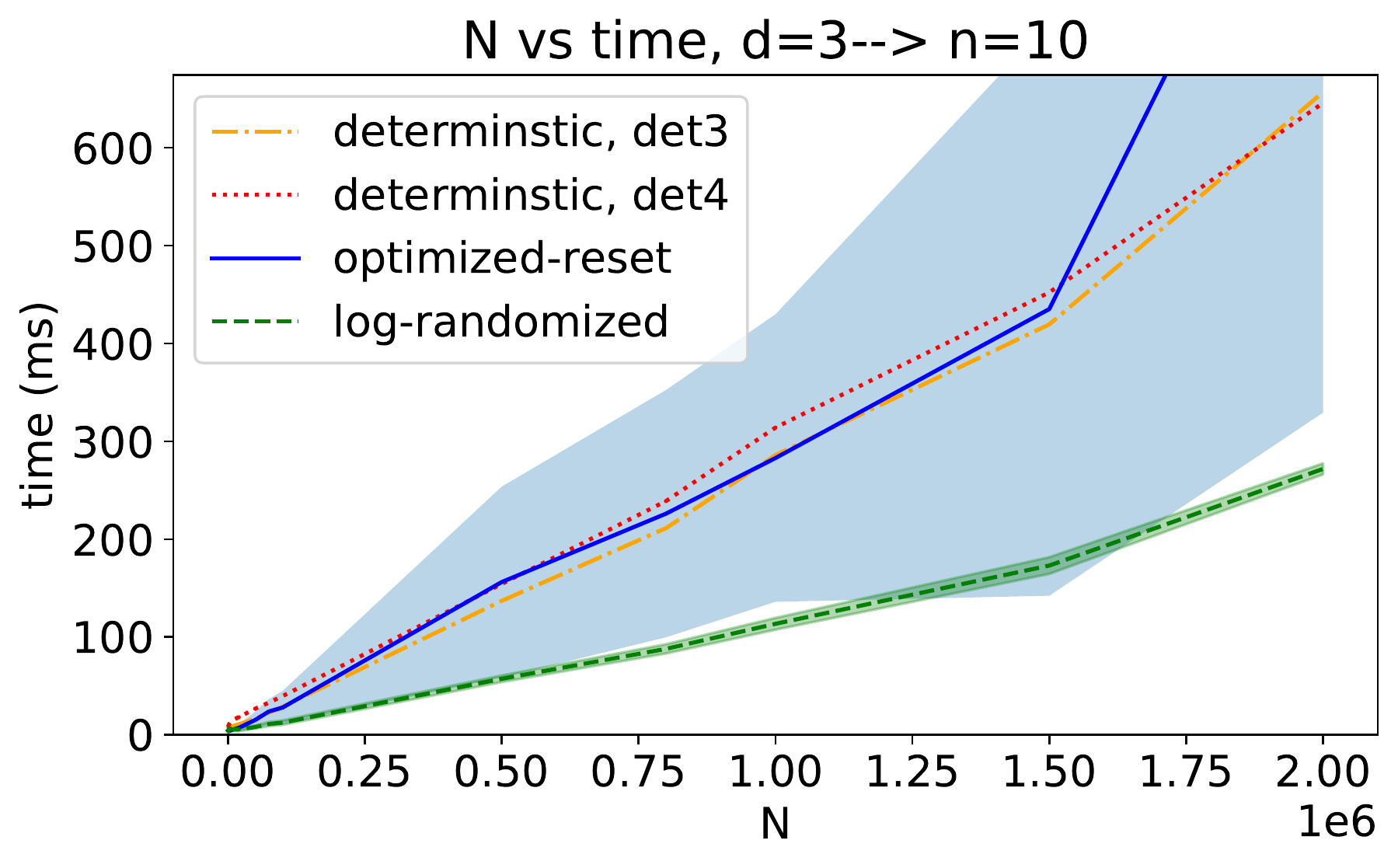}
        \includegraphics[height=2.8cm,width=0.3\textwidth]{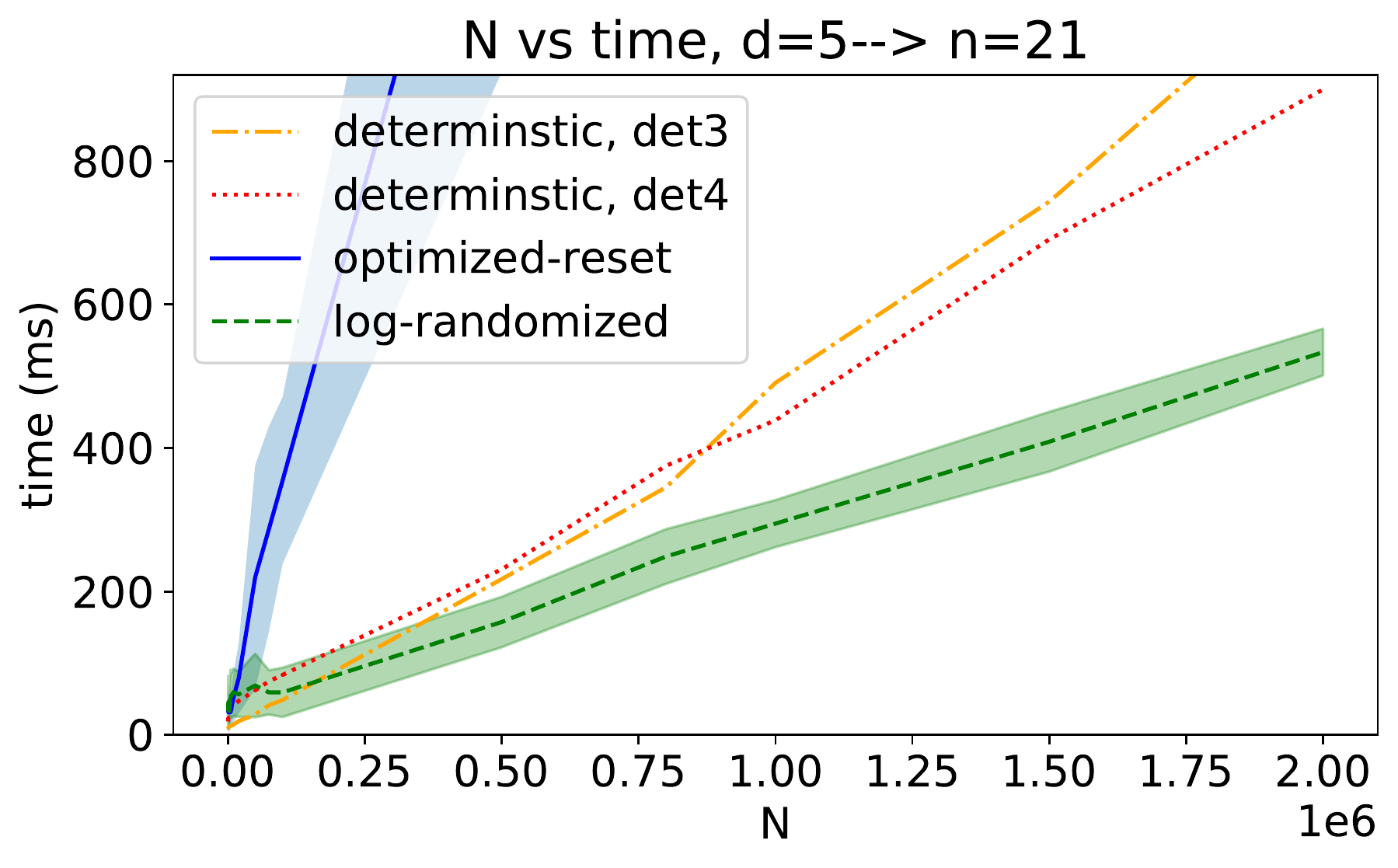}
    \caption{Performance of the different algorithms on the synthetic data set \ref{itm:exp_synt} for various values of $N$ and $n=(d+1)(d+2)/2$.}
    \label{fig:Synthetic_cov}
\end{figure}
\paragraph{Breaking the randomized Algorithm.}
As the above experiments show, Algorithm~\ref{euclid_opt} can lead to big speed ups. 
However, it is not uniformly better than the deterministic algorithms, and there are situations where one should not use it: firstly, Algorithm 2 was optimized to work well in the $N \gg n$ regime and while this is an important regime for data science, the recombination problem itself is of interest also in other regimes \cite{Litterer2012}. %
Secondly, the Las Vegas resets give a finite running time, but it is easy to construct examples where this can be much worse than the deterministic algorithms. Arguably the most practically relevant issue is when the independence hypothesis of Theorem \ref{th:main} is not satisfied. 
This can appear in data sets with a high number of highly correlated categorical features, such as \cite{housesalesdata}. 
This can be overcome by using the Weyl Theorem, {see Remark~\ref{rem:drawbacks} in Appendix~\ref{app:algo 1}} but the computational cost 
is higher than computing {the inverse of the cone basis ($A$ in Theorem~\ref{th:main})} and the benefits would be marginal, if not annulled, compared to the deterministic algorithms.
More relevant is that Algorithm~\ref{euclid_opt} can be easily combined with any of the deterministic algorithms to build an algorithm that has a worst case run time of the same order as a deterministic one but has a good chance of being faster; see Appendix~\ref{sec:combine} for details and experiments.

\section{Summary}
We introduced a randomized algorithm that reduces the support of a discrete measure supported on $N$ atoms down to $n+1$ atoms while preserving the statistics as captured with $n$ functions.
The key was a characterization of the barycenter in terms of negative cones, that inspired a greedy sampling.
Motivated by the geometry of cones this greedy sampling can be optimized, and finally combined with optimization methods for randomized algorithms.     
This yields a ``greedy geometric sampling'' that follows a very different strategy than the previous deterministic algorithms, and that performs very well in the big data regime when $N \gg n$ as is often the case for large sample sizes common in inference tasks such as least square solvers.

\clearpage

\section*{Broader Impact}
The authors do not think this section is applicable to the present work, this work does not present any foreseeable societal consequence.

\section*{Acknowledgements and Disclosure of Funding}
The authors want to thank The Alan Turing Institute and the University of Oxford for the financial support given. FC is supported by The Alan Turing Institute, TU/C/000021, under the EPSRC Grant No. EP/N510129/1. HO is supported by the EPSRC grant ``Datasig'' [EP/S026347/1], The Alan Turing Institute, and the Oxford-Man Institute.

\addcontentsline{toc}{section}{References}
\bibliographystyle{unsrt}
\bibliography{Biblio}

\begin{thebibliography}{10}

\bibitem{Tchak}
V.~Tchakaloff.
\newblock Formules de cubature m\'ecanique \`a coefficients non n\'egatifs.
\newblock {\em Bulletin des Sciences Math\'ematiques}, 81:123--134, 1957.

\bibitem{Bayer2006}
Christian Bayer and Josef Teichmann.
\newblock The proof of tchakaloff’s theorem.
\newblock {\em Proceedings of the American Mathematical Society},
  134(10):3035--3041, oct 2006.

\bibitem{Davis1967}
Philip~J. Davis.
\newblock A construction of nonnegative approximate quadratures.
\newblock {\em Mathematics of Computation}, 21:578--582, 1967.

\bibitem{Litterer2012}
Christian Litterer, Terry Lyons, et~al.
\newblock High order recombination and an application to cubature on wiener
  space.
\newblock {\em The Annals of Applied Probability}, 22(4):1301--1327, 2012.

\bibitem{maria2016a}
Maria Tchernychova.
\newblock {\em Caratheodory cubature measures}.
\newblock PhD thesis, University of Oxford, 2016.

\bibitem{Maalouf2019}
Alaa Maalouf, Ibrahim Jubran, and Dan Feldman.
\newblock Fast and accurate least-mean-squares solvers.
\newblock In {\em Advances in Neural Information Processing Systems}, pages
  8305--8316, 2019.

\bibitem{piazzon2017caratheodory}
Federico Piazzon, Alvise Sommariva, and Marco Vianello.
\newblock Caratheodory-tchakaloff subsampling.
\newblock {\em Dolomites Research Notes on Approximation}, 10(1), 2017.

\bibitem{hayakawa2020monte}
Satoshi Hayakawa.
\newblock Monte carlo cubature construction.
\newblock {\em arXiv preprint arXiv:2001.00843}, 2020.

\bibitem{agarwal2005geometric}
Pankaj~K Agarwal, Sariel Har-Peled, and Kasturi~R Varadarajan.
\newblock Geometric approximation via coresets.
\newblock {\em Combinatorial and computational geometry}, 52:1--30, 2005.

\bibitem{phillips2016coresets}
Jeff~M Phillips.
\newblock Coresets and sketches.
\newblock {\em arXiv preprint arXiv:1601.00617}, 2016.

\bibitem{Huggins2016CoresetsFS}
Jonathan~H. Huggins, Trevor Campbell, and Tamara Broderick.
\newblock Coresets for scalable bayesian logistic regression.
\newblock {\em ArXiv}, abs/1605.06423, 2016.

\bibitem{Feldman2013TurningBD}
Dan Feldman, Melanie Schmidt, and Christian Sohler.
\newblock Turning big data into tiny data: Constant-size coresets for k-means,
  pca and projective clustering.
\newblock In {\em SODA}, 2013.

\bibitem{Cosentino2020a}
Francesco Cosentino, Harald Oberhauser, and Alessandro Abate.
\newblock Carathéodory sampling for stochastic gradient descent.
\newblock {\em arXiv preprint arXiv:2006.01819v2}, 2020.

\bibitem{Schneider2008}
Rolf Schneider and Wolfgang Weil.
\newblock {\em Stochastic and integral geometry}.
\newblock Probability and its Applications (New York). Springer-Verlag, Berlin,
  2008.

\bibitem{Sommerville1927}
D.~M.~Y. Sommerville.
\newblock The relations connecting the angle-sums and volume of a polytope in
  space of n dimensions.
\newblock {\em Proceedings of the Royal Society of London. Series A, Containing
  Papers of a Mathematical and Physical Character}, 115(770):103--119, 1927.

\bibitem{Bollobas1997}
B\'{e}la Bollob\'{a}s.
\newblock Volume estimates and rapid mixing.
\newblock In {\em Flavors of geometry}, volume~31 of {\em Math. Sci. Res. Inst.
  Publ.}, pages 151--182. Cambridge Univ. Press, Cambridge, 1997.

\bibitem{Clarkson2010}
Kenneth~L. Clarkson.
\newblock Coresets, sparse greedy approximation, and the {F}rank-{W}olfe
  algorithm.
\newblock {\em ACM Transactions on Algorithms}, 6(4):Art. 63, 30, 2010.

\bibitem{Aljundi2019}
Rahaf Aljundi, Min Lin, Baptiste Goujaud, and Yoshua Bengio.
\newblock Gradient based sample selection for online continual learning.
\newblock {\em Advances in Neural Information Processing Systems}, March 2019.

\bibitem{Regis2016}
Rommel~G. Regis.
\newblock On the properties of positive spanning sets and positive bases.
\newblock {\em Optimization and Engineering. International Multidisciplinary
  Journal to Promote Optimization Theory \& Applications in Engineering
  Sciences}, 17(1):229--262, 2016.

\bibitem{Luby1993}
Michael Luby, Alistair Sinclair, and David Zuckerman.
\newblock Optimal speedup of {L}as {V}egas algorithms.
\newblock {\em Information Processing Letters}, 47(4):173--180, 1993.

\bibitem{3ddata}
3d road network, north jutland, denmark.
\newblock
  \href{https://archive.ics.uci.edu/ml/datasets/3D+Road+Network+(North+Jutland,+Denmark)}{https://archive.ics.uci.edu/ml/datasets/}.

\bibitem{electrdata}
Individual household electric power consumption.
\newblock
  \href{https://archive.ics.uci.edu/ml/datasets/individual+household+electric+power+consumption}{https://archive.ics.uci.edu/ml/datasets/}.

\bibitem{housesalesdata}
House sales in king county, usa.
\newblock
  \href{https://www.kaggle.com/harlfoxem/housesalesprediction}{https://www.kaggle.com/harlfoxem/housesalesprediction}.

\bibitem{Ziegler1995}
G\"{u}nter~M. Ziegler.
\newblock {\em Lectures on polytopes}, volume 152 of {\em Graduate Texts in
  Mathematics}.
\newblock Springer-Verlag, New York, 1995.

\bibitem{Wendel1962}
James~G Wendel.
\newblock A problem in geometric probability.
\newblock {\em Math. Scand}, 11:109--111, 1962.

\end{thebibliography}

\newpage
\appendix

\section{Properties of Algorithm~\ref{euclid}}\label{app:algo 1}

\paragraph{Background on polytopes.}
To prepare the proof of Theorem \ref{th:main} we want to recall that a well-known tool from discrete geometry, is that polygons and polyhedral descriptions are equivalent.
That is $C(\bx)$ is an affine span of the vectors $\bx$ as in Definition~\ref{def:cone}, but equivalently $C(\bx)$ is the intersection of hyperplanes.
In general this is the content of the celebrated Weyl--Minkowksi theorem and computing one representation from the other is non-trivial, see~\cite{Ziegler1995}.
However, when restricted to $n$ generic vectors in $\R^n$ (as is the case required in Theorem~\ref{th:main}), one can immediately switch from one to the other, see item \ref{itm: A and H >} of Theorem \ref{th:main}. 
\paragraph{Proofs of Theorem~\ref{th:main} and Proposition~\ref{prop:worst}}
\thmmain*
\begin{proof}
For item~\ref{itm:weyl} first note that the $h_i$ are well-defined since any set of $n-1$ independent points determines a hyperplane that includes $0$ and that divides $\R^n$ into two parts.
Each of these two parts is of the form $\{x: \langle h,x \rangle\le 0\}$ or $\{x: \langle h,x \rangle>0\}$ and the additional condition $\langle h,x_i \rangle<0$ selects one of the two parts.
Now let $c=\sum_{x \in \bx\setminus\{x_{n+1}\}} w_x x $ be a general vector.
Since $H_{\bx\setminus \{x_{n+1}\}} x \le 0$, for $x\in \bx\setminus\{x_{n+1}\}$, it follows that $H_{\bx\setminus \{x_{n+1}\}} c = \sum_{x \in \bx\setminus\{x_{n+1}\}} w_x H_{\bx\setminus\{x_{n+1}\}} x$ satisfies $H_{\bx\setminus\{x_{n+1}\}} c\leq 0$ if and only if the $w_x$, for $x \in \bx\setminus\{x_{n+1}\}$, are positive.

For item \ref{itm: A and H >}, we can write $x=\sum_{i=1}^n w_ix_i$, for some $w_i\in\R$, hence $Ax = (w_1,\ldots,w_n)^\top $.
  By definition 
  \[ 
  H_{\bx\setminus \{x_{n+1}\}} x = \sum w_i H_{\bx\setminus \{x_{n+1}\}}x_i= \sum w_i (\langle h_1, x_i\rangle  ,\ldots, \langle h_n , x_i \rangle)^\top= (w_1\langle h_1, x_1\rangle,\ldots, w_n\langle h_n, x_n\rangle)^\top.
  \]
  Note that $\langle h_i, x_i\rangle \le 0 $ by definition, therefore $\sign\{ -Ax \}  =\sign\{ H_{\bx\setminus\{x_{n+1}\}} x  \}$.
The statement with the reversed inequalities follows similarly. 

For item~\ref{itm: barycenter}$(\Rightarrow)$ assume there exists a convex combination of $\bx$, this means that $x_{n+1}=-\frac{1}{w_{n+1}}\sum_{i=1}^n w_i x_i$, and 
\[ Ax_{n+1}=-\frac{1}{w_{n+1}}\sum_{i=1}^n w_i A x_i= \sum_{i=1}^n -\frac{w_i}{w_{n+1}} e_i.\]
Therefore, $ Ax_{n+1}\leq 0$ which is by item \ref{itm: A and H >} equivalent to $H_{\bx\setminus \{x_{n+1}\}} x \geq 0$. Thus, $x_{n+1} \in C^-(\bx \setminus \{x_{n+1}\})$.
Finally, for item~\ref{itm: barycenter}$(\Leftarrow)$ assume that $x_{n+1} \in C^-(\bx \setminus \{x_{n+1}\})$.
The by item \ref{itm: A and H >} $ Ax_{n+1}\leq 0$.
Moreover, $\exists \lambda_i\in\R$ such that $x_{n+1}=\sum_{i=1}^n \lambda_i x_i$, therefore  $ Ax_{n+1}= \sum_{i=1}^n \lambda_i e_i=(\lambda_1, \ldots, \lambda_n)^\top\leq 0.$
Let us call $\lambda^*:=1-\sum_{i=1}^n \lambda_i$, by the decomposition of $x_{n+1}$ we know that 
\[\frac{1}{\lambda^*}x_{n+1}+\sum_{i=1}^n \frac{-\lambda_i}{\lambda^*} x_i =0 \text{ and } \frac{1}{\lambda^*}+\sum_{i=1}^n \frac{-\lambda_i}{\lambda^*}=1 \text{ and } \frac{1}{\lambda^*}, \frac{-\lambda_i}{\lambda^*}\geq 0.\]
\end{proof}
\begin{remark}\label{rem:drawbacks}
The assumption that $ \{\bx\setminus \{x_{n+1}\}\}$ spans $\R^n$ can be relaxed, indeed item~\ref{itm:weyl} is a particular case of the Weyl's Theorem, which briefly does not require the independence of the cone basis. \\
From an implementation point of view, however, item~\ref{itm: A and H >} gives an important boost, indeed the computation of $H_{\bx\setminus\{x_{n+1}\}}$ is heavier than inverting a matrix, i.e. computing $A$, since it requires the computation of the coefficients of $n$ different hyperplanes in $\R^n$. Moreover, speaking about the greedy searching strategy of Algorithm~\ref{euclid_opt}, using $H_{\bx\setminus\{x_{n+1}\}}$ in place of $A$ does not allow the use of the Sherman–Morrison formula weighing even more on the total computational cost.
\end{remark}
\propworst*
\begin{proof}
  Note that in every run through the loop, $n$ points are randomly selected.
  However, by Theorem~\ref{th:main}, Algorithm~\ref{euclid} finishes when the event 
\begin{align}\label{eq:event}
A := \{&\text{for $n$ uniformly at random chosen points $\bx^\star$ from $\bx$, }\exists x\in \bx\,\, s.t. \,\,x\in C^-(\bx^\star) \}.
\end{align}
occurs. 
Combined with Tchakaloff's Theorem guarantees this shows that there exists at least one set of $n$ points $\bx^\star$ such that $C^-(\bx^\star)\neq \emptyset$ and therefore
\begin{align}
\Prob(A) \geq \frac{{n+1\choose n}}{{N\choose n}} = \frac{n\cdot n!(N-n)! }{N!},
\end{align}
By independence, $\tau$ can be modelled by a geometric distribution with parameter $p = \Prob(A)$
\[
  \Prob(\tau=k) = (1-p)^{k-1}p 
\]  
and the bounds for mean and variance follow.\\
\end{proof}
\section{Properties when applied to special measures}\label{ap:empirical}
\paragraph{Proof of Proposition~\ref{prop:empirical}}
\propcomplexity*
\begin{proof}
As in Proposition \ref{prop:worst}, the algorithm terminates when the event 
\begin{align}
A:=\{\text{for $n$ uniformly at random chosen points $\bx^\star$ from $\bx$, }\exists x\in \bx\,\, s.t. \,\,x\in C^-(\bx^\star)\}
\end{align}
happens, where $\bx=\{F(X_1), F(X_2),\ldots,F(X_N)\}$.\\
For item~\ref{itm:tau general} denote $F_i:   =(f_1(X_{I_i}),\ldots,f_n(X_{I_i}))$ where $\{I_1,\ldots,I_N\}$ is a uniform shuffle of $\{1,\ldots,N\}$, i.e. a random permutation of its elements that makes every rearrangement equally probable,
  then
  \begin{align}
    A=\{\exists i\in \{n+1,\ldots,N\} \text{ s.t. } F_i \in C^{-}(F_1,\ldots,F_n)\}
  \end{align}
  and note that 
\begin{align}
  \Prob(A|& E)= \frac{\Prob(E| A) \Prob(A)}{\Prob(E)} \ge \Prob{(A)}
\end{align} 
since by Theorem \ref{th:main} $\Prob{(E|A)} = 1$  
and $\Prob(E)>0$ since $N\geq n+1$ and $\EE F(X)=0$.
The estimate of Proposition \ref{prop:worst} $ \Prob(E|0\in \operatorname{Conv}\{F_i\}) \geq \frac{n\cdot n!(N-n)!}{N!} $ is still valid, moreover
\begin{align}
 \Prob(A|E)\geq\Prob(A)=&\Prob(\exists i\in\{n+1,\ldots,N\}\text{ such that }F_{i}\in C^{-}(F_{1},\ldots,F_{n}))\\=&1-\prod_{j=n+1}^{N}\Prob(F_{j}\notin C^{-}(F_{1},\ldots,F_{n}))\\=&1-\Prob(F_{n+1}\notin C^{-}(F_{1},\ldots,F_{n}))^{N-n}\\=&1-\Prob(0\notin\operatorname{Conv}(F_{1},\ldots,F_{n+1}))^{N-n}
\end{align}
where the last equality follows from Theorem~\ref{th:main}.
We have therefore two different bounds for $\Prob(A|E)$, so we can take the maximum, i.e. 
\begin{align}
\Prob(A|E)\geq \max \left\{ \frac{n\cdot n!(N-n)!}{N!},1-\Prob\left(0\not\in\operatorname{Conv}\{F_{1},\ldots,F_{n+1}\}\right)^{N-n} \right\}
\end{align}
Item \ref{itm:tau symm}. In \cite{Wendel1962} the author shows that when the $F_i$ are distributed uniformly randomly on the unit sphere, then $\Prob(0 \notin \operatorname{Conv}(F_1,\ldots,F_{n+1}))^{N-n} = \left(1-2^{-n}\right)^{N-n}$. In \cite{Schneider2008}[Theorem 8.2.1] it is shown the same result, for all the symmetric distributions with respect to 0.\\
Now $\tau$ can be modelled by a geometric distribution with parameter $p=\Prob(A|E)$, i.e. $\Prob(\tau =i ) \ge (1-p)^{i-1}p$ and the mean and variance follows.  \\
For item \ref{itm:N_to_infty}, it is enough to show $\Prob(A)\to 1$:
\begin{align}
\Prob(A)=\Prob(A\big|E)\times\Prob(E)+\Prob\left(A\big|E^{C}\right)\times\Prob\left(E^{C}\right)=&\Prob(A\big|E)\times\Prob(E)+0\times\Prob\left(E^{C}\right)\to1\times1,
\end{align}
as $N\to\infty$, where $\Prob\left(A\big|E^{C}\right)=0$ is due to Theorem \ref{th:main}, the convergence $\Prob(E)\to 1$ is guaranteed by Theorem \ref{th:convergence_CH} and the convergence $\Prob(A\big|E)\to 1$, is guaranteed by the proof of item \ref{itm:tau general} for fixed $n$.
\end{proof}
As intuition suggest, the event that the mean is included in the convex hull occurs almost surely. 
\begin{theorem}[\cite{hayakawa2020monte}]\label{th:convergence_CH}
  Let $X_1,\ldots, X_N$ be i.i.d.~samples from a random variable $X$ that has a first moment $\EE[X]<\infty$.
  Then
$
\Prob(\EE[X] \in \operatorname{Conv}\{X_i\}_{i=1}^N)\to 1,\,\,\text{ as } N\to\infty
$.
\end{theorem}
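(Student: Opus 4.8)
The plan is to reduce to the case $\EE[X]=0$ (replace $X$ by $X-\EE[X]$, which leaves both the event and the i.i.d.\ structure unchanged) and then to show that with probability tending to one the samples already contain a finite sub-configuration whose convex hull traps the origin in its relative interior. Since every sample lies almost surely in $V:=\operatorname{aff}(\operatorname{supp}(X))$ and $0\in V$, all reasoning takes place inside $V$ and ``interior'' will always mean relative interior in $V$.

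First I would establish the geometric linchpin: the mean of a probability measure lies in the relative interior of the convex hull of its support. With $b=\EE[X]$ one checks $b\in K:=\operatorname{Conv}(\operatorname{supp}(X))$ directly, since $\langle u,b\rangle=\EE\langle u,X\rangle\le\sup_{K}\langle u,\cdot\rangle$ for every $u$. If $b$ were on the relative boundary of $K$, there would be a proper supporting direction $u$ with $\langle u,X\rangle\le\langle u,b\rangle$ almost surely; because $\EE[\langle u,b\rangle-\langle u,X\rangle]=0$ with a nonnegative integrand, we get $\langle u,X\rangle=\langle u,b\rangle$ almost surely, forcing $\operatorname{supp}(X)$ into a hyperplane and contradicting that $u$ is proper. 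Hence $0=b\in\operatorname{relint}(K)$, so there is $\epsilon>0$ with $B(0,\epsilon)\cap V\subseteq K$.

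Next I would extract a finite witness. Taking a small full-dimensional simplex in $V$ that contains $0$ in its interior and sits inside $B(0,\epsilon)\cap V$, each of its vertices lies in $K=\operatorname{Conv}(\operatorname{supp}(X))$ and hence, by Carath\'eodory's theorem, is a finite convex combination of support points. Collecting all the support points used yields a finite set $P=\{p_1,\dots,p_m\}\subseteq\operatorname{supp}(X)$ with $0\in\operatorname{int}_V\operatorname{Conv}(P)$. The point of $P\subseteq\operatorname{supp}(X)$ is that $q_j:=\Prob(X\in B(p_j,\delta))>0$ for every $\delta>0$ and every $j$, by definition of the support. I would then record a perturbation-stability observation: the map $(y_1,\dots,y_m)\mapsto\min_{u\in S}\max_j\langle u,y_j\rangle$ is continuous on $V^m$ (with $S$ the unit sphere of $V$) and is strictly positive exactly when $0\in\operatorname{int}_V\operatorname{Conv}\{y_j\}$; since it is positive at $P$, there is $\delta>0$ such that moving each $p_j$ by at most $\delta$ keeps $0$ in the interior.

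To finish, the event that for each $j$ at least one of $X_1,\dots,X_N$ lies in $B(p_j,\delta)$ has probability at least $1-\sum_{j=1}^m(1-q_j)^{N}\to 1$ by a union bound, and on this event the chosen near-copies $p_1',\dots,p_m'$ satisfy $0\in\operatorname{int}_V\operatorname{Conv}\{p_j'\}\subseteq\operatorname{Conv}\{X_i\}_{i=1}^N$, giving $\Prob(0\in\operatorname{Conv}\{X_i\})\to 1$. I expect the main obstacle to be the interaction of the supporting-hyperplane lemma with the degenerate case in which $\operatorname{supp}(X)$ is lower-dimensional: one must work throughout with the relative interior inside $V=\operatorname{aff}(\operatorname{supp}(X))$ rather than the ambient interior, since otherwise the statement is simply false (e.g.\ when $X$ is concentrated on a proper subspace the origin is never in the ambient interior). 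Everything else --- Carath\'eodory, the union bound, and the openness of the condition ``the origin is interior'' --- is routine once the relative-interior statement and the correct ambient space are in place.
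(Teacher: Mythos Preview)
The paper does not prove this statement at all: Theorem~\ref{th:convergence_CH} is quoted from \cite{hayakawa2020monte} and used as a black box in the proof of Proposition~\ref{prop:empirical}. So there is no ``paper's own proof'' to compare against.

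Your argument is correct and entirely self-contained. The key step---that the mean lies in the relative interior of $\operatorname{Conv}(\operatorname{supp} X)$---is proved cleanly via the supporting-hyperplane trick (a nonnegative random variable with zero mean must vanish a.s.), and you are right to single out the passage to the affine hull $V$ as the place where care is needed. The rest (Carath\'eodory to get a finite witness $P\subseteq\operatorname{supp} X$, continuity of $(y_1,\dots,y_m)\mapsto\min_{u\in S}\max_j\langle u,y_j\rangle$ to get $\delta$-stability, and the union bound $1-\sum_j(1-q_j)^N\to1$) is routine and correctly assembled. One cosmetic point: in the first part of your step~3 you show $b\in\overline K$ rather than $b\in K$, but this is harmless since $\operatorname{relint}(K)=\operatorname{relint}(\overline K)$ and the supporting-hyperplane contradiction applies verbatim on $\overline K$; it might be worth one sentence making this explicit.
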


\paragraph{Sampling from empirical measures}

Often we do not know the distribution $\phi$ of the points or $\EE[F(X)]$, moreover it could be that for the realized samples $\{F_i\}_{i=1}^N $,
$\EE[F(X)]\not\in\operatorname{Conv}\{ F_i\}_{i=1}^N $. 
In these cases, due to Theorem \ref{th:cath}, and since Algorithm \ref{euclid} is based on Theorem \ref{th:main}, which assumes that the barycentre of the points given is $0$,
the input of the Algorithm is not the collection $\{F_i\}_{i=1}^N$, but 
\begin{align}
\hat{F_{i}}=&(f_{1}(X_{i}),\ldots,f_{n}(X_{i}))-\left(\sum_{j=1}^{N}f_{1}(X_{j})w_{j},\ldots,\sum_{j=1}^{N}f_{n}(X_{j})w_{j}\right),
\end{align}
in this way we are sure that the barycentre is $0$ and $0\in \operatorname{Conv}\{ \hat F_i\}_{i=1}^N$, and the hypothesis of both Theorem \ref{th:cath} and \ref{th:main} are satisfied.
Unfortunately the $\{\hat{F_{i}}\}_{i=1}^N$ are not independent, which leads to an impossible analysis, even though the correlation between the $\hat F_i$ decreases when $N$ becomes bigger and tends to $0$. 
Thus, we can believe that the analysis of the proof of Proposition~\ref{prop:empirical} is a good approximation of the complexity of the Algorithm \ref{euclid}, when the $\{\hat{F_{i}}\}_{i=1}^N$ are given as input and $N$ is big ``enough'', as it is shown in Figure \ref{fig:theoretical_result} and Figure \ref{fig:bVSo} in case of symmetric distribution. 

It is relevant to note at this point that we can always consider uniform measures, i.e. $\mu = \frac{1}{N}\sum^N_{i=1} \delta_{x_i}$, modifying the support of the measure, and then eventually go back to the original (not-uniform) measure.
\begin{lemma}\label{lem:trick_sphere} Let us consider a set $\bx=\{x_{i}\}_{i=1}^{N}$in $\mathbb{R}^{n}$ and a sequence $\{\kappa_{i}\}_{i=1}^{N}$ of strictly positive numbers. There exists a measure $\mu$ on $\bx$ such that $\mu(\bx)=0$ if and only if there exists a measure $\mu^{\star}$ on $\{\frac{x_{i}}{\kappa_{i}}\}_{i=1}^{N}$ such that $\mu^{*}(\{\frac{x_{i}}{\kappa_{i}}\}_{i=1}^{N})=0$.
\end{lemma}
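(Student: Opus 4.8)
The plan is to exhibit an explicit reweighting that converts any barycentric representation of $0$ by $\bx$ into one by $\{x_i/\kappa_i\}$ and vice versa. Throughout I read ``a measure $\mu$ on $\bx$ such that $\mu(\bx)=0$'' as a probability measure whose barycentre is the origin, i.e.\ nonnegative weights $w_i:=\mu(x_i)$ with $\sum_i w_i=1$ and $\sum_i w_i x_i=0$, matching the convention used for the recombination problem elsewhere in the paper. Since the statement is a bi-implication and the map $x_i\mapsto x_i/\kappa_i$ is inverted by the substitution $\kappa_i\mapsto 1/\kappa_i$, the two directions are mirror images, so it suffices to set up one direction carefully and observe that the other is obtained by the same formula.

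For the forward direction, suppose $\sum_i w_i x_i=0$ with $w_i\ge 0$ and $\sum_i w_i=1$. I would define $w_i^\star:=\kappa_i w_i/Z$, where $Z:=\sum_j \kappa_j w_j$. The first thing to verify is $Z>0$, which is exactly where the strict positivity of the $\kappa_i$ enters, together with the fact that the $w_i$ are not all zero. Then $w_i^\star\ge 0$, $\sum_i w_i^\star=1$, and
\[
\sum_i w_i^\star \frac{x_i}{\kappa_i} = \frac{1}{Z}\sum_i \kappa_i w_i \frac{x_i}{\kappa_i} = \frac{1}{Z}\sum_i w_i x_i = 0,
\]
so $\mu^\star(x_i/\kappa_i):=w_i^\star$ is a probability measure on $\{x_i/\kappa_i\}$ with barycentre $0$, as required. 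The converse is the mirror-image computation: given $w_i^\star\ge 0$, $\sum_i w_i^\star=1$ with $\sum_i w_i^\star x_i/\kappa_i=0$, set $w_i:=(w_i^\star/\kappa_i)/Z'$ with $Z':=\sum_j w_j^\star/\kappa_j>0$, and the identical manipulation yields $\sum_i w_i x_i=0$.

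I expect no genuine obstacle here; the only points needing care are \begin{enumerate*}[label=(\roman*)] \item that the normalising constants $Z,Z'$ are strictly positive, which is precisely what the hypothesis $\kappa_i>0$ guarantees, and \item that the normalised multiplication preserves nonnegativity and total mass one, so that the output is again a bona fide probability measure rather than a merely signed one.\end{enumerate*} Because the maps $w\mapsto w^\star$ and $w^\star\mapsto w$ are both given by normalised coordinatewise rescaling, they are mutually inverse, which makes the equivalence transparent and requires nothing beyond the elementary bookkeeping above.
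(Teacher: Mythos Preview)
Your proposal is correct and follows essentially the same route as the paper: both rescale the weights by $\kappa_i$ (the paper's one-line proof simply sets $\mu_i^\star=\mu_i\kappa_i$ and declares the converse analogous). Your explicit normalisation by $Z$ and the check $Z>0$ make the argument a bit more careful, but the underlying idea is identical.
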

\begin{proof} Let us assume that there exists $\mu$ on $\bx$ such that $\mu(\bx)=0$, and let us call $\mu_i:=\mu(x_i)$. It is enough to define $\mu^\star=\mu_i\kappa_i$. The other side of the equivalence is proved in the same way.
\end{proof}
\begin{remark}
Lemma \ref{lem:trick_sphere} is a consequence of the fact that a cone is defined only by the directions of the vectors of the ``basis'', and not from their length.
\end{remark}

Proposition~\ref{prop:empirical} shows us a ``universal strategy'' to explore the space of all the combination of points more efficiently, i.e. choosing the basis of the cone to maximize the probability placed on its inverse.
In other words, ideally we should try to maximize 
\begin{align}\label{eq:univer_strategy}
{
\max_{F_i\in\bx}\Prob\left(F(X) \in C^{-}(F_{1},\ldots,F_{n})\right).
}
\end{align}

\begin{figure}[hbt!]
    \centering
        \includegraphics[height=4cm, width=0.4\textwidth]{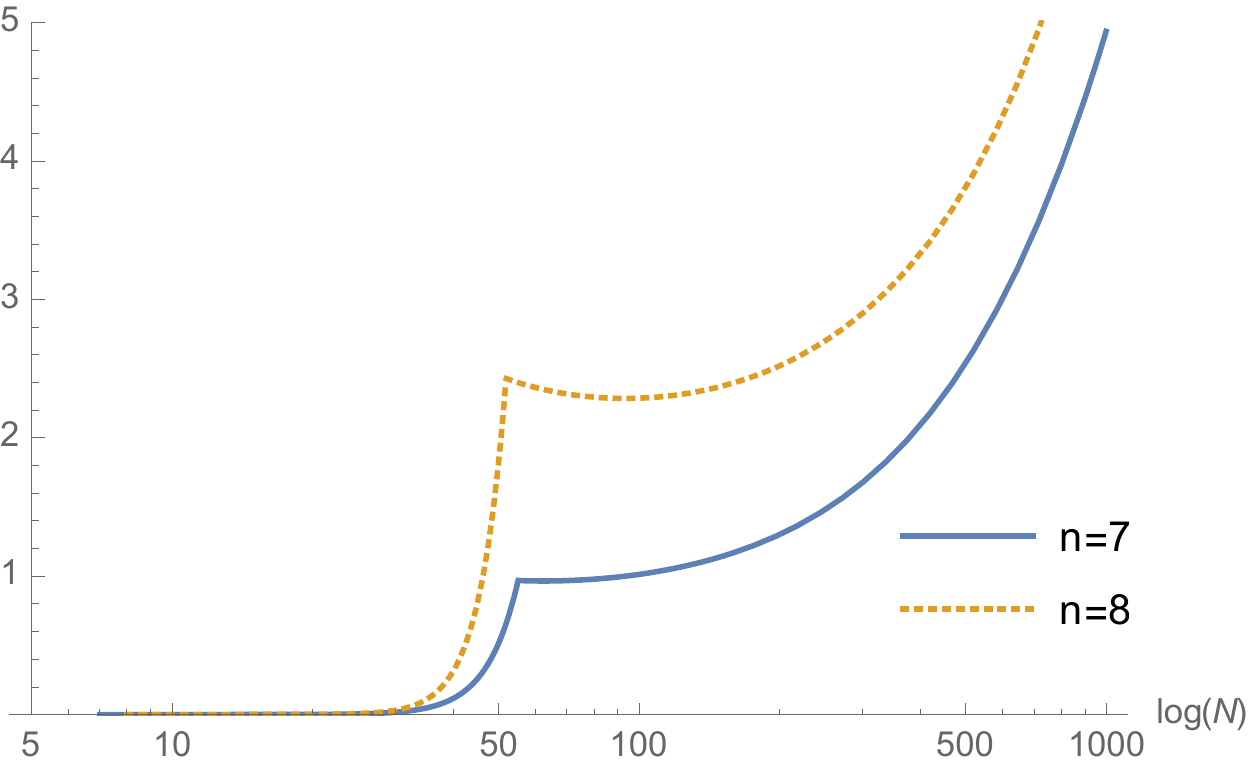}
        \caption{The plots shows the logplot of Equation~\eqref{eq:theoretical_complexity}. It can be seen that this is the same shape obtained with the experimental simulations in Section \ref{sec:exp}.
        }\label{fig:theoretical_result}
\end{figure}

Figure \ref{fig:theoretical_result} shows the complexity of Algorithm \ref{euclid} in case of symmetric distributions; it can be noticed that 
it has a local minima $N_n^*$. %

\section{Properties of Algorithm~\ref{euclid_opt}} \label{app:algo opt} 
\paragraph{Complexity of Algorithm~\ref{euclid_opt}}
\propcomplexityopt*
\begin{proof}
The most expensive steps are the same ones as in Algorithm~\ref{euclid} and in addition the maximization problem.
After the first step, given that we update one point of the basis at time, we can be more efficient using the Sherman–Morrison formula. Let us call $\bX^\star_t$ the matrix whose rows are the vectors of the basis at the step $t$ $\bx^\star_t$, therefore we have $A_t=((\bX^\star_t)^\top)^{-1}$, thus (shifting properly the vectors)
\begin{align}
A_{t+1}=&\left((\bX_{t}^{\star})^{\top}+(X^{\star}-X_{1}^\star)\cdot e_{1}^{\top}\right)^{-1}=A_{t}-\frac{A_{t}(X^{\star}-X_{1}^\star)e_{1}^{\top}A_{t}}{1+e_{1}^{\top}A_{t}(X^{\star}-X_{1}^\star)},
\end{align}
{in the case we want to substitute the ``first'' vector of the basis $X_1^\star$ with the vector $X^{\star}$.} 
Let us note that the only multiplications to be computed are $A_{t}(X^{\star}-X_{1}^\star)$ and $ \left[A_{t}(X^{\star}-X_{1}^\star)\right]\cdot\left[e_{1}^{\top}A_{t}\right]$, which are done in ${O}(n^2)$ operations.
To check if there are points inside the cone (or the inverse cone), we multiply the matrix $A$ times the matrix $\bX$ (of all the remaining vectors $\bX$), and again after the first step costs ${O}(Nn^2)$, we can use the Sherman–Morrison formula as before and obtain
\begin{align}
A_{t+1}\bX^{\top}=&A_{t}\bX^{\top}-\frac{A_{t}(X^{\star}-X_{1}^\star)e_{1}^{\top}A_{t}\bX^{\top}}{1+e_{1}^{\top}A_{t}(X^{\star}-X_{1}^\star)}.
\end{align}
Let us note that we have already computed $A_{t}\bX^{\top}$ at the previous step, $A_{t}(X^{\star}-X_{1}^\star)$ to compute $A_{t+1}$, therefore the only cost is to compute $\left[A_{t}(X^{\star}-X_{1}^\star)\right]\cdot\left[e_{1}^{\top}A_{t}\bX^{\top}\right]$, which is done in ${O}(nN)$ operations.\\
The previous computations show us that after the first step, updating one element at a time, improves the computational efficiency of the successive steps of a factor $n$. %
Let us now tackle the maximization problem, %
it requires to compute the norm, i.e.~${O}(Nn^2)$ operations, which could be done only once.
Moreover, the maximization problem requires to compute (part of) the sum of the vectors in $\bx^\star$ and then the scalar product, which require ${O}(Nn)$. 
The last expensive operation we should consider is due to solve the last system to find the weights. The system we want to solve is  
\begin{align}
\left(\begin{array}{cc}
(\bX^{\star}){}^{\top} & X^{\star}\\
\mathbf{1} & 1
\end{array}\right)&w=\left(\begin{array}{c}
\mathbf{0}\\
1
\end{array}\right),
\end{align}
where $\mathbf{0}$ is a $n\times 1$ vector of $0$, and $\mathbf{1}$ is a $1\times n$ vector of $1$. Using again the Sherman–Morrison formula, since we have already computed $A=(\bX^\star)^{-1}$ the weights $w_i$ can be computed as
\begin{align}
w=&\left(\begin{array}{cc}
(\bX^{\star}){}^{\top} & X^{\star}\\
\mathbf{1} & 1
\end{array}\right)^{-1}\left(\begin{array}{c}
\mathbf{0}\\
1
\end{array}\right)=\left(\begin{array}{cc}
A^{\top}+A^{\top}X^{\star}c^{-1}\mathbf{1}A^{\top}, & -A^{\top}X^{\star}c^{-1}\\
-c^{-1}\mathbf{1}A^{\top}, & c^{-1}
\end{array}\right)\left(\begin{array}{c}
\mathbf{0}\\
1
\end{array}\right)\\=&-\frac{1}{c}\left(\begin{array}{c}
A^{\top}X^{\star}\\
1
\end{array}\right),
\end{align}
where $c=1-\mathbf{1}A^{\top}X^{\star}$ is a number.
In this way we need ${O}(n^2)$ operations, not ${O}(n^3)$, i.e. the complexity of solving a linear system.\\ 
The total cost therefore is
$
O(n^{3}+n^{2}N)+(\kappa-1)O(n^{2}+nN).
$
\end{proof}
\begin{remark}
The gain in the computational cost we obtain using 
the Sherman–Morrison formula has a cost in term of numerical stability.
\end{remark}
\paragraph{Robustness of the solution.}  
\thmrobust*
\begin{proof}
From Theorem~\ref{th:main}, we know that $\hat\bX R+E_{\hat\bx}$ is a solution if and only if $\left(\left(\hat \bX_{-1}R+E_{\hat \bx_{-1}}\right)^{\top}\right)^{-1}\left(\hat X_{1}R+E_{\hat x_{1}}\right)^\top\leq0$.  Let us note that the last product is a vector, therefore we can study the transpose and 
using the Woodbury matrix identity we have that
\begin{align}
\left(\hat{X}_{1}R+E_{\hat{x}_{1}}\right)\left(\hat{\bX}_{-1}R+E_{\hat{\bx}_{-1}}\right)^{-1}=&\left(\hat{X}_{1}+E_{\hat{x}_{1}}R^{-1}\right)\left(\hat{\bX}_{-1}+E_{\hat{\bx}_{-1}}R^{-1}\right)^{-1}\\=&\left(\hat{X}_{1}+E_{\hat{x}_{1}}R^{-1}\right)\left(I-A_{1}^{T}E_{\hat{\bx}_{-1}}\left(I+R^{-1}A_{1}^{T}E_{\hat{\bx}_{-1}}\right)R^{-1}\right)A_{1}^{T}.
\end{align}
Setting the last equation less or equal than $0$ shows the result. 
\end{proof}
This also implies that the solution is invariant under rotations.
\section{Divide and conquer, choice of the subgroup size}\label{sec:choice_div&conq}
As mentioned in Section~\ref{sec:optimized algo}, to apply a divide and conquer strategy requires to balance the size of subgroups against the property of Algorithm~\ref{euclid_opt} to exploit a large number of points as to maximize the likelihood of points being in the (inverse) cone.    
Let us explain how we have chosen $N^*_n=50(n+1)$, which should be thought as linear approximation of the exact minimum for the complexity of Algorithm \ref{euclid_opt}.
\begin{figure}[hbt!]
\centering
        \includegraphics[height=3cm,width=0.3\textwidth]{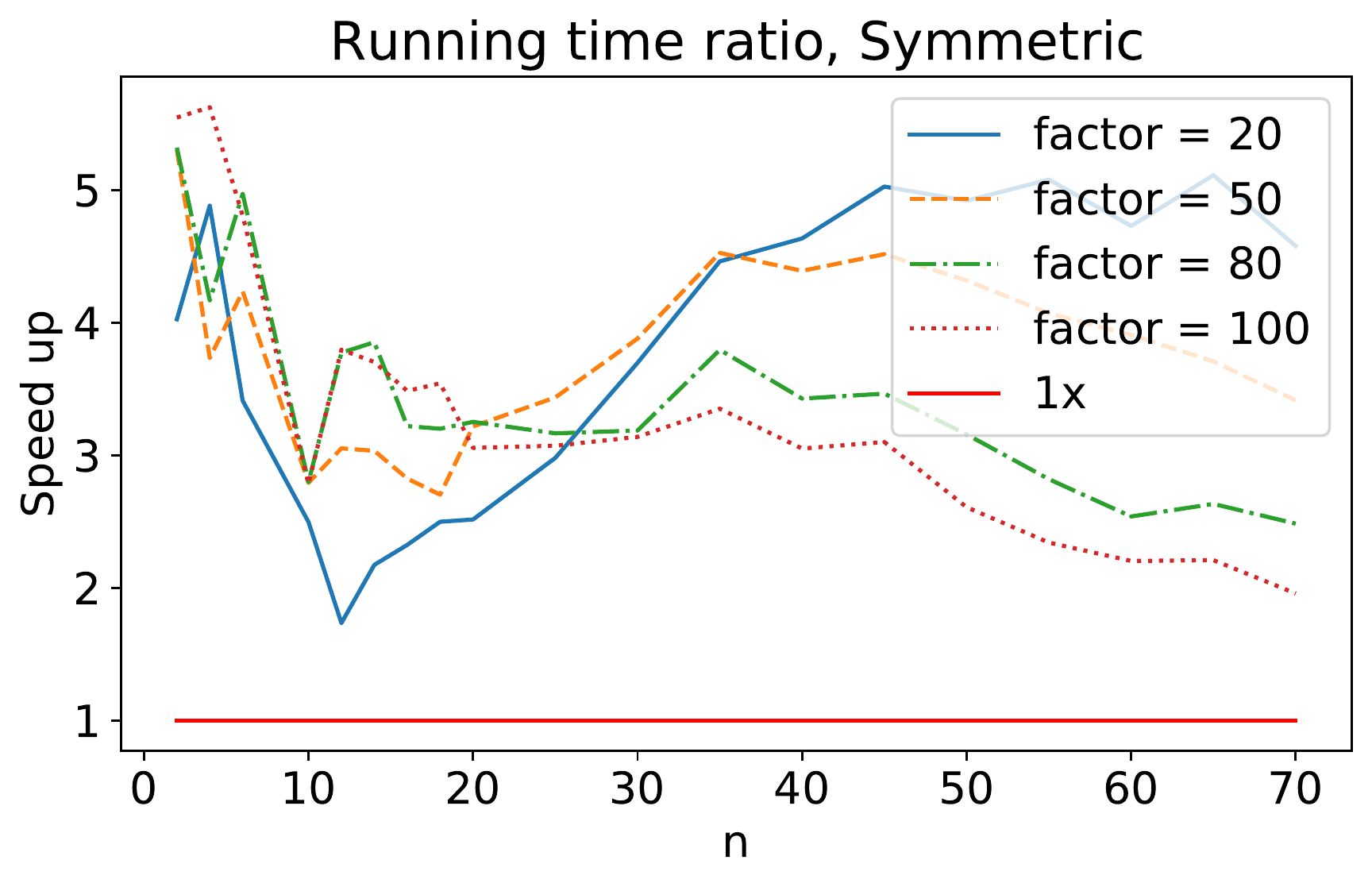}
        \includegraphics[height=3cm,width=0.3\textwidth]{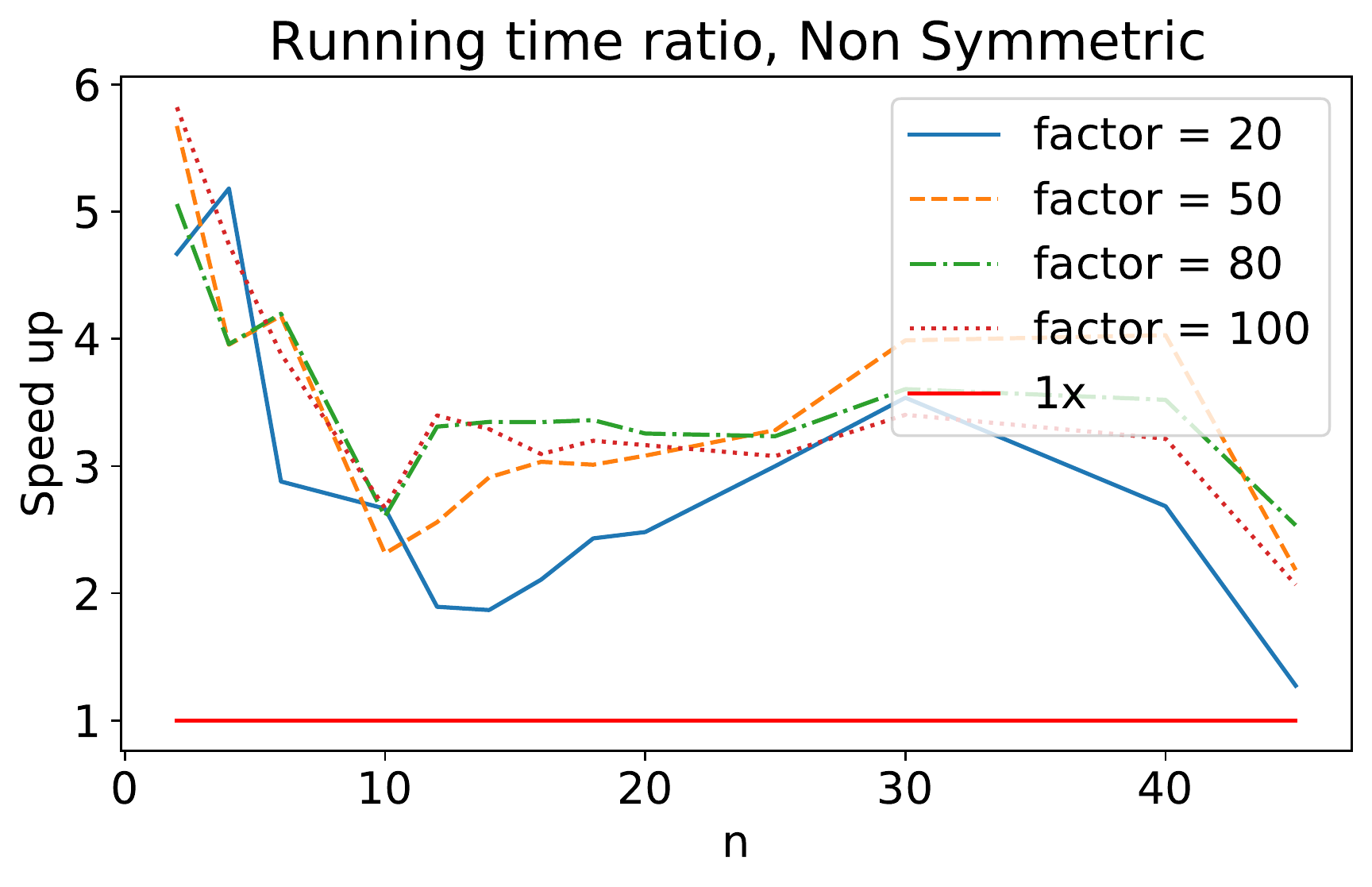}
        \caption{Bottom-right: it is shown how much the \textit{optimized-reset} algorithm is faster than \textit{det4} in case $N=(n+1)\times\text{factor}$. 
        }
    \label{fig:factor_choice}
\end{figure}
Therefore first note, that as Figure \ref{fig:factor_choice} shows, choosing any number between $20$ and $80$, in place of $50$, has similar effects if $n<70$ in the case of symmetric distribution, whilst the same holds in the case of mixture of exponentials (non symmetric) if $n\leq40$.
We think that this effect is due to the fact that ``experimentally'' there exists a long plateau in the running time of the optimized Algorithms with and without reset, see Figure \ref{fig:bVSo}.
Therefore, let us suppose that there exist $k,K$ s.t.~for any $k(n+1)\leq N^{(1)},N^{(2)}\leq K(n+1)$ and $n<40$, then $\bar C(N^{(1)},n+1)\approx \bar C(N^{(2)},n+1)$, where $\bar C(\cdot,n+1)$ is the computational cost to reduce $\cdot$ number of points in $\mathbb{R}^{n}$ using Algorithm~\ref{euclid_opt}. Moreover, we suppose that the 
$\argmin_x \bar C(x,n+1) \in [k(n+1),K(n+1)] $.
The previous two conditions are equivalent to the presence of the plateau in Figure \ref{fig:bVSo}, in correspondence with the minimum value of the running time, for the optimized Algorithms with and without reset.
Under these assumptions, the best choice would be $N^*_n=K(n+1)$.
Without knowing the value of $K$, however we can estimate the difference into
the complexity for different group subdivisions: if we have $N\gg K(n+1)$ number of points, using the ``divide and conquer'' paradigm with $N^{(i)}$ groups, we can build two algorithms s.t.~the difference of the computational costs is
\begin{align}\label{eq:difference_factor}
{O}\left(Nn+\log_{N^{(1)}/n}\right.&\left.(N/n)C(N^{(1)},n+1)\right)-{O}\left(Nn+\log_{N^{(2)}/n}(N/n)C(N^{(2)},n+1\right)\approx\\\approx&\,\,\bar{C}(Kn,n+1)\log_{N^{(2)}/n}(N/n)\left(\frac{1}{\log_{N^{(2)}/n}N^{(1)}/n}-1\right).
\end{align}
Therefore, we have that the difference depends on a factor $|1/\log_{N^{(2)}/n}(N^{(1)}/n)-1|$. 
Given Figure \ref{fig:factor_choice}, we have estimated approximately $k=20$, $K=80$ for the symmetric case, thus as a rule of thumb we assume that $N^*_n=50(n+1)$ is a reasonable value, and in view of Equation \eqref{eq:difference_factor} we can say that changing slightly $50$ the running time would remain stable.
The analogous argument can be made for the mixture of exponentials.

\section{A hybrid algorithm.}\label{sec:combine}
As mentioned in the introduction, the strategy of our randomized Algorithm~\ref{euclid_opt} is very different to the deterministic ones, and one can combine both to form a new algorithm.
The randomized Algorithm~\ref{euclid_opt} runs into trouble when the independence assumption for the cone basis in Theorem~\ref{th:main} is not met which can happen in datasets with highly correlated features; on the other hand, the deterministic algorithms have the disadvantage that they need to complete a full run over the whole dataset even when geometric greedy sampling could have finished much earlier. 
We give the details for this hybrid Algorithm \ref{algo:combined} below; it has a worst case running time of the same order as the deterministic Algorithms \cite{ Litterer2012, maria2016a,Maalouf2019} but in return has a very good chance of terminating faster.
We demonstrate this by benchmarking it against the same datasets for fast least square solvers that were used in \cite{Maalouf2019}.

\begin{algorithm}\caption{Combined measure reduction algorithm}\label{algo:combined}
\begin{algorithmic}[1]
    \Procedure{Reduce-Combined}{A set $\bx$ of $N$ points in $\R^n$,  
    $\mu=\{w_i\}$}
    \State{rem\_points $\gets N$}
    \While{rem\_points$>n+1$}
    \State{Subdivide the points $\bx$ in $G\wedge$ rem\_points groups $\{\bx_j\}_{j=1}^{G\wedge\text{rem\_points}}$}%
    \State{Compute $\bar \bw_j=\sum_{w_i\,:\,x_i\in \bx_j}w_i$, 
    $\bar \bx_j = \sum_{x_i\in g_j} w_i x_i/\bar \bw_j$
    } 
    \For{\textit{\#\_trials} times}
    \State{$\textbf{b} \gets $ $n$ random vectors from $\{\bar\bx_j-\sum_j \bar \bw_j\bx_j\}_{j=1}^G$}
    \State{$\bx^\star, w^\star\gets$ Algorithm \ref{euclid_opt} with the points $\{\bar\bx_j-\sum_j \bar \bw_j\bx_j\}$ using $\textbf{b}$ as cone basis}\label{step:random_trial}
    \If{Algorithm \ref{euclid_opt} has found a solution}
     \State{Exit for}
     \EndIf
     \EndFor
     \If{Algorithm \ref{euclid_opt} has \textbf{not} found a solution}
     \State{$\bx^\star, w^\star\gets$ Deterministic Algorithm (e.g. \cite{maria2016a,Litterer2012,Maalouf2019}) with measure $\{\bar\bx_j\}$, $\{\bar\bw_j\}$}
 	\EndIf
 	\State{$\bx\gets$ $\bx\setminus\{ x_i$ s.t. $ x_i\in \bx_j$ and $ \bar\bx_j\in \bx^\star\}$}\Comment{Eliminate the points}
 	\State{rem\_points $\gets N-$Cardinality$(\{ x_i$ s.t. $ x_i\in \bx_j$ and $ \bar\bx_j\in \bx^\star\})$}
 	\State{$\{w_i\}\gets$ $\{w_i\}\setminus\{ w_i$ s.t. $ x_i\in \bx_j$ and $ \bar\bx_j\in \bx^\star\}$}
 	\State{$\{w_i\}\gets$ $\{w_i\times w^\star_j$ s.t. $x_i\in\bar\bx_j\}$}
 	\Comment{Recalibrate the weights}
     \EndWhile
     \State{\textbf{return}  $(\bx^\star,w^\star)$ }
    \EndProcedure
  \end{algorithmic}
\end{algorithm}

\begin{figure}[hbt!]
\centering
		\includegraphics[height=2.6cm,width=0.32\textwidth]{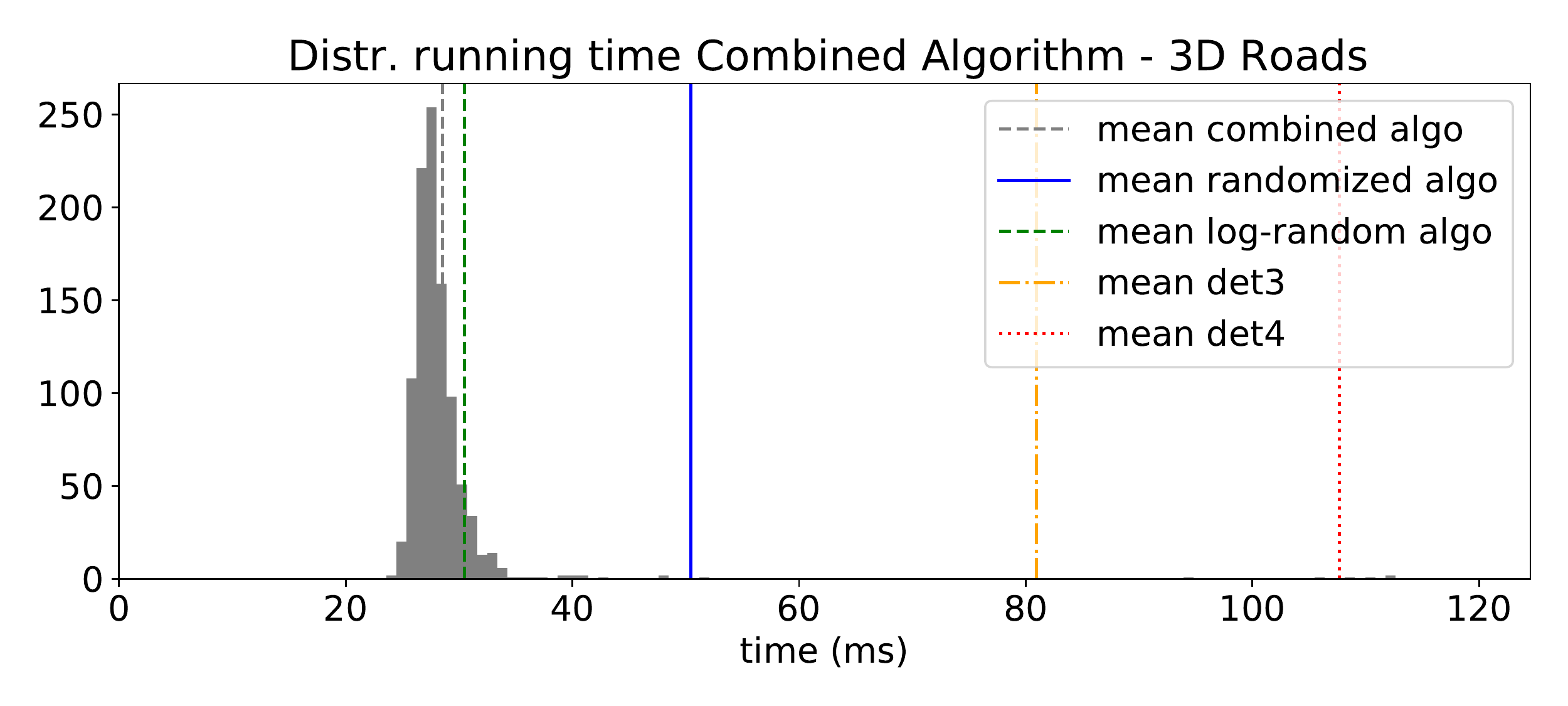}
		\includegraphics[height=2.6cm,width=0.32\textwidth]{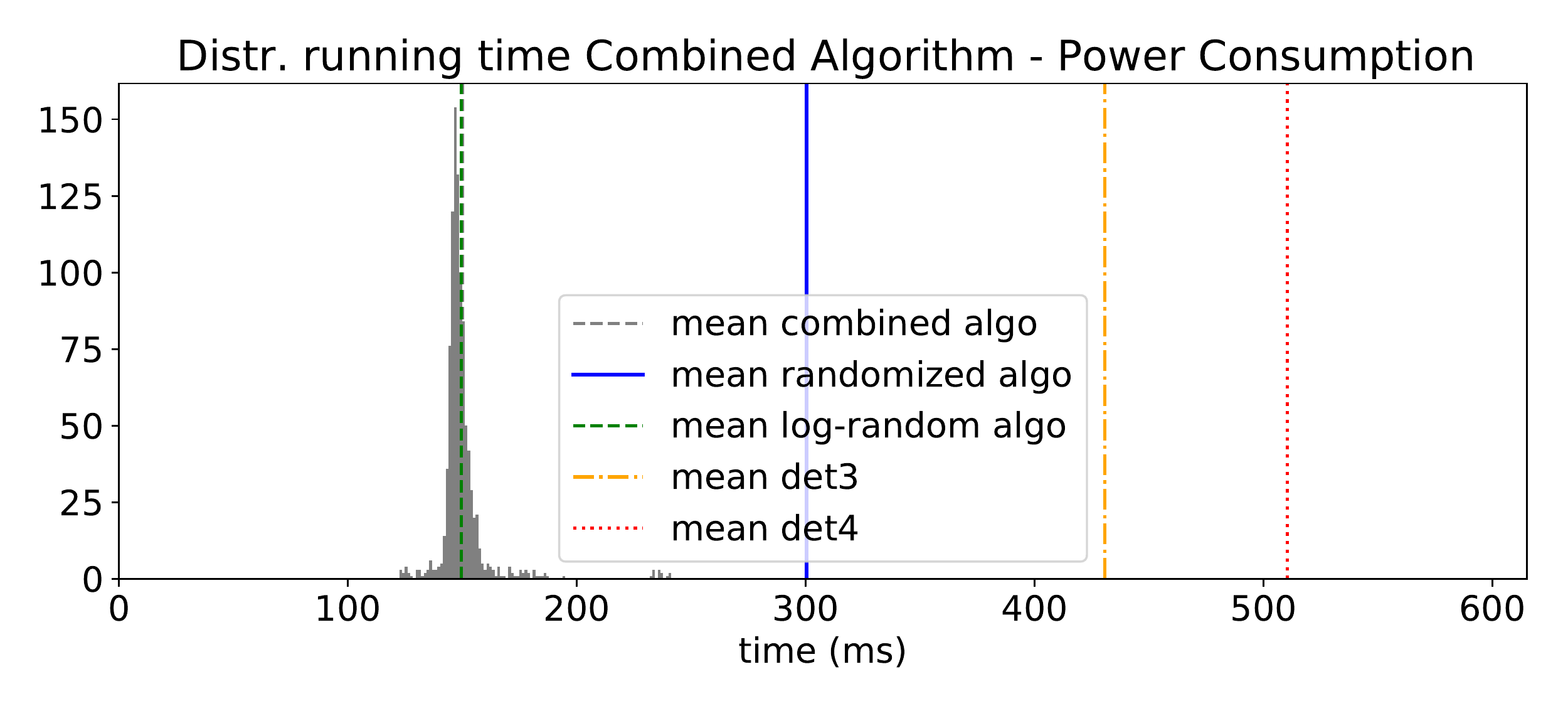}
        \includegraphics[height=2.58cm,width=0.32\textwidth]{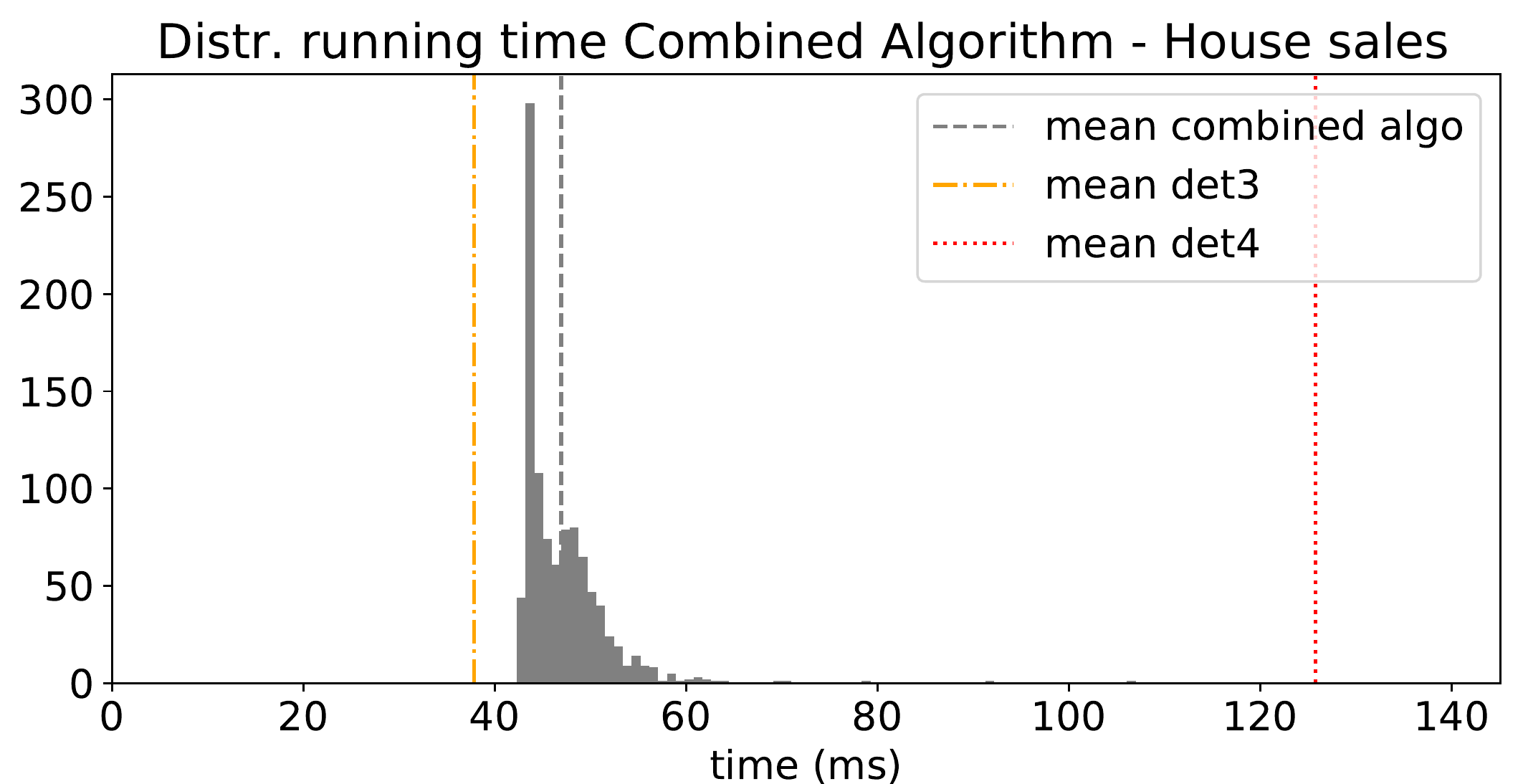}
        \caption{The running time of the Combined Algorithm \ref{algo:combined}. Note that the mean of the log-random Algorithm is essentially equal to the one of the Combined Algorithm, except for the dataset \cite{housesalesdata} where the random-algo does not work (cf. discussion above).
          For \cite{housesalesdata} the features are highly correlated, so Algorithm \ref{euclid_opt} fails, but the additional time necessary to Algorithm \ref{algo:combined} is only due to checking if the sampled basis are invertible; hence, in this case Algorithm \ref{algo:combined} behaves as it was deterministic.
          We have chosen \textit{\#\_trials}$=10$ and $G=50(n+1)$. 
        } \label{fig:combine}
\end{figure}

Some observations:
\begin{enumerate*}[label=(\roman*)]
\item Algorithm~\ref{algo:combined} always finds a solution;
\item in the case of dataset ``with linear dependence'', e.g. \cite{housesalesdata}, if PCA reductions are not allowed as in the case of the application of \cite{Maalouf2019}, the basis $\textbf{b}$ won't be invertible and therefore the complexity of the Algorithms in \cite{maria2016a,Litterer2012,Maalouf2019} would worsen of ``only'' the complexity to check \textit{\#\_trials} times if a $n\times n$ matrix is invertible, %
i.e. \textit{\#\_trials}$\times O(n^3)$, see Figure~\ref{fig:combine};
\item Algorithm~\ref{euclid_opt} must be run without reset, however note that we can add a reset strategy changing the number of iterations allowed to Algorithm~\ref{euclid_opt} at step~\ref{step:random_trial}; 
\item following the guidelines of Section~\ref{sec:choice_div&conq}, $G=50(n+1)$ for ``small $n$''.
\end{enumerate*}

To sum up, as Figure~\ref{fig:combine}\footnote{For practical reasons in our implementation of Algorithm~\ref{algo:combined} we have used our implementation of \textit{det3}, see Section~\ref{sec:implemenation}.}, the hybrid Algorithm~\ref{algo:combined} takes advantage of both the greedy geometric sampling of Algorithm \ref{euclid_opt} and robustness of the deterministic Algorithms for highly correlated datasets.
Totalled over all the datasets Algorithm~\ref{algo:combined} is the fastest.
Many variations of the above hybrid algorithm are of course possible.

\section{Implementation and benchmarking.}\label{sec:implemenation}
For \textit{det4} we have used the code provided by the authors of \cite{Maalouf2019} available at the repository\footnote{\url{https://github.com/ibramjub/Fast-and-Accurate-Least-Mean-Squares-Solvers}} in Python; for \textit{det3} the code of \cite{maria2016a} has not been written in Python, therefore we have implemented it to allow for a fair comparison using standard Numpy libraries.
We used throughout the same codeblocks for the Divide and Conquer strategy in all the implementations. %
We did not use the tree data-structure in \cite{maria2016a} since it does not not change complexity bounds and is independent of the reduction procedure itself; however, it could be also used for \textit{det4} and our randomized algorithms.  
Code for all experiments is available in public repository\footnote{\url{https://github.com/FraCose/Recombination_Random_Algos}}.

\end{document}